\theoremstyle{plain}
\newtheorem{theorem}{Theorem}
\newtheorem{proposition}{Proposition}
\newtheorem{lemma}{Lemma}
\theoremstyle{definition}
\newtheorem{definition}{Definition}
\newtheorem{assumption}{Assumption}
\newtheorem{remark}{Remark}
\renewcommand{\CheckmarkBold}{{\color{ForestGreen}\Checkmark}}
\renewcommand{\XSolidBold}{{\color{BrickRed}\XSolid}}
\newcommand{\BX}{\mathbb{X}}
\newcommand{\KLs}{\mathsf{KL}}
\newcommand{\err}{\mathsf{err}}
\newcommand{\BE}{\mathbb{E}}
\newcommand{\BR}{\mathbb{R}}
\newcommand{\BN}{\mathbb{N}}
\newcommand{\base}{\mathrm{base}}
\newcommand{\opt}{\mathrm{opt}}
\newcommand{\Mean}{\mathrm{mean}}
\newcommand{\BI}{\mathbb{I}}
\newcommand{\Argmin}[1]{\underset{#1}{\mathrm{argmin}}}
\newcommand{\bl}{\pmb{\ell}}
\newcommand{\bbm}{\boldsymbol{m}}
\newcommand{\bcalG}{\boldsymbol{\calG}}
\newcommand{\FST}{FST}
\DeclareMathOperator*{\argmin}{\mathrm{argmin}}
\newcommand{\methodname}{\textup{\texttt{FairProjection}}}
\renewcommand{\tilde}{\widetilde}
\newcommand{\Reals}{\mathbb{R}}
\newcommand{\defined}{\triangleq}
\newcommand{\RNum}[1]{\uppercase\expandafter{\romannumeral #1\relax}}
\newcommand{\co}{\mathrm{conj}}
\newcommand{\simp}[1]{\mathbf{\Delta}_{#1}}
\newcommand{\empp}{\widehat{P}_X}
\newcommand{\calA}{\mathcal{A}}
\newcommand{\calB}{\mathcal{B}}
\newcommand{\calC}{\mathcal{C}}
\newcommand{\calG}{\mathcal{G}}
\newcommand{\calH}{\mathcal{H}}
\newcommand{\calR}{\mathcal{R}}
\newcommand{\calS}{\mathcal{S}}
\newcommand{\calX}{\mathcal{X}}
\newcommand{\calY}{\mathcal{Y}}
\newcommand{\bh}{\boldsymbol{h}}
\newcommand{\bG}{\boldsymbol{G}}
\newcommand{\bx}{\boldsymbol{x}}
\newcommand{\bz}{\boldsymbol{z}}
\newcommand{\by}{\boldsymbol{y}}
\newcommand{\bw}{\boldsymbol{w}}
\newcommand{\bu}{\boldsymbol{u}}
\newcommand{\bv}{\boldsymbol{v}}
\newcommand{\bs}{\boldsymbol{s}}
\newcommand{\ba}{\boldsymbol{a}}
\newcommand{\bp}{\boldsymbol{p}}
\newcommand{\bq}{\boldsymbol{q}}
\newcommand{\bb}{\boldsymbol{b}}
\newcommand{\bQ}{\boldsymbol{Q}}
\newcommand{\bU}{\boldsymbol{U}}
\newcommand{\bV}{\boldsymbol{V}}
\newcommand{\bA}{\boldsymbol{A}}
\newcommand{\bB}{\boldsymbol{B}}
\newcommand{\bW}{\boldsymbol{W}}
\newcommand{\bDelta}{\boldsymbol{\Delta}}
\newcommand{\be}{\boldsymbol{e}}
\newcommand{\bmu}{\pmb{\mu}}
\newcommand{\blambda}{\pmb{\lambda}}
\newcommand{\bI}{\boldsymbol{I}}
\newcommand{\bM}{\boldsymbol{M}}
\title{Beyond Adult and COMPAS: Fairness in Multi-Class Prediction}
\author{
    Wael~Alghamdi,$^{1*}$~Hsiang~Hsu,$^{1}$\thanks{Equal contribution.  Correspondence to: Wael~Alghamdi (\texttt{alghamdi@g.harvard.edu}) and Flavio P.  Calmon (\texttt{flavio@seas.harvard.edu}). }~~Haewon~Jeong,$^{1*}$~Hao Wang,$^1$\\
    ~P.~Winston~Michalak,$^1$~Shahab~Asoodeh,$^2$ and~Flavio~P.~Calmon$^1$
}
\date{
$^1$John A. Paulson School of Engineering and Applied Sciences, Harvard University\\
$^2$Department of Computing and Software, McMaster University\\
}
\begin{document}

\maketitle

\begin{abstract}\noindent
We consider the problem of producing fair probabilistic classifiers for multi-class classification tasks. We formulate this problem in terms of ``projecting'' a pre-trained (and potentially unfair) classifier onto the set of models that satisfy target group-fairness requirements.  The new, projected model is given by post-processing the outputs of the pre-trained classifier by a multiplicative factor. We provide a parallelizable iterative algorithm for computing the projected classifier and derive both sample complexity and convergence guarantees. 
Comprehensive numerical comparisons with state-of-the-art benchmarks demonstrate that our approach maintains competitive performance in terms of accuracy-fairness trade-off curves, while achieving  favorable runtime on large datasets.  
We also  evaluate our method at scale on an open dataset with multiple classes, multiple intersectional protected groups, and over 1M samples.

\end{abstract}

\section{Introduction}
\label{sec:intro}

Machine learning (ML) algorithms are increasingly used to automate decisions that have significant social  consequences. This trend has led to a surge of research on designing and evaluating fairness interventions that prevent discrimination in ML models. When dealing with \emph{group fairness}, fairness interventions aim to ensure that a ML model does not discriminate against different groups determined, for example, by race, sex, and/or nationality. Extensive comparisons between  discrimination control methods can be found in~\cite{bellamy2018ai,friedler2019comparative,wei2021optimized}. As these studies demonstrate, there is still no ``best'' fairness intervention  for ML, with  the majority of existing approaches being tailored to either binary classification tasks, binary population groups, or both.   Moreover, discrimination control methods are often tested on overused datasets of modest size collected in either the US or Europe (e.g., UCI Adult~\citep{Lichman:2013} and COMPAS~\citep{angwin2016machine}).

Most fairness interventions\footnote{See Related Work and Table \ref{tab:benchmarks} for notable exceptions.} in ML focus on binary outcomes. In this case, the classification output is either positive or negative, and group-fairness metrics are tailored to  binary decisions~\cite{hardt2016equality}. 
While binary classification covers a  range of ML tasks of societal importance (e.g., whether to approve a loan, whether to admit a student), there are many cases where the predicted variable is not binary. For example, 
in education, grading algorithms  assign one out of several grades to students. In healthcare, predicted outcomes are frequently not binary (e.g., severity of disease). Even the original COMPAS algorithm---a timeworn case-study in fair ML---assigned  a score between 1 to 10 to each pre-trial defendant \citep{angwin2016machine}.

We introduce a theoretically-grounded discrimination control method that ensures group fairness in multi-class classification  for several, potentially overlapping population groups. We consider group fairness metrics that are natural multi-class extensions of their  binary classification counterparts, such as statistical parity~\citep{feldman2015certifying}, equalized odds~\citep{hardt2016equality},  and error rate imbalance~\citep{pleiss2017fairness,chouldechova2017fair}.  
When restricted to two predicted classes, our method performs competitively against  state-of-the-art fairness interventions tailored to binary classification tasks. Our fairness intervention is  model-agnostic (i.e., applicable to any model class) and scalable to datasets that are orders of magnitude larger than standard benchmarks found in the fair ML literature.

Our approach is based on an information-theoretic formulation called \emph{information projection}. We show that this formulation is particularly well-suited for ensuring fairness in probabilistic classifiers with multi-class outputs. Given a probability distribution $P$ and a convex set of distributions $\mathcal{P}$, the goal of information projection is to find the ``closest'' distribution to $P$ in $\mathcal{P}$. The study of information projection can be traced back to \cite{csiszar1975divergence}, which used  KL-divergence to measure ``distance'' between distributions. Since then, information projection has been extended to other divergence measures, such as $f$-divergences \cite{csiszar1995generalized} and R{\'e}nyi divergences \cite{kumar2016projection,kumar2015minimization}. 
Recently, \cite{ISIT} studied how to project a probabilistic classifier, viewed as a conditional distribution, onto the set of classifiers that satisfy target group-fairness requirements.  
Remarkably, the  projected classifier is obtained by multiplying (i.e., post-processing) the predictions of the original classifier by a factor that depends on the group-fairness constraints.

Prior work on information projection relies on a critical---and limiting---information-theoretic assumption:  the underlying probability distributions are \emph{known exactly}. This is infeasible in practical ML applications, where only a set of training samples from the underlying data distribution is available.  
We fill this gap by introducing an efficient procedure for computing the projected classifier with finite samples called \methodname. We establish theoretical guarantees for our algorithm in terms of convergence and sample complexity. Notably, our procedure is parallelizable (e.g., on a GPU). As a result, \methodname~scales to datasets with the number of samples comparable to the population of many US states ($>10^6$ samples). We provide a TensorFlow \cite{tensorflow2015} implementation of our algorithm at~\cite{FairProjection}. We apply \methodname~to post-process the outputs of probabilistic classifiers in order to ensure group fairness. 

We benchmark our post-processing approach against several state-of-the-art  fairness interventions selected based on the availability of reproducible code, and qualitatively compare it against many others. Our numerical results are among the most comprehensive comparison of post-processing fairness interventions to date. 
We present performance results on the HSLS (High School Longitudinal Study, used in~\cite{jeong2022aaai}), Adult ~\cite{Lichman:2013}, and COMPAS~\citep{angwin2016machine} datasets.

We  also evaluate \methodname~on a dataset derived from open and anonymized data from Brazil's national high school exam---the \emph{Exame Nacional do Ensino M\'edio} (ENEM)---with over 1M samples. We made use of this dataset due to the need for large-scale benchmarks for evaluating  fairness interventions in multi-class classification tasks. We also answer recent calls \cite{bao2021its,ding2021} for moving away from overused datasets such as Adult~\cite{Lichman:2013} and COMPAS~\citep{angwin2016machine}. We hope that the ENEM dataset encourages researchers in the field of fair ML to test their methods within broader contexts.\footnote{Since (to the best of our knowledge) the ENEM dataset has not been used in fair ML, we provide in Appendix \ref{appendix:datasheet} a datasheet for the ENEM dataset. The data can be found at \cite{enem2017} and code for pre-processing the data can be found at \href{https://github.com/HsiangHsu/Fair-Projection}{https://github.com/HsiangHsu/Fair-Projection}.}

In summary, our main contributions are: (i)~We introduce a  post-processing fairness intervention for multi-class (i.e., non-binary) classification problems that can account for multiple protected groups and is scalable to large datasets; (ii)~We derive finite-sample guarantees and convergence-rate results for our post-processing method. Importantly,  \methodname~makes information projection practical without requiring exact knowledge of probability distributions; (iii)~We demonstrate the favourable performance of our approach through comprehensive benchmarks against state-of-the-art fairness interventions; (iv)~We put forth a new large-scale  dataset (ENEM)  for benchmarking discrimination control methods in multi-class classification tasks; this dataset may encourage researchers in fair ML to evaluate their methods beyond Adult and COMPAS.

\begin{table}[!t]
\rowcolors{3}{MidnightBlue!10}{}
\centering
\resizebox{0.9\textwidth}{!}{\begin{tabular}{cccccccc}
\toprule
\multirow{2}{*}{\textbf{Method}} &  \multicolumn{6}{c}{\textbf{Feature}} \\
& Multiclass & Multigroup & Scores  &  Curve & Parallel  & Rate & Metric \\
\toprule
Reductions~\cite{agarwal2018reductions}        & \XSolidBold  & \CheckmarkBold  & \CheckmarkBold     & \CheckmarkBold  & \XSolidBold & \CheckmarkBold & SP, (M)EO   \\

Reject-option~\cite{kamiran2012reject}     & \XSolidBold  & \CheckmarkBold & \XSolidBold & \CheckmarkBold & \XSolidBold & \XSolidBold  & SP, (M)EO \\
EqOdds~\cite{hardt2016equality}             & \XSolidBold  & \CheckmarkBold  & \XSolidBold & \XSolidBold & \XSolidBold & \CheckmarkBold & EO  \\ 
LevEqOpp~\cite{chzhen2019leveraging}      & \XSolidBold  & \XSolidBold  & \XSolidBold    & \XSolidBold & \XSolidBold & \XSolidBold & FNR \\ 
CalEqOdds~\cite{pleiss2017fairness}      & \XSolidBold  & \XSolidBold  & \CheckmarkBold & \XSolidBold & \XSolidBold  & \CheckmarkBold   &  MEO  \\ 
FACT~\cite{kim2020fact}               &  \XSolidBold  &  \XSolidBold  & \XSolidBold   & \CheckmarkBold  &  \XSolidBold & \XSolidBold   &   SP, (M)EO \vspace{-0.5mm} \\ 
 Identifying\tablefootnote{\cite{jiang2020identifying} mention that their method can be applied to multi-class classification, but their reported benchmarks are only for binary classification tasks.}~\cite{jiang2020identifying}       &  $\text{\CheckmarkBold}^{\text{\XSolidBold}}$ & \CheckmarkBold & \CheckmarkBold & \CheckmarkBold & \XSolidBold  & \XSolidBold & SP, (M)EO \vspace{0.8mm} \\

\FST~\cite{wei2019optimized,wei2021optimized}     & \XSolidBold   & \CheckmarkBold & \CheckmarkBold &\CheckmarkBold & \XSolidBold  &   \CheckmarkBold  & SP, (M)EO  \\ 

Overlapping~\cite{yang2020fairness}  & \CheckmarkBold  & \CheckmarkBold  & \CheckmarkBold & \CheckmarkBold &  \XSolidBold & \XSolidBold &  SP, (M)EO \\ 

Adversarial~\cite{zhang2018mitigating}  & \CheckmarkBold & \CheckmarkBold & N/A\tablefootnote{\cite{zhang2018mitigating} is an in-processing method unlike other benchmarks in the table. It does not take a pre-trained classifier as an input.} & \CheckmarkBold & \CheckmarkBold & \XSolidBold & SP, (M)EO\\ 
\midrule
\methodname~(ours)  & \CheckmarkBold  & \CheckmarkBold  & \CheckmarkBold  & \CheckmarkBold & \CheckmarkBold  & \CheckmarkBold & SP, (M)EO \\ 
\bottomrule
\end{tabular}}
\vspace{2mm}
\caption{Comparison between benchmark methods. \textbf{Multiclass/multigroup}: implementation takes datasets with multiclass/multigroup labels; \textbf{Scores}: processes raw outputs of probabilistic classifiers; \textbf{Curve}: outputs fairness-accuracy tradeoff curves (instead of a single point); \textbf{Parallel}: parallel implementation (e.g., on GPU) is available; \textbf{Rate}: convergence rate or sample complexity guarantee is proved.
 \textbf{Metric}: applicable fairness metric, with SP$\leftrightarrow$Statistical Parity, EO$\leftrightarrow$Equalized Odds, MEO$\leftrightarrow$Mean EO. 
}
\label{tab:benchmarks}
\end{table}

\paragraph{Related work.} 
We summarize key differentiating factors from prior work in Table \ref{tab:benchmarks} and provide a more in-depth discussion in Appendix~\ref{appendix:literature}. The fairness interventions that are the most similar to ours are the FairScoreTransformer~\citep[\FST]{wei2019optimized,wei2021optimized} and the pre-processing method in~\cite{jiang2020identifying}.  
The \FST~and~\cite{jiang2020identifying}  
can be viewed as an instantiation of \methodname~restricted to binary classification and cross-entropy (for \FST) or KL-divergence (for \citep{jiang2020identifying}) as the $f$-divergence of choice.  Thus, our approach is a generalization of both methods  
to multiple $f$-divergences.  We also note that, unlike our method, \citep{jiang2020identifying} requires retraining a classifier multiple times.    

\cite{agarwal2018reductions}  
introduced a reductions approach for fair classification. When restricted to binary classification, the benchmarks in Section~\ref{sec:applications}  indicate that the reductions approach consistently achieves the most competitive fairness-accuracy trade-off compared to ours. 
The approach described here has two key differences from~\cite{agarwal2018reductions}: 
it is not restricted to  binary classification tasks and does not require refitting a classifier several times over the training dataset. These are also key differentiating points from~\cite{celis2019classification}, 
which presented a meta-algorithm for fair classification that accounts for multiple constraints and groups. 
The reductions approach was later significantly generalized in the  GroupFair method  by~\cite{yang2020fairness} 
to account for non-overlapping groups and multiple predicted classes. Unlike~\cite{yang2020fairness}, we do not require retraining classifiers. 

Several other recent efforts consider optimizing accuracy under group-fairness constraints. \cite{cotter2019optimization} 
proposed a ``proxy-Lagrangian'' formulation for incorporating non-differentiable rate constraints which may include group fairness constraints. We avoid non-differentiability issues by considering the probabilities (scores) at the output of the classifier instead of thresholded decisions. \cite{zafar2017fairness} 
introduced a fairness-constrained optimization  applicable to margin-based classifiers (our approach can be used on any probabilistic classifier). \cite{menon2018cost} and~\cite{corbett2017algorithmic} 
characterized fairness-accuracy trade-offs in binary classification tasks when the underlying distributions are known.

\paragraph{Notation.} Boldface Latin letters will always refer to vectors or matrices. The entries of a vector $\bz$ are denoted by $z_j$, and those of a matrix $\bG$ by $G_{i,j}$. The all-1 and all-0 vectors are denoted by $\mathbf{1}$ and $\mathbf{0}$. We set $[N]:=\{1,\cdots,N\}$ and $\BR_+\defined [0,\infty)$. The probability simplex over $[N]$ is denoted by $\simp{N}\defined \{\bp \in \BR_+^N ~ ; ~ \mathbf{1}^T\bp=1\}$, and $\simp{N}^+$ its (relative) interior. 
If $P$ is a Borel probability measure over $\BR^N$, $Z\sim P$ is a random variable, and $f:\BR^N \to \BR^K$ is Borel, then the expectation of $f(Z)$ is denoted by $ \BE[f(Z)] = \BE_P[f] = \BE_{Z\sim P}[f(Z)]$. We use the standard asymptotic notations $O,\Theta,$ and $\Omega$.

\section{Problem formulation and preliminaries} \label{sec:MP}

\paragraph{Classification tasks.} The essential objects in  classification are the input sample space $\calX$, the predicted classes $\calY$, and the classifiers. We fix two random variables $X$ and $Y$, taking values in sets $\calX $ and $\calY\defined [C]$. Here, $(X,Y)$ is a pair comprised of an input sample and corresponding  class label randomly drawn from $\calX\times\calY$ with distribution $P_{X,Y}$. A probabilistic classifier 
is a function $\bh:\calX \to \simp{C}$, where  $h_c(x)$ represents the probability of sample $x\in \calX$ falling in class $c\in \calY$. Thus, $\bh$ gives rise to a $\calY$-valued random variable $\widehat{Y}$ via the distribution $P_{\widehat{Y}|X=x}(c) \defined h_c(x)$. 

\paragraph{Group-fairness constraints.} 
Let $S$ be a group attribute (e.g., race and/or sex), taking values in $\calS \defined [A]$. We consider multi-class generalization of three commonly used group fairness criteria in Table~\ref{table:Fair}. 
As observed by existing works \citep[see, e.g.,][]{agarwal2018reductions,menon2018cost,celis2019classification,wei2019optimized,ISIT}, each of these fairness constraints\footnote{We remark that our framework can be applied to other fairness constraints, e.g., the ones in~\cite{wei2019optimized}.} 
can be written in the vector-inequality form $\BE_{P_X}[\bG\bh] \le \mathbf{0}$ for a closed-form matrix-valued function $\bG: \calX \to \BR^{K\times C}$. 
For instance, for statistical parity, the $\bG$ matrix evaluated at a fixed individual $x \in \calX$ has $K=2AC$ rows indexed by $(\delta,a,c')\in \{0,1\} \times [A]\times [C]$, where the $(\delta,a,c')$-th row is equal to $ \left( (-1)^\delta P_S(a)^{-1} \sum_{c\in [C]} P_{S|X=x,Y=c}(a) h_c^{\base}(x) - (\alpha + (-1)^\delta) \right)\be_{c'}$, 
with $\be_1,\cdots,\be_C$ denoting the standard basis for $\BR^C$. The expressions for the $\bG$ matrix corresponding to the other fairness metrics are given in Appendix~\ref{appendix:fair}. Note that $\bG$ depends on $P_{S|X,Y}$. If the group attribute $S$ is part of the input feature $X$, then $P_{S|X,Y}$ is simply replaced with an indicator function. Otherwise, we approximate this conditional distribution by training a probabilistic classifier.

\begin{table}[t]
\small
\centering
\newcommand*{\ScaleC}[1]{\scalebox{1}{#1}}
\renewcommand{\arraystretch}{1.5}
\resizebox{0.9\textwidth}{!}{
\begin{tabular}{lccc}
\toprule
\textbf{Fairness Criterion} & Statistical parity & Equalized odds & Overall accuracy equality   \\
\toprule
\textbf{Expression} &
\ScaleC{${\displaystyle \left|\frac{P_{\widehat{Y}|S=a}(c')}{P_{\widehat{Y}}(c')} -1 \right| \leq \alpha }$} & \ScaleC{${\displaystyle\left|\frac{P_{\widehat{Y}|Y=c, S=a}(c')}{P_{\widehat{Y}|Y=c}(c')}-1\right| \leq \alpha}$} & \ScaleC{${\displaystyle\left|\frac{P(\widehat{Y}=Y\mid S=a)}{P(\widehat{Y}=Y)}-1\right| \leq \alpha}$} 
\\ \bottomrule
\end{tabular}
}
\vspace{2mm}
\caption{Standard group fairness criteria; one fixes $\alpha>0$ and iterates over all $(a,c,c')\in [A] \times [C]^2$.\vspace{-.5cm}}
\label{table:Fair}
\end{table}

\paragraph{Goal.} Our goal is to design an efficient post-processing method that takes a pre-trained classifier $\bh^{\base}$ that may violate some target group-fairness criteria and finds a fair classifier that has the most similar outputs (i.e., closest utility performance) to that of $\bh^{\base}$. 

\paragraph{Fairness through information-projection.} We formulate fair post-processing problem as follows. For a fixed search space $\calH \subset \simp{C}^{\calX} \defined \left\{ \bh : \calX \to \simp{C} \right\}$, a loss function $\err:\simp{C}^{\calX} \times \simp{C}^{\calX} \to \BR$, and a base classifier $\bh^{\base} \in \simp{C}^{\calX}$, one seeks to solve:
\begin{equation} \label{eq:fairness math}
\begin{aligned}
    \underset{\bh\in \calH}{\text{minimize}} \ \err\left(\bh,\bh^{\base}\right) \quad
    \text{subject to} \ \BE_{P_X}\left[ \bG \bh \right] \le \mathbf{0}.
\end{aligned}
\end{equation}
The function $\err$ quantifies the ``closeness'' between the scores given by $\bh$ and $\bh^\base$ and we choose $f$-divergence to measure this:
\begin{align}
    \err\left(\bh,\bh^{\base}\right)  
    =D_f(\bh \| \bh^\base \mid P_X) 
    \defined \BE_{P_X}\left[ \sum_{c\in [C]} h^\base_c(X)f\left(\frac{h_c(X)}{h_c^\base(X)}\right) \right]-f(1), \label{eq:f-divergence}
\end{align}
where $f$ is a convex function over $(0,\infty)$. By varying different choices of $f$, we can obtain e.g., cross-entropy (CE, $f(t)=-\log t$) and KL-divergence ($f(t)=t\log t$). 
For a chosen $f$-divergence, the  optimization problem~\eqref{eq:fairness math} becomes a generalization of \emph{information projection}~\cite{csiszar1975divergence}. 

\paragraph{Preliminaries on information-projection.} In a recent work \cite{ISIT}, an optimal solution for the information projection formulation~\eqref{eq:fairness math} was theoretically characterized. We briefly describe this result next. 
Let\footnote{Here, $\calC(\calX,\simp{C})$ denotes the complete metric space of continuous functions from $\calX$ to $\simp{C}$, equipped with the sup-norm, i.e., $\|\bh\| \defined \sup_{x\in \calX} \|\bh(x)\|_1$.} $\mathcal{H} \defined  \left\{ \bh \in \calC(\calX, \simp{C}) ~ ; ~ \inf_{c,x} h_c(x) > 0 \right\}$ and we introduce the following definition and assumption.
\begin{definition} \label{def:Dfconj}
For $\bp \in \simp{C}$, let $D_f^\co(\, \cdot \, , \bp)$ denote the convex conjugate of $D_f(\, \cdot \, \| \, \bp)$:
\begin{equation} \label{eq:Dfconj definition}
    D_f^\co(\bv,\bp) \defined \sup_{\bq \in \simp{C}} \bv^T \bq - D_f(\bq \, \| \, \bp).
\end{equation}
\end{definition}

\begin{assumption} \label{assumption}
Assume that: \textbf{(i)}~$f\in \calC^2(\BR)$, $f(1)=0,$ $f'(0^+)=-\infty$, and $f''(t)>0$ for all $t>0$; \textbf{(ii)}~each $G_{k,c}$ is bounded, differentiable, and has bounded gradient; \textbf{(iii)}~$\bh^\base\in \calH$, and each $h^\base_c$ has bounded partial derivatives; and \textbf{(iv)}~there is an $\bh\in \calH$ such that $\BE_{P_X}[\bG\bh]<\mathbf{0}$. 
\end{assumption}

Now, the solution for \eqref{eq:fairness math} can be obtained by a simple ``tilting'' of the base classifier's output, as stated in the next theorem.
\begin{theorem}[\cite{ISIT}]
\label{thm:MP}
If $f,\bh^\base,$ and $\bG$ satisfy Assumption~\ref{assumption}, and $\calX = \BR^d$, then there is a unique solution $\bh^\opt$ for the optimization problem \eqref{eq:fairness math} for the $f$-divergence objective $\eqref{eq:f-divergence}$. 
Furthermore, $\bh^\opt$ is given by the tilt
\begin{equation} \label{eq:tilt}
    h_c^\opt(x)  = h_c^\base(x) \cdot \phi\left( v_c(x;\blambda^\star) + \gamma(x;\blambda^\star) \right), \qquad (x,c) \in \calX \times [C],
\end{equation}
where: \textup{\textbf{(i)}}~the function $\phi$ denotes the inverse of $f'$; \textup{\textbf{(ii)}}~the function $\bv:\calX \times \BR^K \to \BR^C$ is defined by $\bv(x;\blambda) \defined - \bG(x)^T \blambda$; \textup{\textbf{(iii)}}~the function $\gamma : \calX \times \BR^K \to \BR$ is characterized by the equation $\BE_{c\sim \bh^\base(x)}\left[ \phi\left( v_c(x;\blambda) + \gamma(x;\blambda) \right) \right] = 1$; and \textup{\textbf{(iv)}}~$\blambda^\star \in \BR^K$ is any solution to the convex problem
\begin{equation} \label{eq:MP dual D}
D^\star\triangleq    \min_{\blambda \in \BR_+^K} ~ \BE\left[ D_f^\co\left(\bv(X;\blambda), \bh^\base(X) \right) \right].
\end{equation}
\end{theorem}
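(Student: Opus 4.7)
The plan is to recognize \eqref{eq:fairness math} as a convex optimization problem whose Lagrangian decouples pointwise in $x$, so that the optimal $\bh^\opt$ can be read off from the KKT conditions as the stated tilt, and the dual variables $\blambda^\star$ arise from the Lagrange multipliers of the fairness constraints.

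First, I would establish that the primal problem is strictly convex: the map $\bh\mapsto D_f(\bh\|\bh^\base\mid P_X)$ is strictly convex because $f''>0$ (Assumption~\ref{assumption}(i)), and the constraints $\mathbb{E}_{P_X}[\bG\bh]\le\mathbf{0}$ together with the pointwise simplex condition $\bh(x)\in\simp{C}$ are linear in $\bh$. This already gives uniqueness provided a minimizer exists. Existence in $\calH$ (continuous, strictly positive outputs) will follow once I exhibit the tilt formula, since the regularity hypotheses on $f$, $\bG$, and $\bh^\base$ will force the resulting $\bh^\opt$ into $\calH$.

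Next I would form the Lagrangian with a multiplier vector $\blambda\in\BR_+^K$ for the fairness constraint and a pointwise multiplier $\gamma(x)\in\BR$ for the simplex normalization $\mathbf{1}^T\bh(x)=1$:
\begin{equation*}
L(\bh,\blambda,\gamma)=D_f(\bh\|\bh^\base\mid P_X)+\blambda^T\mathbb{E}_{P_X}[\bG\bh]-\mathbb{E}_{P_X}\bigl[\gamma(X)(\mathbf{1}^T\bh(X)-1)\bigr].
\end{equation*}
The nonnegativity $\bh\ge\mathbf{0}$ will be handled automatically by the barrier behavior $f'(0^+)=-\infty$. Since everything is an integral over $P_X$, I can minimize pointwise: for each fixed $x$, setting $\partial L/\partial h_c(x)=0$ yields $f'\!\bigl(h_c(x)/h_c^\base(x)\bigr)=-\bigl(\bG(x)^T\blambda\bigr)_c+\gamma(x)=v_c(x;\blambda)+\gamma(x)$. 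Inverting $f'$ (which exists and is smooth because $f\in\calC^2$ with $f''>0$) gives precisely the tilt \eqref{eq:tilt} with $\phi=(f')^{-1}$, and the simplex constraint determines $\gamma(x;\blambda)$ as the unique solution of $\mathbb{E}_{c\sim\bh^\base(x)}[\phi(v_c(x;\blambda)+\gamma(x;\blambda))]=1$; uniqueness and smoothness of $\gamma(\cdot;\blambda)$ follow from strict monotonicity of $\phi$ and the implicit function theorem.

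To identify the dual as \eqref{eq:MP dual D}, I would substitute back. For fixed $x$, minimizing $\sum_c h_c^\base(x)f(h_c(x)/h_c^\base(x))-\bv(x;\blambda)^T\bh(x)$ over $\bh(x)\in\simp{C}$ equals $-D_f^\co(\bv(x;\blambda),\bh^\base(x))$ by Definition~\ref{def:Dfconj}. Integrating over $P_X$ gives dual function $g(\blambda)=-\mathbb{E}[D_f^\co(\bv(X;\blambda),\bh^\base(X))]$, so maximizing $g$ over $\blambda\in\BR_+^K$ coincides with \eqref{eq:MP dual D}. Strong duality and attainment of a dual optimizer $\blambda^\star$ follow from Slater's condition, which is exactly Assumption~\ref{assumption}(iv): there is a strictly feasible $\bh\in\calH$ with $\mathbb{E}_{P_X}[\bG\bh]<\mathbf{0}$. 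By KKT complementary slackness, plugging $\blambda^\star$ into the pointwise minimizer recovers $\bh^\opt$ as in \eqref{eq:tilt}.

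The main obstacle I expect is not the formal KKT manipulation but the analytic regularity needed to justify pointwise minimization and to certify that $\bh^\opt\in\calH$: one must argue that $(x,\blambda)\mapsto\gamma(x;\blambda)$ is continuous (and in fact $\calC^1$ in $x$) so that the tilt \eqref{eq:tilt} yields a continuous, strictly positive classifier. This rests on the smoothness of $\phi$, the boundedness and differentiability of $\bG$ and $\bh^\base$ (Assumption~\ref{assumption}(ii)--(iii)), and the implicit function theorem applied to the normalization equation. A secondary subtlety is ensuring the dual optimum is attained at a finite $\blambda^\star\in\BR_+^K$; coercivity of $\blambda\mapsto\mathbb{E}[D_f^\co(\bv(X;\blambda),\bh^\base(X))]$ on the feasible cone, together with Slater, delivers this.
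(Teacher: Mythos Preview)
The paper does not prove Theorem~\ref{thm:MP} itself; it is quoted from \cite{ISIT}. The only information the paper gives about the original argument appears in the remark after the proof of Theorem~\ref{thm:hopt}: the proof in \cite{ISIT,ISIT_extended} proceeds via Sion's minimax theorem applied to the Lagrangian, together with additional ad hoc techniques, precisely because the ambient space $\calC(\BR^d,\BR^C)$ is \emph{not} reflexive and so stronger minimax results (like the Ekeland--Temam theorem used for the finite-sample Theorem~\ref{thm:hopt}) are unavailable.

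Your proposal follows the same Lagrangian-duality skeleton---form the Lagrangian, minimize pointwise in $x$, read off the tilt via $\phi=(f')^{-1}$ and the normalization equation for $\gamma$, and identify the dual as~\eqref{eq:MP dual D}---and you correctly flag the two genuine obstacles: (a) regularity of $\gamma$ so that $\bh^\opt\in\calH$, and (b) dual attainment at a finite $\blambda^\star$. The point you underweight is the appeal to ``Slater's condition'' for strong duality: in the setting of Theorem~\ref{thm:MP} the primal variable $\bh$ lives in an infinite-dimensional, non-reflexive function space, and the textbook Slater/KKT package does not apply out of the box. This is exactly where \cite{ISIT_extended} has to do real work, replacing a one-line Slater argument by a careful use of Sion's theorem (which needs compactness on one side) and separate arguments for primal and dual attainment. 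Your outline is sound, but the substantive content of the proof lives in the two ``obstacles'' you list at the end rather than in the KKT algebra preceding them.
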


If the underlying data distribution is known, Theorem~\ref{thm:MP} yields an expression for the projected classifier as a post-processing of the base classifier.   
However, in practice, we do not know the underlying distribution and have to approximate it from a finite number of i.i.d. samples. 
In Section~\ref{sec:problem}, we first describe how we approximate the solution given in Theorem~\ref{thm:MP} with finite samples. We then  propose a parallelizable algorithm to solve the approximation in  Section~\ref{sec:ADMM}.

\section{A finite-sample approximation of information projection} \label{sec:problem}

In practice, $P_X$ is unknown and only data points $\BX \defined \{X_i\}_{i\in [N]}\subset \calX$, drawn from $P_X$, are available. Thus, we propose the following fairness optimization problem. We search for a (multi-class) classifier $\bh : \BX \to \simp{C}$ that solves the following:
\begin{equation} \label{eq:MP alt direct}
\begin{aligned}
    \underset{\substack{\bh : \BX \to \simp{C} \\ \ba:\BX \to \BR^C, \boldsymbol{b} \in \BR^K}}{\textup{minimize}} \qquad & D_f\left(\bh \, \| \, \bh^{\base} \mid \empp\right) +\tau_1 \cdot \left( \BE_{X\sim \empp}\left[\|\ba(X)\|_2^2\right] + \|\bb\|_2^2 \right) \\
    \textup{subject to}  \quad \qquad & \BE_{\empp}\left[ \bG \cdot (\bh+\tau_2 \ba) \right] \le \tau_2 \boldsymbol{b},
\end{aligned}
\end{equation}
with $\empp$ being the empirical measure (e.g., obtained from a dataset), and $\tau_1,\tau_2 > 0$ prescribed constants. The terms $\ba$ and $\boldsymbol{b}$ are added to circumvent infeasibility issues and aid convergence of our numerical procedure. We show in the following theorem that there is a unique solution for~\eqref{eq:MP alt direct}, and that it is given by a tilt (i.e., multiplicative factor) of $\bh^\base$. The tilting parameter is the solution of a finite-dimensional strongly convex optimization problem.

\begin{theorem} \label{thm:hopt}
Suppose Assumption~\ref{assumption} holds, and set $\zeta \defined \tau_2^2/(2\tau_1)$. There exists a unique solution $\bh^{\opt,N}$ to~\eqref{eq:MP alt direct}, and it is given by the formula
\begin{equation} \label{eq:hopt formula}
    h_c^{\opt,N}(x) = h_c^{\base}(x) \cdot \phi\left( v_c(x;\blambda^\star_{\zeta,N}) + \gamma(x;\blambda^\star_{\zeta,N}) \right), \quad (x,c) \in \BX \times [C],
\end{equation}
with $\bv,\phi,\gamma$ as in Theorem~\ref{thm:MP}, 
and $\blambda_{\zeta,N}^\star \in \BR^K$ is the unique solution to the strongly convex problem
\begin{equation} \label{eq:MP dual D N}
D_{\zeta,N}^\star \triangleq    \min_{\blambda \in \BR_+^K} ~ \BE_{\empp}\left[ D_f^\co\left(\bv(X;\blambda), \bh^\base(X) \right) \right] + \frac{\zeta}{2} \left\| \bcalG_N^T \blambda \right\|_2^2
\end{equation}
where $\bcalG_N \defined  \left( \bG(X_1)/\sqrt{N},\cdots, \bG(X_N)/\sqrt{N},\bI_K \right)\in \BR^{K\times (NC+K)}$.
\end{theorem}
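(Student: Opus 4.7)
The plan is to obtain Theorem~\ref{thm:hopt} via Lagrangian duality applied to the strictly convex, strictly feasible primal~\eqref{eq:MP alt direct}. The objective is jointly strictly convex in $(\bh,\ba,\bb)$: strict convexity of $f$ on $(0,\infty)$ handles the $\bh$-block while the Tikhonov regularizers $\tau_1\|\ba\|_2^2$ and $\tau_1\|\bb\|_2^2$ handle the others. The feasible set is convex and nonempty, so there is a unique primal minimizer and in particular a unique $\bh^{\opt,N}$. Slater's condition is immediate because $\bb$ is unconstrained: starting from any $\bh:\BX\to\simp{C}$ with $\ba=\mathbf{0}$, one picks $\bb$ coordinatewise large to force strict feasibility. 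Therefore strong duality holds and the order of minimization may be swapped.

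Next I would dualize the inequality constraint with multiplier $\blambda\in\BR_+^K$ and minimize the Lagrangian in closed form in each block. The $\ba$- and $\bb$-subproblems are unconstrained quadratics, yielding $\ba(x)=-\tfrac{\tau_2}{2\tau_1}\bG(x)^T\blambda$ and $\bb=\tfrac{\tau_2}{2\tau_1}\blambda$; substituting these back produces the combined $(\ba,\bb)$-contribution
\[
    -\frac{\zeta}{2}\Bigl(\BE_{\empp}\bigl[\|\bG(X)^T\blambda\|_2^2\bigr]+\|\blambda\|_2^2\Bigr)
    = -\frac{\zeta}{2}\bigl\|\bcalG_N^T\blambda\bigr\|_2^2,
\]
which is precisely how the stacked $\bcalG_N$ arises. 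The $\bh$-subproblem decouples pointwise in $x\in\BX$ into $\min_{\bh(x)\in\simp{C}}\{D_f(\bh(x)\|\bh^\base(x))-\bv(x;\blambda)^T\bh(x)\}$, and by Definition~\ref{def:Dfconj} combined with $f(1)=0$ this equals $-D_f^\co(\bv(x;\blambda),\bh^\base(x))$. Collecting signs and maximizing over $\blambda\ge\mathbf{0}$ reproduces~\eqref{eq:MP dual D N} up to the minus sign inherent in flipping max to min.

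Uniqueness of $\blambda^\star_{\zeta,N}$ follows from $\zeta$-strong convexity of the dual objective on $\BR_+^K$: the matrix $\bcalG_N$ has $\bI_K$ as a block, so $\bcalG_N\bcalG_N^T\succeq\bI_K$ and thus $\|\bcalG_N^T\blambda\|_2^2\ge\|\blambda\|_2^2$, while $\blambda\mapsto\BE_{\empp}[D_f^\co(\bv(\cdot;\blambda),\bh^\base(\cdot))]$ is convex (supremum of affine-in-$\blambda$ functions by Definition~\ref{def:Dfconj}). The tilt~\eqref{eq:hopt formula} is then extracted from the KKT conditions of the inner $\bh(x)$-minimization after attaching a multiplier $\gamma(x;\blambda^\star)$ to the normalization $\mathbf{1}^T\bh(x)=1$: the stationarity equation $f'(h_c/h_c^\base)=v_c(x;\blambda^\star)+\gamma(x;\blambda^\star)$ inverts via $\phi=(f')^{-1}$ into~\eqref{eq:hopt formula}. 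The implicit function $\gamma$ is well defined because, by Assumption~\ref{assumption}, $\phi$ is strictly increasing and continuous with $\phi(-\infty)=0^+$ (from $f'(0^+)=-\infty$), so $\sum_c h_c^\base(x)\phi(v_c(x;\blambda)+\gamma)$ strictly increases continuously through the value $1$ as $\gamma$ sweeps its domain, meeting $1$ exactly once.

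The main obstacle I anticipate is the bookkeeping that consolidates the $\ba$- and $\bb$-minimizations into the single squared norm $\|\bcalG_N^T\blambda\|_2^2$ with the correct coefficient $\zeta/2$; everything downstream then follows routinely from strong duality and the definition of $D_f^\co$. A secondary subtlety is that, to conclude $\bh^{\opt,N}$ equals the tilt~\eqref{eq:hopt formula}, one should verify that the candidate $(\bh,\ba,\bb)$ recovered by inverting KKT is actually primal-feasible and attains the dual value; strong duality together with uniqueness of the primal minimizer then forces the identification.
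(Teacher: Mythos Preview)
Your proposal is correct and follows essentially the same Lagrangian-duality route as the paper: form the Lagrangian, minimize out $(\ba,\bb)$ to produce the $\frac{\zeta}{2}\|\bcalG_N^T\blambda\|_2^2$ term, recognize the pointwise $\bh$-minimization as $-D_f^\co$, and recover the tilt via stationarity. The only notable differences are cosmetic---the paper invokes a generalized minimax theorem (Ekeland--T\'emam) rather than Slater's condition to secure strong duality, cites an external lemma for the tilt formula rather than deriving it from KKT, and explicitly treats the boundary case $f(0^+)=\infty$ by restricting to a sub-simplex bounded away from zero; you should add a sentence handling that last point, since without it the Lagrangian can take the value $+\infty$ and the standard Slater machinery needs a proper objective.
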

\begin{proof}
See Appendix~\ref{appendix:thm duality}.
\end{proof}

Theorem~\ref{thm:hopt} shows that: strong duality holds between the primal~\eqref{eq:MP alt direct} and (the negative of) the dual~\eqref{eq:MP dual D N}; there is a unique classifier $\bh^{\opt,N}$ minimizing our fairness formulation~\eqref{eq:MP alt direct}; there is a unique solution $\blambda_{\zeta,N}^\star$ to the dual~\eqref{eq:MP dual D}; and there is an explicit functional form of $\bh^{\opt,N}$ in terms of $\blambda_{\zeta,N}^\star$ in~\eqref{eq:hopt formula}. 

The key distinctions between our formulation and  Theorem~\ref{thm:MP} are that we use the empirical measure $\empp$ (e.g., produced using a dataset with i.i.d. samples), we have a \emph{strongly} convex dual problem in~\eqref{eq:MP dual D N} (in contrast to the convex program in~\eqref{eq:MP dual D}), and we prove strong duality in Theorem~\ref{thm:hopt} (whereas an analogous strong duality is absent from the results of~\cite{ISIT}). Hence, Theorem~\ref{thm:hopt} yields a \emph{practical} two-step procedure for solving the functional optimization in equation~\eqref{eq:MP alt direct}: (i)~compute the dual variables by solving the strongly convex optimization in~\eqref{eq:MP dual D N}; (ii)~tilt the base classifier by using the dual variables according to~\eqref{eq:hopt formula}. This process is applied on real-world datasets using \methodname~(see Algorithm~\ref{alg:MP}) in the next section.

The results of Theorem~\ref{thm:hopt} are tightly related to those in information projection shown in Theorem~\ref{thm:MP}, which corresponds to the case $\tau_1=\tau_2=\zeta=0$ and $P_X$ in place of $\empp$. We show in Theorem~\ref{thm:admm convergence kl} in the next section that the choice $\zeta\propto N^{-1/2}$ yields a sense in which our numerically obtained (from \methodname) tilting parameters $\blambda$ work well for the population problem~\eqref{eq:MP dual D}.

\begin{remark} \label{rem:assumption}
In practice, Assumption~\ref{assumption} is not a limiting factor for Theorem~\ref{thm:hopt} and  \methodname. This is because: we are considering here a finite-set domain so continuity is automatic; we can perturb $\bh^\base$ by negligible noise to push it away from the simplex boundary; and the uniform classifier is strictly feasible. Nevertheless, Assumption~\ref{assumption}  simplifies the derivation of our theoretical results.
\end{remark}

\section{Fair projection and theoretical guarantees}
\label{sec:ADMM}

We introduce a parallelizable  algorithm, \methodname, that solves~\eqref{eq:MP alt direct} using $N$ i.i.d. data points. We prove that its utility converges to $D^\star$ (see~\eqref{eq:MP dual D}) in the population limit and establish both sample-complexity and convergence rate guarantees. Applying \methodname~to the group-fairness intervention problem in \eqref{eq:fairness math} yields the optimal parameters in \eqref{eq:hopt formula} for post-processing (i.e., tilting) the output of a multi-class classifier in order to satisfy target fairness constraints.

\begin{algorithm}[!b]
   \caption{\textbf{:} \methodname~for  solving~\eqref{eq:MP dual D N}.}
   \label{alg:MP}
\begin{algorithmic}
\small
   \STATE {\bfseries Input:} divergence $f$, predictions $\{\bp_i \defined \bh^{\base}(X_i) \}_{i\in [N]}$, constraints $\{ \bG_i \defined \bG(X_i) \}_{i\in[N]}$, regularizer $\zeta$, ADMM penalty $\rho$, and initializers  
   $\blambda$ and $(\bw_i)_{i\in [N]}$.
   \STATE {\bfseries Output:} $h_c^{\opt,N}(x) \defined h_c^\base(x)\cdot \phi(\gamma(x;\blambda)+v_c(x;\blambda))$.
   \vspace{2mm}
   \STATE $\bQ \leftarrow \frac{\zeta}{2}\bI + \frac{\rho}{2N} \sum_{i\in [N]} \bG_i\bG_i^T$ 
   \vspace{1mm}
   \FOR{$t=1,2,\cdots,t'$}
   \STATE $\ba_i \leftarrow \bw_i + \rho \bG_i^T\blambda \hfill~ i\in [N]$ 
   \vspace{0.5mm}
    \STATE $\bv_i\leftarrow \Argmin{\bv\in \BR^C} ~ D_f^\co(\bv,\bp_i) + \frac{\rho+\zeta}{2} \|\bv\|_2^2 + \ba_i^T \bv \hfill~ i\in [N]$
   \vspace{0.5mm}
   \STATE $\bq \leftarrow \frac{1}{N} {\displaystyle \sum_{i\in [N]}} \bG_i \cdot \left( \bw_i + \bv_i \right) $
   \vspace{0.3mm}
   \STATE $\blambda \leftarrow \Argmin{\bl \in \BR_+^K} ~ \bl^T \bQ \bl + \bq^{T} \bl$
   \STATE $\bw_i \leftarrow \bw_i + \rho \cdot \left( \bv_i + \bG_i^T\blambda \right) \hfill i\in [N]$
    \ENDFOR
\end{algorithmic}
\end{algorithm}

The \methodname~algorithm uses ADMM~\cite{Boyd_ADMM} to solve the convex program in \eqref{eq:MP dual D N}. Recall that it suffices to optimize \eqref{eq:MP dual D N} for computing~\eqref{eq:MP alt direct} as proved in Theorem~\ref{thm:hopt}. Algorithm~\ref{alg:MP} presents the steps of \methodname, and its detailed derivation is given in Appendix~\ref{appendix:ADMM iterations}. A salient feature of \methodname~is its \emph{parallelizability}. Each step that is done for $i$ varying over~$[N]$ can be executed for each $i$ separately and in parallel. In particular, this applies to the most computationally intensive step, the $\bv_i$-update step. We discuss next how the $\bv_i$-update step is carried out.

\paragraph{Inner iterations.} One approach to carry out the inner iteration in Algorithm~\ref{alg:MP} that updates $\bv_i$ is to study the vanishing of the gradient of  $\bv \mapsto D_f^\co(\bv,\bp_i) + \xi \|\bv\|_2^2 + \ba_i^T \bv$ (where $\xi = (\rho+\zeta)/2$ and $\ba_i \in \BR^C$ is some vector). In the KL-divergence case, $D_{\KLs}^\co$ is given by a log-sum-exp function, so its gradient is given by a softmax function, and equating the gradient to zero becomes a fixed-point equation. We give an iterative routine to solve this fixed point equation in Appendix~\ref{sec:KL vupdate}, whose proof of convergence is aided by our proof that the softmax function is $\frac12$-Lipschitz in Appendix~\ref{appendix:softmax}. Beyond the KL-divergence case, setting the gradient to zero does not seem to be an analytically tractable problem. Nevertheless, we may reduce the vector minimization in the $\bv_i$-step to a tractable $1$-dimensional root-finding problem, as the following result aids in showing.

\begin{lemma} \label{lem:dfconj 1d}
For $\bp\in \simp{C}^+$, $\ba \in \BR^C$, and $\xi>0$, if $f$ satisfies Assumption~\ref{assumption}, we have that
\begin{align}
    \min_{\bv \in \BR^C} D_f^\co(\bv,\bp) + \xi \|\bv\|_2^2 + \ba^T\bv = - \sup_{\theta \in \BR} - \theta +\hspace{-2mm} \sum_{c\in [C]} \min_{q_c\ge 0} ~ p_cf\left( \frac{q_c}{p_c} \right) + \frac{(a_c+q_c)^2}{4\xi} + \theta q_c.
\end{align}
\end{lemma}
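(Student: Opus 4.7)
The plan is to apply two successive minimax exchanges. First, substitute the definition of the conjugate to write the left-hand side as
\begin{equation*}
\min_{\bv \in \BR^C} \sup_{\bq \in \simp{C}} \; \bv^T \bq - D_f(\bq \| \bp) + \xi \|\bv\|_2^2 + \ba^T \bv.
\end{equation*}
The objective is strongly convex in $\bv$ (thanks to the $\xi\|\bv\|_2^2$ term) and concave in $\bq$ (since $D_f(\cdot \| \bp) = \sum_c p_c f(\cdot /p_c)$ is convex by convexity of $f$), and $\simp{C}$ is compact and convex. Sion's minimax theorem therefore permits swapping the min and sup.

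After swapping, the inner minimization in $\bv$ is an unconstrained quadratic with gradient $2\xi \bv + \ba + \bq$, yielding minimizer $\bv^\star = -(\ba+\bq)/(2\xi)$ and minimum value $-\|\ba+\bq\|_2^2/(4\xi)$. Substituting reduces the problem to
\begin{equation*}
-\min_{\bq \in \simp{C}} \sum_{c \in [C]} p_c f\!\left(\frac{q_c}{p_c}\right) + \frac{(a_c+q_c)^2}{4\xi}.
\end{equation*}
This is a convex program over the simplex with linear (in fact affine) constraints; I would next dualize only the equality constraint $\mathbf{1}^T \bq = 1$ via a Lagrange multiplier $\theta \in \BR$, keeping the nonnegativity $\bq \ge \mathbf{0}$ explicit in the inner problem.

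Strong duality for this Lagrangian reformulation holds because the objective is convex, continuous on $\BR_+^C$, the equality constraint is affine, and Slater-type feasibility is satisfied by any strictly positive interior point of $\simp{C}$ (e.g., the uniform distribution, which lies in $\simp{C}^+$ where $\bp$ lives and where $f$ is smooth by Assumption~\ref{assumption}). Exchanging $\min_\bq$ and $\sup_\theta$, the inner minimization decouples across coordinates because both $\sum_c p_c f(q_c/p_c) + (a_c+q_c)^2/(4\xi)$ and the Lagrangian term $\theta \sum_c q_c$ are separable, producing precisely the coordinatewise minimizations $\min_{q_c \geq 0}$ appearing on the right-hand side, together with the $-\theta$ constant from the $-\theta \cdot 1$ term. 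Collecting signs yields the claimed identity.

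The main obstacle is purely bookkeeping around the two minimax swaps: Sion's theorem requires verifying joint continuity together with compactness on one side (automatic from $\simp{C}$), and the Lagrangian strong duality requires a qualification condition, which follows immediately from Slater. Both steps rely only on the convexity of $f$ and $\xi > 0$; the hypothesis $f'(0^+) = -\infty$ in Assumption~\ref{assumption} is not essential for the identity itself but will later ensure the coordinatewise inner minimizers are strictly positive, which is useful when invoking this lemma in the $\bv_i$-update of \methodname.
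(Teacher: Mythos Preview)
Your proposal is correct and follows essentially the same route as the paper: expand $D_f^\co$, swap $\min_{\bv}$ and $\sup_{\bq}$ via Sion (using compactness of $\simp{C}$), evaluate the inner quadratic in $\bv$, then dualize the simplex constraint $\mathbf{1}^T\bq=1$ and separate coordinatewise. The only cosmetic difference is that the paper justifies the second minimax exchange by invoking the Ekeland--T\'emam minimax theorem (their Theorem~\ref{thm:minimax}) rather than Slater's condition for Lagrangian strong duality as you do; both arguments are valid here and yield the same conclusion.
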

\begin{proof}
See Appendix~\ref{sec:lemma1d proof}.
\end{proof}

We note that the $\bv_i$-update steps for both KL and CE (provided in detail in Appendix~\ref{sec:CE vupdate}) give, as a byproduct, the implicitly defined function~$\gamma(x;\blambda)$ (see the statements of Theorems~\ref{thm:MP}--\ref{thm:hopt}).

\paragraph{Convergence guarantees.} Our proposed algorithm, \methodname, enjoys the following convergence guarantees. First, the output after the $t$-th iteration $\blambda_{\zeta,N}^{(t)}$ converges exponentially fast to $\blambda_{\zeta,N}^\star$ (see~\eqref{eq:MP dual D N}).

\begin{theorem} \label{thm:FairProjection}
If Assumption~\ref{assumption} holds, the \methodname~algorithm for KL-divergence converges in $t'=O(\log N)$ steps, and runs in time $O(N\log N)$, to the unique solution $\blambda_{\zeta,N}^\star$ of~\eqref{eq:MP dual D N}. Further, if $\blambda_{\zeta,N}^{(t)}$ and $\bh^{(t)}$ are the $t$-th iteration outputs of \methodname, then $\|\blambda_{\zeta,N}^{(t)}-\blambda_{\zeta,N}^\star\|_2 = O(e^{-t})$ and $\bh^{(t)}(x)=\bh^{\opt,N}(x)\cdot \left( 1 + O(e^{-t}) \right)$ uniformly in $x\in \BX$ as $t\to \infty$.
\end{theorem}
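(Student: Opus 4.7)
The approach is to cast \methodname{} as a two-block ADMM applied to the strongly convex dual problem~\eqref{eq:MP dual D N} and invoke classical linear-convergence results, then propagate the iterate error through the explicit tilt formula~\eqref{eq:hopt formula} to control $\bh^{(t)}$. First, I would verify the ADMM setup: introducing the splitting variables $\bv_i \in \BR^C$ and coupling constraints $\bv_i = -\bG_i^T\blambda$ (whose multipliers are the $\bw_i$), the algorithm is exactly the scaled ADMM with penalty $\rho$ applied to~\eqref{eq:MP dual D N}. Strong convexity with modulus at least $\zeta$ holds because $\bcalG_N\bcalG_N^T \succeq \bI_K$ by construction, so the quadratic term $\tfrac{\zeta}{2}\|\bcalG_N^T\blambda\|_2^2$ dominates $\tfrac{\zeta}{2}\|\blambda\|_2^2$. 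In the KL case $D_{\KLs}^\co(\bv,\bp) = \log \sum_c p_c e^{v_c}-1$ is a log-sum-exp, so it is convex and has $1$-Lipschitz softmax gradient (by the auxiliary lemma in Appendix~\ref{appendix:softmax}), which in turn yields global Lipschitz smoothness of the smooth block after composition with the linear map $\blambda \mapsto -\bG_i^T \blambda$.

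With both strong convexity (modulus $\mu \ge \zeta$) and Lipschitz gradient (constant $L$ bounded in terms of $\sup_i \|\bG_i\|_2$ and the softmax Lipschitz constant) in hand, I would then invoke a standard linear-convergence theorem for two-block ADMM (e.g.\ Deng--Yin-style or Giselsson--Boyd bounds) to obtain a contraction of the form $\|\blambda^{(t+1)}-\blambda^\star_{\zeta,N}\|_2 \le \kappa \|\blambda^{(t)}-\blambda^\star_{\zeta,N}\|_2$ with a rate $\kappa = \kappa(\mu,L,\rho) \in (0,1)$ that is independent of $N$ once $\zeta$ is fixed; iterating gives $\|\blambda^{(t)}_{\zeta,N}-\blambda^\star_{\zeta,N}\|_2 = O(e^{-t})$. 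Reaching a target accuracy $1/\mathrm{poly}(N)$ (which matches the statistical scale for $\zeta \propto N^{-1/2}$) therefore needs $t' = O(\log N)$ outer iterations. For the runtime, each outer iteration is dominated by $N$ parallel $\bv_i$-updates and $\bw_i$-updates plus the aggregation for $\bq$ and a fixed-size $K\times K$ QP for $\blambda$; treating $K,C$ as constants this is $O(N)$ per iteration, so $O(N\log N)$ total.

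For the $\bv_i$-update in the KL case I would use the fixed-point routine described in Appendix~\ref{sec:KL vupdate}: because the softmax is $\tfrac12$-Lipschitz and the subproblem adds a strongly convex quadratic of modulus $\rho+\zeta$, the fixed-point map is a contraction, so $O(1)$ inner steps suffice to drive the subproblem residual below any target threshold, and this can be folded into the outer ADMM rate by a standard ``inexact ADMM'' argument (the inner tolerance is geometrically decreasing, contributing only an extra constant to $\kappa$).

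Finally, for the classifier bound, plug $\blambda^{(t)}_{\zeta,N}$ into the tilt formula~\eqref{eq:hopt formula}. Since $\bv(x;\blambda)=-\bG(x)^T\blambda$ is linear in $\blambda$ with bounded coefficients on $\BX$, the implicit function $\gamma(x;\blambda)$ is $C^1$ in $\blambda$ (by the implicit function theorem applied to $\BE_{c\sim\bh^\base(x)}[\phi(v_c+\gamma)]=1$, whose Jacobian in $\gamma$ equals $\BE[\phi'(v_c+\gamma)]>0$ under Assumption~\ref{assumption}), and in the KL case $\phi=\exp$ is smooth. A first-order Taylor expansion then gives, uniformly in $x\in\BX$,
\begin{equation*}
h_c^{(t)}(x) = h_c^{\opt,N}(x)\cdot\bigl(1 + O(\|\blambda^{(t)}_{\zeta,N}-\blambda^\star_{\zeta,N}\|_2)\bigr) = h_c^{\opt,N}(x)\cdot(1+O(e^{-t})),
\end{equation*}
where the uniformity comes from $\BX$ being finite (or equivalently from the boundedness of $\bG$ and $\bh^\base$ on $\BX$ required by Assumption~\ref{assumption}). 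The main obstacle I anticipate is the clean bookkeeping of the inexact inner solves and their interaction with the outer linear rate — in particular, specifying the inner tolerance schedule so that the inner error does not destroy the $O(e^{-t})$ outer contraction while still costing only $O(1)$ inner steps per outer iteration.
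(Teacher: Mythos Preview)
Your proposal is correct and follows essentially the same route as the paper: cast \methodname{} as two-block ADMM on the dual~\eqref{eq:MP dual D N}, invoke the Deng--Yin linear-convergence theorem using strong convexity (from the $\zeta$-regularizer) and Lipschitzness of the softmax gradient of $D_{\KLs}^\co$, and then push the $O(e^{-t})$ error on $\blambda$ through the tilt formula to bound $\bh^{(t)}$.

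Two minor differences worth noting. First, the paper places both the strong convexity and the Lipschitz-gradient hypotheses on the $F(\bV)$ block (i.e., $F(\bV)=\tfrac1N\sum_i D_{\KLs}^\co(\bv_i,\bp_i)+\tfrac{\zeta}{2}\|\bV\|_2^2$ is $\zeta$-strongly convex in $\bV$ with $(\tfrac12+\zeta)$-Lipschitz gradient via Proposition~\ref{prop:softmax}), matching ``case~1, scenario~1'' of \cite{deng2016} directly; your discussion mixes strong convexity in $\blambda$ with smoothness of the composite, which is fine but requires picking the right Deng--Yin scenario when you fill in details. Second, for the classifier bound the paper avoids the implicit-function argument entirely: in the KL case the tilt is an explicit softmax, so one writes the ratio $h_{c}^{(t)}(x)/h_c^{\opt,N}(x)$ directly and bounds it via $\|\bv(x;\blambda^{(t)})-\bv(x;\blambda^\star)\|_\infty=O(e^{-t})$ and $e^{O(e^{-t})}=1+O(e^{-t})$. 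Your implicit-function/Taylor route is equally valid and has the advantage of generalizing beyond KL. Finally, your concern about inexact inner solves is well-placed but is not addressed in the paper's proof, which treats the $\bv_i$-updates as exact.
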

\begin{proof}
See Appendix~\ref{appendix:rate}.
\end{proof}

Second, the parameter $\blambda_{N^{-1/2},N}^{(\log N)}$ obtainable from \methodname~performs well for the \emph{population} problem for information projection~\eqref{eq:MP dual D}.

\begin{theorem} \label{thm:admm convergence kl}
Suppose Assumption~\ref{assumption} holds, let $\calX=\BR^d$, and consider the KL-divergence case. 
Then, choosing $\zeta = \Theta\left( N^{-1/2} \right)$ and $t=\Omega(\log N)$ we obtain for any $\delta \in (0,1)$  that (see~\eqref{eq:MP dual D})
\begin{equation}
    \Pr\left\{ \BE_X\left[ D_{\KLs}^\co\left( \bv\left( X; \blambda_{\zeta,N}^{(t)} \right), \bh^\base(X) \right) \right] >  D^\star + O\left( \frac{1}{\sqrt{N}} \right) \right\} \le \delta.
\end{equation}
\end{theorem}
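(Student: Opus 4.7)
I plan to bound the excess population objective
$E(t,N)\defined F\bigl(\blambda_{\zeta,N}^{(t)}\bigr)-D^\star$, where $F(\blambda)\defined \BE_{P_X}[D_{\KLs}^\co(\bv(X;\blambda),\bh^\base(X))]$ and $F_N(\blambda)\defined \BE_{\empp}[D_{\KLs}^\co(\bv(X;\blambda),\bh^\base(X))]$, by splitting it into an \emph{optimization} error along the ADMM trajectory, two \emph{statistical} fluctuations between $F$ and $F_N$, and a \emph{regularization} bias from the $\zeta$-term in~\eqref{eq:MP dual D N}. Using optimality of $\blambda_{\zeta,N}^\star$ for $R_{\zeta,N}(\blambda)\defined F_N(\blambda)+\tfrac{\zeta}{2}\|\bcalG_N^T\blambda\|_2^2$ together with $\tfrac{\zeta}{2}\|\bcalG_N^T\blambda_{\zeta,N}^\star\|_2^2\ge 0$, I would write
\begin{align*}
E(t,N)\le{}&\bigl[F(\blambda_{\zeta,N}^{(t)})-F(\blambda_{\zeta,N}^\star)\bigr]+\bigl[F(\blambda_{\zeta,N}^\star)-F_N(\blambda_{\zeta,N}^\star)\bigr]\\
&{}+\bigl[F_N(\blambda^\star)-F(\blambda^\star)\bigr]+\tfrac{\zeta}{2}\bigl\|\bcalG_N^T\blambda^\star\bigr\|_2^2,
\end{align*}
and bound each term at rate $O(N^{-1/2})$ on an event of probability $\ge 1-\delta$.

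For the optimization error, Theorem~\ref{thm:FairProjection} gives $\|\blambda_{\zeta,N}^{(t)}-\blambda_{\zeta,N}^\star\|_2=O(e^{-t})$. Since $D_{\KLs}^\co(\bv,\bp)=\log\sum_c p_c e^{v_c}$ is $1$-Lipschitz in $\bv$ and $\bv(X;\blambda)=-\bG(X)^T\blambda$ is linear in $\blambda$ with coefficients uniformly bounded under Assumption~\ref{assumption}(ii), $F$ is Lipschitz in $\blambda$. Taking $t=\Omega(\log N)$ with a suitably large constant therefore forces this bracket to be $O(N^{-1/2})$.

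The central step, and the main anticipated obstacle, is confining both $\blambda^\star$ and $\blambda_{\zeta,N}^\star$ to a common sample-independent compact set $\Lambda\subset\BR_+^K$ with high probability. For $\blambda^\star$, strict feasibility (Assumption~\ref{assumption}(iv)) produces $\bh_0\in\calH$ with $\BE_{P_X}[\bG\bh_0]\le -\eps\mathbf{1}$; substituting $\bh_0$ into the Lagrangian gives $-F(\blambda)=\min_\bh L(\bh,\blambda)\le L(\bh_0,\blambda)\le D_{\KLs}(\bh_0\|\bh^\base|P_X)-\eps\|\blambda\|_1$, so $F$ is coercive on $\BR_+^K$ and $\|\blambda^\star\|_1\le \Lambda_0$ for an $N$-independent $\Lambda_0$. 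To transfer this to the empirical problem I would apply Hoeffding coordinatewise (each entry of $\bG\bh_0$ is bounded by Assumption~\ref{assumption}(ii)) followed by a union bound over $k\in[K]$, yielding $\BE_{\empp}[\bG\bh_0]\le -\tfrac{\eps}{2}\mathbf{1}$ with probability $\ge 1-\delta/3$ once $N=\Omega(\eps^{-2}\log(K/\delta))$. Combined with $R_{\zeta,N}\ge F_N$, the same coercivity argument then yields $\|\blambda_{\zeta,N}^\star\|_1\le 2\Lambda_0$ on that event, placing both optima in a fixed ball $\Lambda$. On $\Lambda$ the integrand $D_{\KLs}^\co(\bv(x;\blambda),\bh^\base(x))$ is uniformly bounded and Lipschitz in $\blambda$, so a standard $\varepsilon$-net covering plus pointwise Hoeffding provides $\sup_{\blambda\in\Lambda}|F_N(\blambda)-F(\blambda)|=O\bigl(\sqrt{K\log(K/\delta)/N}\bigr)$ with probability $\ge 1-\delta/3$, controlling the two middle brackets.

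The regularization bias is the cleanest piece: since $\|\bcalG_N^T\blambda^\star\|_2^2\le (1+\sup_x\|\bG(x)\|_{\mathrm{op}}^2)\Lambda_0^2$ deterministically on $\Lambda$, the choice $\zeta=\Theta(N^{-1/2})$ renders the final term $O(N^{-1/2})$. A union bound over the three bad events completes the proof. As flagged, the most delicate ingredient is the high-probability transfer of the Slater gap that anchors $\blambda_{\zeta,N}^\star$ inside a sample-independent ball; once that is in place, everything else reduces to standard Lipschitz-plus-concentration machinery applied to bounded functionals.
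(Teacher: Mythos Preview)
Your decomposition is sound and would prove the theorem, but it follows a genuinely different route from the paper. The paper does not argue via uniform concentration over a compact set. Instead it introduces the \emph{regularized population} problem $D_\zeta^\star\defined\min_{\blambda}\BE_{P_X}[\ell(\blambda,X)]$ with $\ell(\blambda,x)\defined D_{\KLs}^\co(\bv(x;\blambda),\bh^\base(x))+\tfrac{\zeta}{2}\|\tilde{\bG}(x)^T\blambda\|_2^2$, and then exploits that each $\blambda\mapsto\ell(\blambda,x)$ is $\zeta$-strongly convex and $A$-Lipschitz (with $A$ determined by $g_{\max}$ and $\lambda_{\max}$). A black-box stochastic-optimization bound (Theorem~13.1 in \cite{hajek2019statistical}) then gives $\BE_X[\ell(\blambda_{\zeta,N}^\star,X)]\le D_\zeta^\star+\tfrac{2A^2}{\delta\zeta N}$, and a separate lemma shows $D_\zeta^\star\le D^\star+\theta_{\text{reg}}\zeta$. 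The trade-off between the $O(\zeta)$ bias and the $O(1/(\zeta N))$ statistical term is what forces the two-sided choice $\zeta=\Theta(N^{-1/2})$; the ADMM error is then added on top via Lipschitzness exactly as you do.

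The practical differences: the paper's argument is shorter and avoids any covering, but it leans on the $\zeta$-strong convexity of the regularized loss, which is precisely why the lower bound on $\zeta$ matters for the statistical term. Your covering-plus-Hoeffding route is more elementary and yields a statistical fluctuation $O(\sqrt{K\log(K/\delta)/N})$ that is \emph{independent} of $\zeta$; you only need $\zeta=O(N^{-1/2})$ for the bias and $\zeta$ bounded below for the ADMM rate in Theorem~\ref{thm:FairProjection}. Your explicit Slater-transfer step (pushing the strict-feasibility margin from $P_X$ to $\empp$ via Hoeffding) is a clean way to anchor $\blambda_{\zeta,N}^\star$ inside a sample-independent ball; the paper handles this localization less explicitly, relying instead on the strong convexity built into the cited stochastic-optimization result.
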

\begin{proof}
See Appendix~\ref{appendix:population}.
\end{proof}
\begin{remark}
Though Theorems~\ref{thm:FairProjection}--\ref{thm:admm convergence kl} are shown for the KL-divergence, the proof directly extends to general $f$-divergences satisfying Assumption~\ref{assumption}. In fact,  Lipschitz continuity of the gradient of $D_{\KLs}^\co$ is the only specific property that we apply to derive the KL-divergence case. For a general $f$-divergence, Lipschitz continuity of $\nabla D_f^\co$ may be derived as follows. Combining Lemmas~\ref{lem:qconj formula}--\ref{lem:Dfconj gradient} reveals the formula $\nabla_{\bv} D_f^\co(\bv,\bp) = \left( p_c \cdot \phi\left( \gamma(\bv) + v_c \right) \right)_{c\in [C]}$, where $\phi=(f')^{-1}$ and $\gamma(\bv)$ is uniquely defined by $\BE_{c\sim \bp}\left[ \phi(\gamma(\bv)+v_c) \right]=1$, with $\bp \in \simp{C}^+$ fixed. Since $\phi'=1/(f''\circ\phi)$, we have that $\phi$ is locally Lipschitz. From the proof of Theorem~5 in~\cite{ISIT_extended}, we know that $\bv \mapsto \gamma(\bv)$ is locally Lipschitz. Thus, $\bv \mapsto \nabla_{\bv} D_f^\co(\bv,\bp)$ is locally Lipschitz. Further, $\blambda \mapsto \nabla_{\bv} D_f^\co(\bv(x;\blambda),\bp)$ is then also locally Lipschitz. Note that we may restrict $\blambda$ \emph{a priori} to be within some finite ball (see Lemma~\ref{lem:lambda}). Thus, if, e.g., $X$ is compactly-supported, we would obtain the desired Lipschitzness properties of the gradient of $D_f^\co$, and the proofs of Theorems~\ref{thm:FairProjection}--\ref{thm:admm convergence kl} carry through for $D_f$ in place of $D_{\KLs}$.
\end{remark}

\paragraph{Benefit of parallelization.} The parallelizability of \methodname~provides significant speedup. In Appendix~\ref{appendix:runtime}, we provide an ablation study comparing the speedup due to parallelization. For the ENEM dataset (discussed next section), parallelization yields a $15$-fold reduction in runtime. 
In addition to the parallel advantage of \methodname, its inherent mathematical approach is more advantageous than gradient-based solutions. When numerically solving the dual problem~\eqref{eq:MP dual D N} (or any close variant) via gradient methods, the gradient of $D_f^\co$ (the convex conjugate of an $f$-divergence) must be computed. However, this gradient is tractable in only a very limited number of relevant instances of $f$-divergences. \methodname~tackles this intractability through having its subroutines be informed by Lemma~\ref{lem:dfconj 1d} and the discussion preceding it.

\section{Numerical benchmarks} \label{sec:applications}

We present empirical results and show that \texttt{FairProjection} has competitive performance both in terms of runtime and fairness-accuracy trade-off curves compared to  benchmarks---most notably \citep{agarwal2018reductions}, which requires retraining. Extensive additional benchmarks and experiment details are reported in Appendix~\ref{appendix:exps}.

\paragraph{Setup.} 
We consider three base classifiers (\texttt{Base}): gradient boosting (GBM), logistic regression (LR), and random forest (RF), implemented by Scikit-learn~\cite{pedregosa2011scikit}.
For \texttt{FairProjection} (the constrained optimization in \eqref{eq:MP alt direct}), we use cross-entropy (\texttt{FairProjection}\texttt{-CE}) and KL-divergence (\texttt{FairProjection}\texttt{-KL}) as the loss function.\footnote{We focus on  \texttt{FairProjection}\texttt{-CE} and random forest here; results for \texttt{FairProjection}\texttt{-KL} and other models are in Appendix~\ref{appendix:exps}.} 
We consider two fairness constraints: mean equalized odds (MEO) and statistical parity (SP) (cf. Table~\ref{table:Fair}). 
Particularly, to measure MEO, we define:
\begin{equation}\label{eq:multi-meo}
    \textsf{MEO} = \max_{i \in \mathcal{Y}} \max_{s_1, s_2 \in \mathcal{S}} (| \textsf{TPR}_i(s_1) - \textsf{TPR}_i(s_2)| + | \textsf{FPR}_i(s_1) - \textsf{FPR}_i(s_2)|) /2
\end{equation}
where
$\textsf{TPR}_i(s) = P(\widehat{Y} = i | Y = i, S=s)$, and $\textsf{FPR}_i(s) = P(\widehat{Y} = i | Y \neq i, S=s)$. 
The definition of statistical parity is provided in Appendix~\ref{appendix:acc-fairness-multi}.
All values reported in this section are from the test set with 70/30 train-test split. 
When benchmarking against methods tailored to binary classification, we restrict our results to both binary $Y$ and $S$ since, unlike \texttt{FairProjection}, competing methods cannot necessarily handle multi-class predictions and multiple groups.

\paragraph{Datasets.} 
We evaluate \texttt{FairProjection} and all benchmarks on four datasets.
We use two datasets in the education domain: the high-school longitudinal  study (HSLS) dataset~\cite{ingels2011high,jeong2022aaai} and a novel dataset we introduce here called ENEM~\cite{enem2017} (details in Appendix \ref{appendix:datasets}). 
The ENEM dataset contains Brazilian college entrance exam scores along with student demographic information and socio-economic questionnaire answers (e.g., if they own a computer). After pre-processing, the dataset contains $\sim$1.4 million samples with 139 features. Race was used as the group attribute $S$, and  Humanities exam score is used as the label $Y$. The score can be quantized into an arbitrary number of classes. For binary experiments, we quantize $Y$ into two classes, and for multi-class, we quantize it to 5 classes. The race feature $S$ has 5 categories, but we binarize it into White and Asian ($S=1$) and others ($S=0$). 
We call the entire ENEM dataset ENEM-1.4M. We also created smaller versions of the dataset with 50k samples: ENEM-50k-2C (binary classes) and ENEM-50k-5C (5 classes).\footnote{A   datasheet (see \cite{gebru2021datasheets})  for ENEM is given in Appendix~\ref{appendix:datasheet}.}  
For completeness, we report results on 
UCI Adult~\citep{Lichman:2013} and COMPAS~\citep{angwin2016machine}.

\paragraph{Benchmarks.} We compare our method with five existing fair learning algorithms: \texttt{Reduction} \citep{agarwal2018reductions}, reject-option classifier \citep[\texttt{Rejection}]{kamiran2012reject}, equalized-odds~\citep[\texttt{EqOdds}]{hardt2016equality}, calibrated equalized-odds~\cite[\texttt{CalEqOdds}]{pleiss2017fairness}, and leveraging equal opportunity~\cite[\texttt{LevEqOpp}]{chzhen2019leveraging}.\footnote{https://github.com/lucaoneto/NIPS2019\_Fairness.} 
The choice of benchmarks is based on the availability of reproducible codes.
For the first four baselines, we use IBM AIF360 library~\citep{bellamy2018ai}.
For \texttt{Reduction} and \texttt{Rejection}, we vary the tolerance to achieve different operation points on the fairness-accuracy trade-off curves. 
As \texttt{EqOdds}, \texttt{CalEqOdds} and \texttt{LevEqOpp} only allow hard equality constraint on equalized odds, they each produce a single point on the plot (see Fig.~\ref{fig:binary-benchmarks}).
We include the group attribute as a feature in the training set following the same benchmark procedure described in \cite{agarwal2018reductions, wei2021optimized} for a consistent comparison. 
Additional comparisons to~\cite{kim2020fact} are given in Appendix~\ref{appendix:acc-fairness-binary}.

\begin{figure}[t!]
\centering
\includegraphics[width=\textwidth]{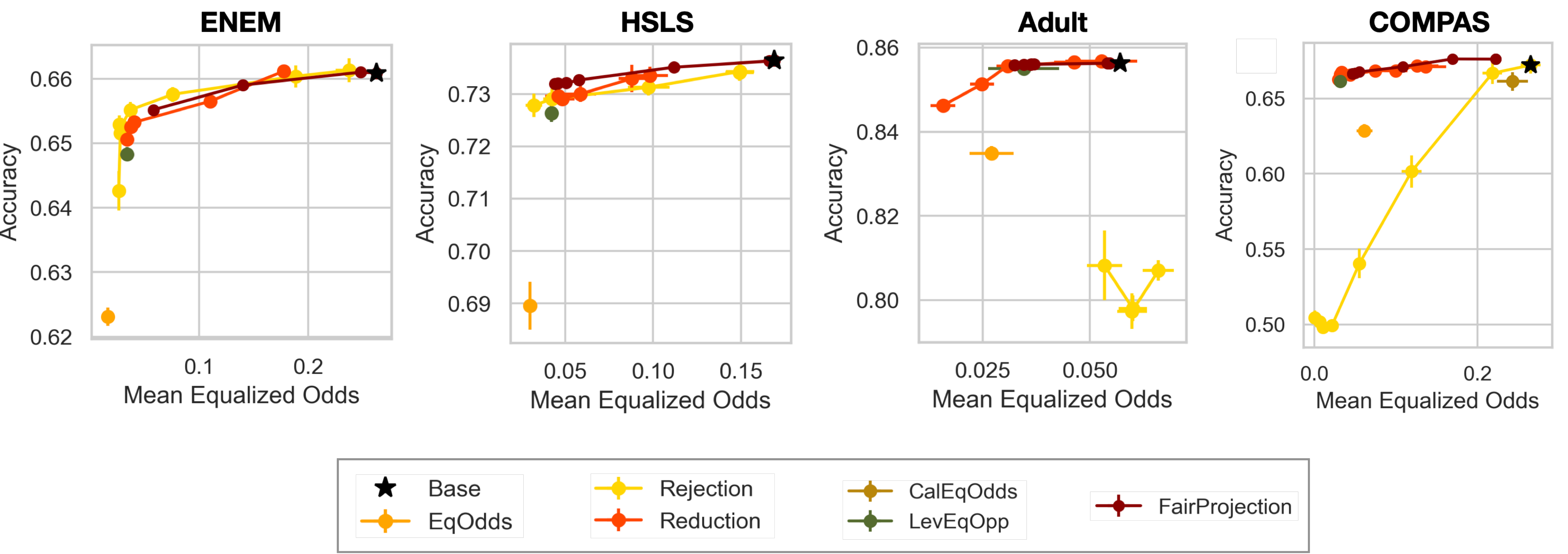}
\caption{Fairness-accuracy trade-off comparisons between \texttt{FairProjection} and five baselines on ENEM-50k-2C, HSLS, Adult and COMPAS datasets. For all methods, we used random forest as a base classifier.}
\label{fig:binary-benchmarks}
\end{figure}

\paragraph{Binary classification results.} \label{exp:af}
We compare \texttt{FairProjection}~with  benchmarks tailored to binary classification in terms of the MEO-accuracy trade-off on the ENEM-50k-2C, HSLS, Adult, and COMPAS datasets in Fig.~\ref{fig:binary-benchmarks}. 
Each point is obtained by averaging 10 runs with different train-test splits.  
\texttt{FairProjection}\texttt{-CE} curves were obtained by varying $\alpha$ values (cf.~Table~\ref{table:Fair}).
When $\alpha = 1.0$, the outputs of \texttt{FairProjection}\texttt{-CE} are equivalent to the base classifier RF.

We observe that \texttt{FairProjection}\texttt{-CE} and \texttt{Reduction} have the overall best and most consistent performances. On ENEM-50k-2C and HSLS datasets, although \texttt{EqOdds} achieves the best fairness, that fairness comes at the cost of $4\%$ accuracy drop (additively). 
The other four methods, on the other hand, produce comparatively good fairness with an accuracy loss of $<1\%$. 
In particular, \texttt{FairProjection}\texttt{-CE} has the smallest accuracy drop whilst improving MEO from 0.17 to 0.04 on HSLS.  
\texttt{CalEqOdds} requires strict calibration requirements and yields inconsistent performance when these requirements are not met.  
On ENEM-50k-2C and HSLS, \texttt{LevEqOpp} achieves comparable MEO with a slight accuracy drop, and on COMPAS, \texttt{LevEqOpp} performs equally well as \texttt{FairProjection}\texttt{-CE} and \texttt{Reduction}. 
Note that with high fairness constraints (i.e., small tolerance), the accuracy of \texttt{Rejection}  deteriorates.

\paragraph{Multi-Class results.}
We illustrate how \texttt{FairProjection} performs on multi-class prediction using  ENEM-50k-5C. 
In Figure~\ref{fig:multi}, we plot fairness-accuracy trade-off of \texttt{FairProjection}\texttt{-CE} with logistic regression  and adversarial debiasing~\cite[\texttt{Adversarial}]{zhang2018mitigating}. As their base classifiers are different (\texttt{Adversarial} is a GAN-based method), we plot accuracy difference compared to the base classifier instead of plotting the absolute value of accuracy.\footnote{Base accuracy for \texttt{FairProjection}$\ = 0.336$, \texttt{Adversarial}$\ =0.307$. Random guessing accuracy$\ =0.2$.}  \texttt{FairProjection} reduces MEO significantly with very small loss in accuracy. While \texttt{Adversarial} is also able to reduce MEO with negligible accuracy drop, it does not reduce the MEO as much as \texttt{FairProjection}. We show more extensive results with multi-group and multi-class ($|\calY| =5, = |\calS| = 5$) in Appendix~\ref{appendix:acc-fairness-multi}.

\begin{figure}
    \centering
    \includegraphics[width=0.35\columnwidth]{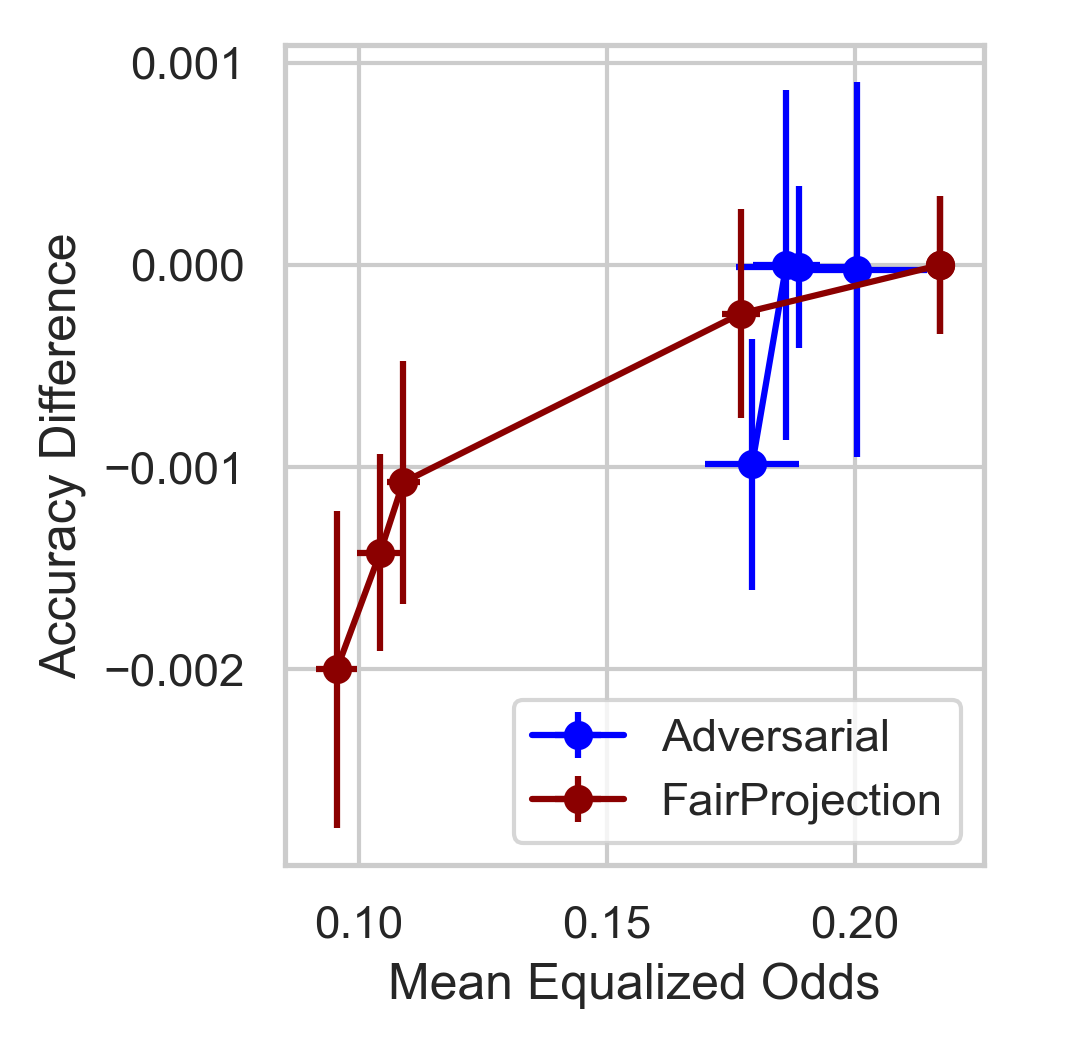}
    \caption{Fairness-accuracy trade-off for multi-class prediction on ENEM-50k-5C. FairProjection is
    \texttt{FairProjection}\texttt{-CE} with LR base classifier.}
    \label{fig:multi}
\end{figure}

\paragraph{Runtime comparisons.}\label{exp:runtime} 
In Table~\ref{table:runtime}, we record the runtime of \texttt{FairProjection-CE} and\texttt{-KL} with the five benchmarks on ENEM-1.4M-2C. 
These experiments were run on a machine with AMD Ryzen 2990WX 64-thread 32-Core CPU and NVIDIA TITAN Xp  12-GB GPU. 
For consistency, we used the same fairness metric (MEO, $\alpha = 0.01$), base classifier (GBM), and train/test split, and each number is the average of 2 repeated experiments.
\texttt{EqOdds}, \texttt{LevEqOpp}, and \texttt{CalEqOdds} are faster than \texttt{FairProjection} since they are optimized to produce one trade-off point (cf.~Fig.~\ref{fig:binary-benchmarks}).
Compared to baselines that  produce  full fairness-accuracy trade-off curves (i.e., \texttt{Reduction} and \texttt{Rejection}), \texttt{FairProjection} has the fastest runtime. 
Note that the 11.6 mins reported here for \texttt{FairProjection-KL} includes the time to fit the base classifiers. If base classifiers are pre-trained, the runtime of \texttt{FairProjection-KL} is \textbf{1.63 mins}. Also, the non-parallel implementation of \texttt{FairProjection-KL} takes 25.3 mins---parallelization attains 15$\times$ speedup (detailed results in Appendix~\ref{appendix:runtime}).

\begin{table}[t]
\footnotesize
\centering
\begin{tabular}{cccccccc}
\toprule
\multirow{2}{*}{\textbf{Method}} & 
\textit{\textbf{Reduction}} & \textbf{Rejection} & EqOdds & LevEqOpp & CalEqOdds & \multicolumn{2}{c}{\textit{\textbf{FairProjection (ours)}}}  \\
& \cite{agarwal2018reductions}  &  \cite{kamiran2012reject} & \cite{hardt2016equality} & \cite{chzhen2019leveraging} & \cite{pleiss2017fairness}  & CE & KL \\
\toprule
\textbf{Runtime} & 223.6  & 16.9  & 5.9 & 7.9 & 5.3 & 11.3 & 11.6 \\
\bottomrule
\end{tabular}
\vspace{2mm}
\caption{Execution time of \texttt{FairProjection} on the ENEM-1.4M-2C compared with five baseline methods (time shown in minutes). Methods in \textbf{bold} are capable of producing the full fairness-accuracy trade-off curves. Methods that are \textit{italicized} have a uniformly superior performance.}
\label{table:runtime}
\end{table}

\section{Final remarks and limitations}

We introduce a theoretically-grounded and versatile fairness intervention method, \methodname, and showcase its favorable performance in extensive experiments. We encourage the reader to peruse our theoretical result in Appendix~\ref{appendix:theory} and extensive additional numerical benchmarks in Appendix~\ref{appendix:exps}.  \methodname~is able to correct bias for multigroup/multiclass datasets, and it enjoys a fast runtime thanks to its parallelizability. We also evaluate our method on the ENEM dataset (see Appendix~\ref{appendix:datasheet} for a detailed description of the dataset). Our benchmarks are a step forward in moving away from the overused COMPAS and UCI Adult datasets.

We only consider group-fairness, and it would be interesting to try to incorporate other fairness notions (e.g., individual fairness~\cite{dwork2012fairness}) into our formulation. We assume that $\bh^\base$ is a pre-trained accurate (and potentially unfair) classifier; one future research direction is understanding how the accuracy of $\bh^\base$ influences the performance of the projected classifier.   
Finally, the performance of \methodname~is inherently constrained by data availability. Performance may degrade with intersectional increases of the number of groups, the number of labels, and the number of fairness constraints.

\clearpage
\bibliography{main}
\bibliographystyle{alpha}

\clearpage
\appendix
\section*{Appendix}

This appendix is divided into three parts: Appendix~\ref{appendix:theory}: Proofs of theoretical results; Appendix~\ref{appendix:exps}: More details on the experimental setup, additional quantitative experiments, and more qualitative comparisons with related work; and Appendix~\ref{appendix:datasheet}: A datasheet for ENEM (2020) dataset.

\section{Proofs of theoretical results} \label{appendix:theory}

The theoretical details of our work are included in this appendix. We prove the strong duality stated in Theorem~\ref{thm:hopt} in Appendix~\ref{appendix:thm duality}. Algorithm~\ref{alg:MP} is derived in Appendix~\ref{appendix:ADMM iterations}. The inner iterations of Algorithm~\ref{alg:MP} are further developed in Appendices~\ref{appendix:vupdate}--\ref{appendix:softmax}. The convergence rate results in Theorems~\ref{thm:FairProjection}--\ref{thm:admm convergence kl} are proved in Appendices~\ref{appendix:rate}--\ref{appendix:population}. Explicit formulas for the $\bG$ matrix induced by the fairness metrics in Table~\ref{table:Fair} are given in Appendix~\ref{appendix:fair}.

\subsection{Proof of Theorem~\ref{thm:hopt}: strong duality} \label{appendix:thm duality}

We use the following minimax theorem, which is a generalization of Sion's minimax theorem.

\begin{theorem}[{\cite[Chapter VI, Prop. 2.2]{EkelandIvar1999Caav}}] \label{thm:minimax}
Let $V$ and $Z$ be two reflexive Banach spaces, and fix two convex, closed, and non-empty subsets $\calA \subset V$ and $\calB\subset Z$. Let $L:\calA\times \calB \to \BR$ be a function such that for each $u\in \calA$ the function $p\mapsto L(u,p)$ is concave and upper semicontinuous, and for each $p\in \calB$ the function $u\mapsto L(u,p)$ is convex and lower semicontinuous. Suppose that there exist points $u_0 \in \calA$ and $p_0 \in \calB$ such that $\lim_{p \in \calB, \|p\|\to \infty} L(u_0,p) = -\infty$ and $\lim_{u\in \calA, \|u\|\to \infty} L(u,p_0) = \infty$. Then, $L$ has at least one saddle-point $(\overline{u},\overline{p})$, and
\begin{equation} \label{eq:minimax}
    L(\overline{u},\overline{p}) = \min_{u\in \calA} \ \sup_{p\in \calB} \ L(u,p) = \max_{p\in \calB} \ \inf_{u\in \calA} \ L(u,p).
\end{equation}
In particular, in~\eqref{eq:minimax}, there exists a minimizer in $\calA$ of the outer minimization, and a maximizer in $\calB$ of the outer maximization.
\end{theorem}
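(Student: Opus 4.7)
The plan is to reduce the theorem to Sion's classical minimax theorem applied on weakly compact convex subsets, and then pass to a limit using the two coercivity hypotheses. Two tools from Banach-space geometry are pivotal: in a reflexive space, closed bounded convex sets are weakly compact (Kakutani), and by Mazur's lemma a convex norm-lower-semicontinuous function is automatically weakly lower-semicontinuous. I introduce the value functions $\Phi(u) := \sup_{p \in \calB} L(u,p)$ and $\Psi(p) := \inf_{u \in \calA} L(u,p)$. Then $\Phi$ is convex and norm-lsc (so weakly lsc), and $\Psi$ is concave and norm-usc (so weakly usc).

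The first step produces extremizers. The hypothesis $L(u,p_0)\to +\infty$ as $\|u\|\to\infty$ forces $\Phi(u)\ge L(u,p_0)\to\infty$, so the sublevel sets of $\Phi$ intersected with $\calA$ are norm-bounded, convex, and weakly closed, hence weakly compact by reflexivity; since $\Phi$ is weakly lsc, it attains its minimum at some $\bar u\in\calA$. Dually, $\Psi$ attains its maximum at some $\bar p\in\calB$. Set $\alpha:=\Phi(\bar u)$ and $\beta:=\Psi(\bar p)$. Weak duality gives $\alpha\ge\beta$, and the remaining task is the reverse inequality; once $\alpha=\beta$ is established, the chain $\beta=\Psi(\bar p)\le L(\bar u,\bar p)\le \Phi(\bar u)=\alpha$ collapses to equalities and exhibits $(\bar u,\bar p)$ as the claimed saddle point.

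The core step applies Sion's minimax on bounded subsets of $\calB$. Fix $R>0$ large enough that $\bar p$ and $p_0$ both lie in $\overline{B_Z(0,R)}$, and set $\calB_R:=\calB\cap \overline{B_Z(0,R)}$, a nonempty weakly compact convex set. Equip $Z$ with its weak topology (making $\calB_R$ compact); then $L$ satisfies Sion's hypotheses on $\calA\times\calB_R$, being convex and weakly lsc in $u$ and concave and weakly usc in $p$. Sion's theorem thus yields
\begin{equation*}
    \beta \;=\; \Psi(\bar p) \;=\; \max_{p\in\calB_R}\Psi(p) \;=\; \max_{p\in\calB_R}\inf_{u\in\calA} L(u,p) \;=\; \inf_{u\in\calA}\Phi_R(u),
\end{equation*}
where $\Phi_R(u) := \max_{p\in\calB_R} L(u,p) \le \Phi(u)$. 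Because $p_0\in\calB_R$, we have $\Phi_R\ge L(\cdot,p_0)$, so $\Phi_R$ is coercive on $\calA$ with sublevel sets $\{\Phi_R\le\beta+1\}\subset\{L(\cdot,p_0)\le\beta+1\}$, a single norm-bounded set \emph{uniform in $R$}. By reflexivity and weak lsc of $\Phi_R$, the infimum is attained at some $u_R$ lying in this uniformly bounded set.

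To close the argument I pass to the limit $R\to\infty$: extract a weakly convergent subsequence $u_{R_k}\rightharpoonup u^*\in\calA$ (which is weakly closed). For any fixed $p\in\calB$, $p\in\calB_{R_k}$ once $R_k$ is large enough, so $L(u_{R_k},p)\le \Phi_{R_k}(u_{R_k})=\beta$; weak lsc of $L(\cdot,p)$ (Mazur applied to the convex norm-lsc function $L(\cdot,p)$) then gives $L(u^*,p)\le \liminf_k L(u_{R_k},p)\le \beta$. Taking the supremum over $p\in\calB$ yields $\Phi(u^*)\le\beta$, whence $\alpha=\inf_\calA\Phi\le\Phi(u^*)\le\beta$, and combined with weak duality $\alpha=\beta$. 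The hardest step is this final passage to the limit: it requires simultaneously (i) the Sion identity on $\calB_R$ to pin the restricted infimum at exactly $\beta$, (ii) the uniform coercivity of $\Phi_R$ inherited from $p_0\in\calB_R$ to prevent the sequence $u_R$ from escaping to infinity, and (iii) Mazur-type weak lower semicontinuity to transfer the pointwise bound $L(u_{R_k},p)\le\beta$ through the weak limit. This is the one place where reflexivity, weak compactness, and \emph{both} coercivity hypotheses of the theorem are all needed simultaneously.
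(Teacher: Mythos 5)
The paper offers no proof of this statement: it is imported verbatim from Ekeland--Temam (Chapter VI, Proposition 2.2) and used as a black box in the proof of Theorem~2, so there is no in-paper argument to compare against; your write-up is therefore judged on its own. It is correct. Your route --- truncate $\calB$ to the weakly compact convex slices $\calB_R=\calB\cap\overline{B_Z(0,R)}$, apply Sion's theorem there (legitimate: Sion needs only one of the two sets to be compact, and Mazur upgrades convex norm-semicontinuity to weak semicontinuity), and then use the coercivity hypothesis at $p_0\in\calB_R$ to trap the restricted minimizers $u_R$ in the single bounded set $\{L(\cdot,p_0)\le\beta\}$ before extracting a weak limit --- is the same ``compactness by truncation plus passage to the limit'' strategy as Ekeland--Temam's own proof, except that they truncate \emph{both} $\calA$ and $\calB$ and track saddle points of the truncated problems, whereas you truncate only $\calB$, first establish equality of the values $\alpha=\beta$, and then read off the saddle point from the collapse of the chain $\Psi(\bar p)\le L(\bar u,\bar p)\le\Phi(\bar u)$; your one-sided version is slightly leaner and every step (Sion on $\calA\times\calB_R$ in the weak topology, the uniform bound on $u_R$, Eberlein--\v{S}mulian, weak sequential lower semicontinuity of $L(\cdot,p)$) checks out. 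The only point left implicit is properness: for $\Phi$ to attain its minimum and $\Psi$ its maximum you need $\Phi(u_0)<+\infty$ and $\Psi(p_0)>-\infty$, which follow by one more application of the direct method you already use ($L(u_0,\cdot)$ is weakly usc and eventually negative outside a ball, hence bounded above on $\calB$; dually for $L(\cdot,p_0)$), so this is a one-line patch rather than a gap --- and in fact your limit point $u^*$ already furnishes the minimizer of $\Phi$ independently, though the existence of $\bar p$ is genuinely used to pin $\max_{\calB_R}\Psi$ at exactly $\beta$.
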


Denote $\bh_i \defined \bh(X_i)$, $\bp_i \defined \bh^{\base}(X_i)$, $\ba_i \defined \ba(X_i)$, and $\bG_i\defined \bG(X_i)$, and let the matrix $\bcalG_N \defined  \left( \bG_1/\sqrt{N},\cdots, \bG_N/\sqrt{N},\bI_K \right)\in \BR^{K\times (NC+K)}$ be as in the theorem statement. We may rewrite the optimization~\eqref{eq:MP alt direct} as
\begin{equation} \label{eq:opt samples}
\begin{aligned}
    \underset{\substack{(\bh_i,\ba_i,\bb)\in \simp{C}\times \BR^C\times \BR^K, i\in [N] }}{\textup{minimize}} \qquad & \frac1N \sum_{i\in [N]} D_f\left( \bh_i \| \bp_i \right) +\tau_1 \cdot \left( \|\ba_i\|_2^2 + \|\bb\|_2^2 \right)  \\
    \textup{subject to}   ~ \quad \qquad \qquad & \frac{1}{N}\sum_{i\in [N]} \bG_i \bh_i + \tau_2 \cdot \left( \bG_i \ba_i - \bb \right) \le \mathbf{0}.
\end{aligned}
\end{equation}
We define $f$ at $0$ by the right limit $f(0)\defined f(0+)$. Assume for now that $f(0+)<\infty$, and we will explain at the end of this proof how to treat the case $f(0+)=\infty$. For the optimization problem~\eqref{eq:opt samples}, the Lagrangian $L:\simp{C}^N\times \BR^{NC}\times \BR^K \times \BR_{+}^K\to \BR$ is given by
\begin{equation}
\begin{aligned}
    L\left( \left( \bh_i \right)_{i\in [N]}, \left( \ba_i \right)_{i\in [N]}, \bb, \blambda \right) \defined \frac1N \sum_{i\in [N]} D_f\left( \bh_i \| \bp_i \right) &+\tau_1 \left( \|\ba_i\|_2^2 + \|\bb\|_2^2 \right) \\
    &+ \blambda^T \left( \bG_i \bh_i + \tau_2 \left( \bG_i \ba_i - \bb \right) \right).
\end{aligned}
\end{equation}
With $\bv(x;\blambda)\defined -\bG(x)^T\blambda$ as in the theorem statement, and denoting $\bv_i \defined \bv(X_i;\blambda)=-\bG_i^T\blambda$, we may rewrite the Lagrangian as
\begin{equation}
\begin{aligned}
    L\left( \left( \bh_i \right)_{i\in [N]}, \left( \ba_i \right)_{i\in [N]}, \bb, \blambda \right) = \frac1N  \sum_{i\in [N]}  D_f\left( \bh_i \| \bp_i \right) &- \bv_i^T\bh_i +\tau_1 \|\ba_i\|_2^2 - \tau_2 \bv_i^T \ba_i \\
    &+ \tau_1\|\bb\|_2^2 - \tau_2 \blambda^T\bb. 
\end{aligned}
\end{equation}
The optimization problem~\eqref{eq:opt samples} can be written as
\begin{equation}
    \inf_{(\bh_i,\ba_i,\bb)\in \simp{C}\times \BR^C\times \BR^K, i\in [N] } \ \sup_{\blambda \in \BR_+^K} \  L\left( \left( \bh_i \right)_{i\in [N]}, \left( \ba_i \right)_{i\in [N]}, \bb, \blambda \right).
\end{equation}
We check that the Lagrangian $L$ satisfies the conditions in Theorem~\ref{thm:minimax}. First, any Euclidean space $\BR^M$ (for $M\in \BN$) is a reflexive Banach space since it is finite-dimensional. In addition, the convex nonempty sets $\simp{C}^N \times \BR^{NC}\times \BR^K$ and $\BR_+^K$ are closed in their respective ambient Euclidean spaces. By continuity and convexity of $f$, and linearity of $L$ in $\blambda$, we have that $L$ satisfies all the convexity, concavity, and semicontinuity conditions in Theorem~\ref{thm:minimax}. Further, fixing any $\bh_i \in \simp{C}$, $i\in [N]$, and letting $\ba_i = \mathbf{0}$, $i\in [N]$, and $\bb = \frac{1}{\tau_2}\left( \mathbf{1} + \frac{1}{N} \sum_{i\in [N]} \bG_i \bh_i \right)$, we would get that
\begin{equation}
    L\left( \left( \bh_i \right)_{i\in [N]}, \left( \ba_i \right)_{i\in [N]}, \bb, \blambda \right) = -\blambda^T \mathbf{1} + \frac{1}{N} \sum_{i\in [N]} D_f\left( \bh_i \| \bp_i \right) + \tau_1 \|\bb\|_2^2 \to - \infty \quad \text{as} \quad \|\blambda\|_2 \to \infty.
\end{equation}
In addition, choosing $\blambda = \mathbf{0}$, we have the Lagrangian
\begin{equation}
\begin{aligned}
    L\left( \left( \bh_i \right)_{i\in [N]}, \left( \ba_i \right)_{i\in [N]}, \bb, \blambda \right) = \frac1N  \sum_{i\in [N]}  D_f\left( \bh_i \| \bp_i \right) & +\tau_1 \|\ba_i\|_2^2 + \tau_1\|\bb\|_2^2  \to \infty
\end{aligned}
\end{equation}
as $\|\bb\|_2 + \sum_{i\in [N]} \|\bh_i\|_2 + \|\ba_i\|_2 \to \infty$. Thus, we may apply the minimax result in Theorem~\ref{thm:minimax} to obtain the existence of a saddle-point of $L$ and that 
\begin{equation} \label{eq:minimax fair}
\begin{aligned}
    \min_{(\bh_i,\ba_i,\bb)\in \simp{C}\times \BR^C\times \BR^K, i\in [N]} \ &\sup_{\blambda \in \BR_+^K} \ L\left( \left( \bh_i \right)_{i\in [N]}, \left( \ba_i \right)_{i\in [N]}, \bb, \blambda \right) \\
    &  = \max_{\blambda \in \BR_+^K} \ \inf_{(\bh_i,\ba_i,\bb)\in \simp{C}\times \BR^C\times \BR^K, i\in [N]} \ L\left( \left( \bh_i \right)_{i\in [N]}, \left( \ba_i \right)_{i\in [N]}, \bb, \blambda \right).
\end{aligned}
\end{equation}
In particular, there exists a minimizer $(\bh_i^{\opt,N},\ba_i^{\opt,N},\bb^{\opt,N})\in \simp{C}\times \BR^C\times \BR^K, i\in [N]$, of the outer minimization in the left-hand side in~\eqref{eq:minimax fair}, and a maximizer $\blambda^\star \in \BR_+^K$ of the outer maximization in the right-hand side of~\eqref{eq:minimax fair}. By strict convexity of the objective function in~\eqref{eq:opt samples} (and convexity of the feasibility set), we obtain that the minimizer $(\bh_i^{\opt,N},\ba_i^{\opt,N},\bb^{\opt,N})\in \simp{C}\times \BR^C\times \BR^K, i\in [N]$, is unique. We show next that the optimizer $\blambda^\star$ is unique too, which we will denote by $\blambda_{\zeta,N}^\star$ as in the theorem statement. We also show that, for each fixed $\blambda\in \BR_+^K$, there is a unique minimizer $(\bh_i^{\blambda},\ba_i^{\blambda},\bb^{\blambda})\in \simp{C}\times \BR^C\times \BR^K, i\in [N]$, of the \emph{inner} minimization in the right-hand side of~\eqref{eq:minimax fair}; by strict convexity of $f$, this would imply that $\bh_i^{\opt,N} = \bh_i^{\blambda^\star_{\zeta,N}}$. 

Now, fix $\blambda \in \BR_+^K$, and consider the inner minimization in~\eqref{eq:minimax fair}. We have that
\begin{align}
    &\inf_{(\bh_i,\ba_i,\bb)\in \simp{C}\times \BR^C\times \BR^K, i\in [N]} \ L\left( \left( \bh_i \right)_{i\in [N]}, \left( \ba_i \right)_{i\in [N]}, \bb, \blambda \right) \nonumber \\
    &=\hspace{-4pt} \inf_{(\bh_i,\ba_i,\bb)\in \simp{C}\times \BR^C\times \BR^K, i\in [N]} \hspace{-2pt} \frac1N  \hspace{-3pt} \sum_{i\in [N]} \hspace{-3pt} D_f\left( \bh_i \| \bp_i \right) - \bv_i^T\bh_i +\tau_1 \|\ba_i\|_2^2 - \tau_2 \bv_i^T \ba_i + \tau_1\|\bb\|_2^2 - \tau_2 \blambda^T\bb \\
    &= \frac1N \sum_{i\in [N]} \inf_{\substack{\bh_i \in \simp{C} }}  D_f(\bh_i\|\bp_i) - \bv_i^T \bh_i + \inf_{\ba_i \in \BR^C} \tau_1 \|\ba_i\|_2^2 - \tau_2 \bv_i^T \ba_i + \inf_{\bb\in \BR^K} \tau_1\|\bb\|_2^2 - \tau_2 \blambda^T\bb \\
    &= \frac{1}{N} \sum_{i\in [N]} - D_f^\co(\bv_i, \bp_i)  - \frac{1}{2}\zeta \|\bv_i\|_2^2 - \frac{1}{2} \zeta \|\blambda\|_2^2 \\
    &= -\frac{\zeta}{2} \left\| \bcalG_N^T \blambda \right\|_2^2 - \frac{1}{N} \sum_{i\in [N]}  D_f^\co(\bv_i, \bp_i) \label{eq:lambda strictly convex}
\end{align}
where $\zeta \defined \tau_2^2/(2\tau_1)$. Here, the minimizers are $\ba_i^{\blambda} \defined \frac{\tau_2}{2\tau_1} \bv_i$ and $\bb_i^{\blambda} \defined \frac{\tau_2}{2\tau_1} \blambda$, and $\bh_i^{\blambda}$ is the unique probability vector in $\simp{C}$ for which $D_f^\co(\bv_i,\bp_i) = D_f(\bh_i^{\blambda}\|\bp_i)-\bv_i^T \bh_i^{\blambda}$; the existence and uniqueness of $\bh_i^{\blambda}$ is guaranteed since $\bq \mapsto D_f(\bq\| \bp_i)-\bv_i^T\bq$ is lower semicontinuous and strictly convex, and $\simp{C}$ is compact. Rewriting it in the form~\eqref{eq:lambda strictly convex}, the function
\begin{equation}
    \blambda \mapsto \inf_{(\bh_i,\ba_i,\bb)\in \simp{C}\times \BR^C\times \BR^K, i\in [N]} \ L\left( \left( \bh_i \right)_{i\in [N]}, \left( \ba_i \right)_{i\in [N]}, \bb, \blambda \right)
\end{equation}
can be seen to be strictly concave. Indeed, the function $\blambda \mapsto \left\| \bcalG_N^T \blambda \right\|_2^2$ is strictly convex. Also, each function $\blambda \mapsto D_f^\co(\bv_i,\bp_i)$ is convex as it is a pointwise supremum of linear functions: recalling that $\bv_i = -\bG_i^T\blambda$, we have the formula
\begin{equation} \label{eq:dfconj opt}
    D_f^\co(\bv_i,\bp_i) =  \sup_{\bq\in \simp{C}} -\bq^T \bG_i^T\blambda - D_f(\bq \, \| \, \bp_i).
\end{equation}
Hence, the outer maximizer $\blambda^\star$ in~\eqref{eq:minimax fair} is indeed unique, which we denote by $\blambda_{\zeta,N}^\star$. Note that $\blambda_{\zeta,N}^\star$ is the unique solution to the \emph{minimization}~\eqref{eq:MP dual D N}, i.e.,
\begin{equation} 
\blambda_{\zeta,N}^\star =   \Argmin{\blambda \in \BR_+^K} ~   \frac{1}{N} \sum_{i\in [N]}  D_f^\co(\bv_i, \bp_i) + \frac{\zeta}{2} \left\| \bcalG_N^T \blambda \right\|_2^2,
\end{equation}
as stated by the theorem.

Since $\bh^{\opt,N}=\bh^{\blambda_{\zeta,N}^\star}$, the following formula for $\bh^{\blambda}$ (for a general $\blambda \in \BR_+^K$) yields the desired functional form~\eqref{eq:hopt formula} for $\bh^{\opt,N}$ in terms of $\blambda_{\zeta,N}^\star$.

\begin{lemma}[{\cite[Lemma~4]{ISIT_extended}}] \label{lem:qconj formula}
Let $f:[0,\infty)\to \BR \cup\{\infty\}$ be a strictly convex function that is continuously differentiable over $(0,\infty)$ and satisfying $f(0)=f(0+)$, $f(1)=0$, and $f'(0+)=-\infty$. Let $\phi$ denote the inverse of $f'$. Fix $\bp\in \simp{C}^+$ and $\bv\in \BR^C$. Then, the unique minimizer of $\bq \mapsto D_f(\bq \| \bp) - \bv^T \bq$ over $\bq \in \simp{C}$ is given by $q^\star_c = p_c\cdot \phi(\gamma+v_c)$, $c\in [C]$, where $\gamma \in \BR$ is the unique number satisfying $\BE_{c\sim \bp}[\phi(\gamma+v_c)]=1$.
\end{lemma}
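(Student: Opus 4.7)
The plan is to use Lagrangian optimality on the equality-constrained problem while showing that the strong blow-up condition $f'(0^+) = -\infty$ automatically enforces the non-negativity constraints. First, I would establish existence and uniqueness of the minimizer. Since $\bp \in \simp{C}^+$ implies $p_c > 0$ for every $c$, the objective $\bq \mapsto \sum_c p_c f(q_c/p_c) - \bv^T \bq$ is strictly convex on $\simp{C}$ (strict convexity of $f$ is preserved by the affine rescaling $q_c \mapsto q_c/p_c$ and by adding a linear term); it is also lower semicontinuous on the compact set $\simp{C}$ because $f(0) = f(0^+)$. Hence a minimizer exists and is unique.

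Next, I would apply Lagrange multipliers to the equality constraint $\mathbf{1}^T \bq = 1$, keeping the non-negativity constraints separately. The Lagrangian
\begin{equation*}
    L(\bq,\gamma) = \sum_{c \in [C]} p_c f(q_c/p_c) - \sum_{c \in [C]} v_c q_c - \gamma\left(\sum_{c \in [C]} q_c - 1\right)
\end{equation*}
is strictly convex in $\bq$, and its interior stationarity condition reads $f'(q_c/p_c) = \gamma + v_c$. Inverting via $\phi = (f')^{-1}$ yields the candidate $q_c^\star = p_c \cdot \phi(\gamma + v_c)$.

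The crucial point is that the non-negativity constraints are inactive for free. Strict convexity of $f$ and continuous differentiability on $(0,\infty)$ make $f'$ continuous and strictly increasing, so $\phi$ is continuous, strictly increasing, and maps its domain (an interval of the form $(-\infty, L)$ with $L = \sup_{t>0} f'(t) \in (0, \infty]$, using $f'(0^+) = -\infty$) onto $(0,\infty)$. Hence $q_c^\star > 0$ automatically, so no KKT multiplier for $q_c \geq 0$ is needed. It remains to determine $\gamma$ so that $\bq^\star \in \simp{C}$, i.e., $\BE_{c\sim \bp}[\phi(\gamma + v_c)] = 1$. The left side is continuous and strictly increasing in $\gamma$, tends to $0$ as $\gamma \to -\infty$ (since $\phi(-\infty) = 0$), and tends to $+\infty$ as $\gamma$ approaches $L - \min_c v_c$ from below (since $\phi$ blows up at the right endpoint). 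By the intermediate value theorem, there is a unique $\gamma$ solving this equation, and for this $\gamma$ the vector $\bq^\star$ is feasible and satisfies stationarity, hence is the unique minimizer by strict convexity.

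The main obstacle is the careful verification that the non-negativity constraints never bind; this is precisely where the assumption $f'(0^+) = -\infty$ is doing the work, ensuring that the unconstrained stationary point lies in the relative interior of $\simp{C}$ and legitimizing the use of a single scalar Lagrange multiplier. A secondary subtlety is covering the case $L < \infty$ when analyzing monotonicity and the range of $\gamma \mapsto \BE_{c\sim \bp}[\phi(\gamma + v_c)]$, which requires isolating the coordinate $c$ at which $v_c$ is minimal to drive the expectation to $+\infty$.
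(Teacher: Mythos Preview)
The paper does not supply its own proof of this lemma: it is quoted verbatim from \cite[Lemma~4]{ISIT_extended} and used as a black box inside the proof of Theorem~\ref{thm:hopt}. So there is no in-paper argument to compare against; your Lagrangian/KKT sketch is essentially the standard proof one would expect for such a statement, and it is sound in outline.

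Two small corrections to tighten it. First, the claim $L=\sup_{t>0} f'(t)\in(0,\infty]$ is not justified by the stated hypotheses and is false in relevant cases (e.g., for cross-entropy $f(t)=-\log t$ one has $L=0$); all you actually need, and all that follows from strict convexity together with $f'(0^+)=-\infty$, is $L\in(-\infty,\infty]$. Second, there is a sign slip in the endpoint analysis: for $\phi(\gamma+v_c)$ to be defined for every $c$ you need $\gamma+v_c<L$ for all $c$, i.e., $\gamma<L-\max_c v_c$, and as $\gamma\uparrow L-\max_c v_c$ it is the coordinate with $v_c$ \emph{maximal} (not minimal) whose term $\phi(\gamma+v_c)$ blows up. With these fixes your intermediate-value argument for the existence and uniqueness of $\gamma$ goes through unchanged, and the rest of the proof is correct.
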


From Lemma~\ref{lem:qconj formula}, and using $\bv(x;\blambda_{\zeta,N}^\star) = - \bG(x)^T\blambda_{\zeta,N}^\star$ and $\phi=(f')^{-1}$, we get that there exists a uniquely defined function $\gamma:\BX \times \BR^K \to \BR$ for which
\begin{equation}
    \BE_{c\sim \bh^{\base}(x)}\left[ \phi\left( \gamma(x;\blambda_{\zeta,N}^\star)+v_c(x;\blambda_{\zeta,N}^\star) \right) \right] = 1
\end{equation}
for every $x\in \BX$. For this $\gamma$, we know from Lemma~\ref{lem:qconj formula} that 
\begin{equation}
    h_c^{\blambda_{\zeta,N}^\star}(x) = h_c^{\base}(x) \cdot \phi\left( \gamma(x;\blambda_{\zeta,N}^\star)+v_c(x;\blambda_{\zeta,N}^\star) \right)
\end{equation}
for every $c\in [C]$ and $x\in \BX$. Since $\bh^{\opt,N}=\bh^{\blambda_{\zeta,N}^\star}$, we obtain formula~\eqref{eq:hopt formula} for $\bh^{\opt,N}$ in terms of $\blambda_{\zeta,N}^\star$, and the proof of Theorem~\ref{thm:hopt} is complete in the case $f(0+)<\infty$.

Finally, we note how the case $f(0+)=\infty$ is treated, so assume $f(0)=f(0+)=\infty$. The only difference in this case is that the Lagrangian $L$ might attain the value $\infty$, whereas we need it to be $\BR$-valued to apply the minimax result in Theorem~\ref{thm:minimax}. Nevertheless, the only way $L$ can be infinite is if some classifier $\bh_i$ has an entry equal to $0$, in which case the objective function in~\eqref{eq:MP alt direct} (or~\eqref{eq:opt samples}) will also be infinite, so such a classifier can be thrown out without affecting the optimization problem. More precisely, we still have strict convexity and lower semicontinuity of the objective function in~\eqref{eq:opt samples}. Thus, there is a unique minimizer $\bh^{\opt,N}$ of~\eqref{eq:opt samples}. For this optimizer, there must be an $\varepsilon_1>0$ such that $\bh^{\opt,N}(x)\ge \varepsilon_1 \mathbf{1}$ for \emph{every} $x\in \BX$. Thus, the optimization problem~\eqref{eq:opt samples} remains unchanged if $\simp{C}$ is restricted to classifiers bounded away from $0$ by $\varepsilon_1$. Moreover, by the same reasoning, the optimization problem~\eqref{eq:dfconj opt} for finding $D_f^\co$ also remains unchanged if $\simp{C}$ is replaced by the set of classifiers bounded away from $0$ by some $\varepsilon_2>0$ that is \emph{independent} of the $X_i$. Hence, choosing $\varepsilon = \min(\varepsilon_1,\varepsilon_2)>0$, and replacing $\simp{C}$ by $\tilde{\bDelta}_C\defined \{\bq \in \simp{C} \, ; \, \bq \ge \varepsilon \mathbf{1}\}$ in the above proof, we attain the same results for the case $f(0+)=\infty$. 

\begin{remark}
In addition to our fairness problem formulation~\eqref{eq:MP alt direct} being different from that in \cite{ISIT_extended}, we note that our proof techniques are distinct. Indeed, the proofs in~\cite{ISIT_extended} develop several techniques since they are based only on Sion's minimax theorem, precisely because a generalized minimax result such as Theorem~\ref{thm:minimax} is inapplicable in the setup of~\cite{ISIT_extended}. The reason behind this inapplicability is that the ambient Banach space $\calC(\calX,\BR^C)$ is \emph{not} reflexive when $\calX$ is infinite, e.g., when $\calX=\BR^d$ as is assumed in~\cite{ISIT_extended}, whereas it is reflexive in our case as we consider a finite set of samples $\BX\subset \calX$.
\end{remark}

\subsection{Algorithm~\ref{alg:MP}: derivation of the ADMM iterations} \label{appendix:ADMM iterations}

ADMM is applicable to problems taking the form
\begin{equation} \label{eq:ADMM general}
    \begin{aligned}
    \underset{(\bV,\blambda)\in \BR^{V}\times \BR^K}{\text{minimize}} \ \quad & F(\bV) + \psi(\blambda) \\
    \text{subject to} ~~~ \quad & \bA \bV + \bB \blambda = \bbm,
    \end{aligned}
\end{equation}
where $F:\BR^V \to \BR\cup \{\infty\}$ and $\psi:\BR^K\to \BR\cup\{\infty\}$ are closed proper convex functions, and $\bA\in \BR^{U\times V}$, $\bB\in \BR^{U\times K}$, and $\bbm\in \BR^{U}$ are fixed.

We rewrite the convex problem~\eqref{eq:MP dual D N} into the ADMM form~\eqref{eq:ADMM general} as follows. With the samples $X_1,\cdots,X_N \overset{\text{i.i.d.}}{\sim} P_X$ fixed, we denote the following fixed vectors and matrices: for each $i\in [N]$, set
\begin{align}
    \bp_i &\defined \bh^\base(X_i) \in \simp{C}^+ = \{ \bq \in \simp{C} ~ ; ~ \bq > \mathbf{0} \}, \label{eq:p} \\
    \bG_i &\defined \bG(X_i) \in \BR^{K\times C}.
\end{align}
We introduce a variable $\bV \defined (\bv_i)_{i\in [N]} \in \BR^{NC}$ (with components $\bv_i\in \BR^C$), and consider the objective functions
\begin{align}
    F(\bV) &\defined \frac{1}{N} \sum_{i\in [N]}  D_f^\co\left( \bv_i , \bp_i \right) + \frac{\zeta}{2}\left\| \bV \right\|_2^2 , \label{eq:F} \\
    \psi(\blambda) &\defined \BI_{\BR_+^K}(\blambda) + \frac{\zeta}{2} \left\| \blambda \right\|_2^2.
\end{align}
Then, setting\footnote{The prefactor $1/\sqrt{N}$ is unnecessary since $\bbm = \mathbf{0}$, but we introduce it to simplify the ensuing expressions.}
\begin{equation} \label{eq:ADMM constraints}
    \bA = \frac{1}{\sqrt{N}}\bI_{NC}, \enskip \bB = \frac{1}{\sqrt{N}}(\bG_i)_{i\in [N]}^T, \text{ and } \enskip \bbm = \mathbf{0}_{NC},
\end{equation}
our finite-sample problem~\eqref{eq:MP dual D N} takes the ADMM form~\eqref{eq:ADMM general}. 

In addition, this reparametrization allows us to parallelize the ADMM iterations, which we briefly review next. One starts with forming the augmented Lagrangian for problem~\eqref{eq:ADMM general}, $L_\rho:\BR^V \times \BR^K \times \BR^U \to \BR\cup \{\infty\}$, where $\rho>0$ is a fixed \emph{penalty parameter} and $\bU\in \BR^U$ denotes a \emph{dual variable}, by
\begin{equation}
    \begin{aligned}
    L_\rho(\bV,\blambda,\bU) \defined F(\bV) &+ \psi(\blambda) + \bU^T \left( \bA \bV + \bB \blambda - \bbm \right)  + \frac{\rho}{2}\left\| \bA \bV + \bB \blambda - \bbm \right\|_2^2.
    \end{aligned}
\end{equation}
The ADMM iterations then repeatedly update the triplet  after the $t$-th iteration $(\bV^{(t)},\blambda^{(t)},\bU^{(t)})$ into a triplet $(\bV^{(t+1)},\blambda^{(t+1)},\bU^{(t+1)})$ that is given by
\begin{align}
    \bV^{(t+1)} &\in \Argmin{\bV \in \BR^V} ~ L_\rho(\bV,\blambda^{(t)},\bU^{(t)}), \label{eq:admm V} \\
    \blambda^{(t+1)} &\in \Argmin{\blambda \in \BR^V} ~ L_\rho(\bV^{(t+1)},\blambda,\bU^{(t)}), \label{eq:admm lambda} \\
    \bU^{(t+1)} &= \bU^{(t)} + \rho \cdot \left( \bA\bV^{(t+1)} + \bB \blambda^{(t+1)} \right). \label{eq:admm U}
\end{align}
We next instantiate the ADMM iterations to our problem, and we note that we will consider the scaled dual variable $\bW=\sqrt{N}\bU$.

 In our case, the augmented Lagrangian splits into non-interacting components along the $\bv_i$. This splitting allows parallelizability of the $\bV$-update step, which is the most computationally intensive step. Consider a conforming decomposition $\bU= (\bu_i)_{i\in [N]}$ for $\bu_i\in \BR^C$, and let $\bW=\sqrt{N}\bU$. With some algebra, one can show that the ADMM iterations for the ADMM problem specified by~\eqref{eq:F}--\eqref{eq:ADMM constraints} are expressible by\footnote{Note also that in these specific ADMM iterations, unlike in the general ADMM iterations, we write ``$=\mathrm{argmin}$'' as opposed to ``$\in\mathrm{argmin}$'' since strict convexity and coercivity guarantee that a unique minimizer exists (see~\cite{Chen2017} for a case where $\argmin{}$ is empty). Also, we write here $\bq^{(t)T}$ instead of $\left( \bq^{(t)} \right)^T$ for readability.}
\begin{align}
    \bv_i^{(t+1)} &=  \Argmin{\bv \in \BR^C}   ~ D_f^\co\left( \bv,\bp_i \right) + \calR_i^{(t)}(\bv), \hspace{2cm} i\in [N],\label{eq:V Update}\\
    \blambda^{(t+1)} &= \Argmin{\blambda \in \BR_+^K} ~ \blambda^T \bQ \blambda + \bq^{(t)T} \blambda,  \\
    \bw_i^{(t+1)} &= \bw_i^{(t)} + \rho \cdot \left( \bv_i^{(t+1)} + \bG_i^T\blambda^{(t+1)} \right), \hspace{1.9cm} i\in [N], \label{eq:W Update}
\end{align}
where $\calR_i^{(t)}:\BR^C\to \BR$ is the quadratic form
\begin{equation} \label{eq:R}
    \calR_i^{(t)}(\bv) \defined \frac{\rho+\zeta}{2} \|\bv\|_2^2 + \left( \bw_i^{(t)} + \rho \bG_i^T\blambda^{(t)} \right)^T \bv,
\end{equation}
and the fixed matrix $\bQ \in \BR^{K\times K}$ and vectors $\bq^{(t)}\in \BR^K$ are given by
\begin{align}
    \bQ &\defined \frac{\zeta}{2}\bI_K + \frac{\rho}{2N} \sum_{i\in [N]} \bG_i\bG_i^T, \label{eq:Q} \\
    \bq^{(t)} &\defined \frac{1}{N} \sum_{i\in [N]} \bG_i \cdot \left( \bw_i^{(t)} + \bv_i^{(t+1)} \right).
\end{align}
Note that both the first~\eqref{eq:V Update} and last~\eqref{eq:W Update} steps can be carried out for each sample $i\in [N]$ in parallel.

\subsection{The inner iterations: minimizing the convex conjugate of \texorpdfstring{$f$}{f}-divergence} \label{appendix:vupdate}

Only updating the primal-variable $\bv_i$ in Algorithm~\ref{alg:MP}, i.e., solving
\begin{equation} \label{eq:Dfconj min}
    \min_{\bv \in \BR^C} \ D_f^\co(\bv,\bp) + \xi \|\bv\|_2^2 + \ba^T \bv
\end{equation}
for fixed $(\bp,\xi,\ba) \in \simp{C}^+ \times (0,\infty) \times \BR^C$, is a nonstandard task. We propose in this section two approaches to execute this step, which aim at re-expressing the required minimization as either a fixed-point or a root-finding problem. In more detail, if one has access to an explicit formula for the gradient of $D_f^\co$, then one can transform~\eqref{eq:Dfconj min} into a fixed-point equation. This case applies for the KL-divergence, for which $\nabla D_{\KLs}^\co$ is the softmax function (Appendix~\ref{sec:KL vupdate}). Furthermore, for the convergence of the fixed-point iterations, we derive an improved Lipschitz constant for the softmax function in Appendix~\ref{appendix:softmax}. On the other hand, if one does not have a tractable formula for $\nabla D_f^\co$, we propose the reduction provided in Lemma~\ref{lem:dfconj 1d}, whose proof is provided in Appendix~\ref{sec:lemma1d proof}. We specialize the reduction provided by Lemma~\ref{lem:dfconj 1d} to the cross-entropy case in Appendix~\ref{sec:CE vupdate}. Finally, we include in Appendix~\ref{appendix:Dfconj gradient} a general formula for $\nabla D_f^\co$ that can be used for the $\bv_i$-update step for a general $f$-divergence, and we also utilize it in Appendices~\ref{appendix:rate}--\ref{appendix:population} to prove the convergence rate of Algorithm~\ref{alg:MP} stated in Theorems~\ref{thm:FairProjection}--\ref{thm:admm convergence kl}.

\subsubsection{Primal update for KL-divergence} \label{sec:KL vupdate}

Consider the case when the $f$-divergence of choice is the KL-divergence, i.e., $f(t)=t\log t$. Then, the convex conjugate $D_f^\co$ is given by the log-sum-exp function \cite{donsker1975asymptotic}, namely, for $(\bp,\bv)\in \simp{C}^+\times \BR^C$ we have
\begin{equation}
    D_f^\co(\bv,\bp) = \log \sum_{c\in [C]} p_ce^{v_c}.
\end{equation}
Thus, the first step in a given ADMM iteration, as in~\eqref{eq:V Update} (see also the beginning of the for-loop in Algorithm~\ref{alg:MP}), amounts to solving
\begin{equation} \label{eq:V1}
    \min_{\bv \in \BR^C} ~ \log \sum_{c\in [C]} p_ce^{v_c}  + \xi \|\bv\|_2^2 + \ba^T \bv
\end{equation}
for $\xi \defined \frac{\rho+\zeta}{2} > 0$ and some fixed vectors $(\bp,\ba)\in \simp{C}^+\times \BR^C$; see~\eqref{eq:p},~\eqref{eq:V Update} and~\eqref{eq:R} for explicit expressions. The problem~\eqref{eq:V1} is strictly convex. Further, we may recast this problem, via introducing the variable $\bz \in \BR^C$ by $z_c \defined v_c+\log p_c$, as
\begin{equation}
    \min_{\bz\in \BR^C} ~ \log \sum_{c\in [C]} e^{z_c} + \xi\|\bz\|_2^2 + \bb^T\bz,
\end{equation}
where $b_c = a_c -2\xi\log p_c$ is fixed. To solve this latter problem, it suffices to find a zero of the gradient, which is given by
\begin{equation}
    \nabla_{\bz} \left( \log \sum_{c\in [C]} e^{z_c} + \xi\|\bz\|_2^2 + \bb^T\bz \right) = \sigma(\bz) + 2 \xi \bz + \bb
\end{equation}
where $\sigma:\BR^C \to \simp{C}^+$ denotes the softmax function $\sigma(\bz) \defined \left( \frac{e^{z_{c'}}}{\sum_{c\in [C]} e^{z_c}} \right)_{c'\in [C]}$. Thus, we arrive at the fixed-point problem $\theta(\bz) = \bz$ for the function
\begin{equation} \label{eq:theta}
    \theta(\bz) \defined - \frac{1}{2\xi}\left( \sigma(\bz) + \bb \right).
\end{equation}
We solve $\theta(\bz)=\bz$ using a fixed-point-iteration method, i.e., with some initial $\bz_0$, we iteratively compute the compositions $\theta^{(m)}(\bz_0)$ for $m\in \BN$. This procedure is summarized in Algorithm~\ref{alg:V Update KL}. 

The exponentially-fast convergence of Algorithm~\ref{alg:V Update KL} is guaranteed in view of Lipschitzness of $\theta$ as defined in~\eqref{eq:theta}. Indeed, it is known that the softmax function is $1$-Lipschitz (see, e.g.,~\citep[Prop. 4]{Gao2017}); we improve this Lipschitz constant to $1/2$. This improvement yields a better guarantee on the convergence speed of \methodname. Indeed, as a lower value of the ADMM penalty $\rho$ correlates with a faster convergence, lowering the Lipschitz constant of the softmax function allows us to speed up \methodname~by choosing $\rho>\frac12 - \zeta $ instead of $\rho > 1-\zeta$.

\begin{algorithm}[t]
    \caption{\textbf{:} $\Argmin{\bv\in \BR^C} ~ D_{\mathsf{KL}}^\co(\bv,\bp) + \xi\|\bv\|_2^2 + \ba^T\bv$}
  \label{alg:V Update KL}
\begin{algorithmic}
    \STATE {\bfseries Input:} $\xi>0$, $\bp\in \simp{C}^+$, $\ba,\bv\in \BR^C$.
    \STATE $z_c \leftarrow v_c + \log p_c \hfill c\in [C]$
    \STATE $b_c \leftarrow a_c-2\xi \log p_c \hfill c\in [C]$
  \REPEAT
  \STATE $\bz \leftarrow -\frac{1}{2\xi}\left( \sigma(\bz)+\bb \right)$
  \UNTIL{convergence}
  \vspace{1mm}
  \STATE {\bfseries Output:} $v_c \defined z_c - \log p_c \hfill c\in [C]$
\end{algorithmic}
\end{algorithm}

\subsubsection{Proof of Lemma~\ref{lem:dfconj 1d}: primal update for general \texorpdfstring{$f$}{f}-divergences} \label{sec:lemma1d proof}

The lemma follows by the following sequence of steps:
\begin{align}
    \min_{\bv \in \BR^C} D_f^\co(\bv,\bp) + \xi \|\bv\|_2^2 + \ba^T\bv &\overset{\text{(I)}}{=} \min_{\bv\in \BR^C} \max_{\bq \in \simp{C}} \bq^T\bv - D_f(\bq \, \| \, \bp) + \ba^T\bv + \xi\|\bv\|_2^2 \\
    &\overset{\text{(II)}}{=}  \max_{\bq \in \simp{C}} \min_{\bv\in \BR^C} \bq^T\bv - D_f(\bq \, \| \, \bp) + \ba^T\bv + \xi\|\bv\|_2^2 \\
    &\overset{\text{(III)}}{=} \max_{\bq \in \simp{C}} -D_f(\bq\, \| \, \bp) - \frac{1}{4\xi}\|\ba + \bq\|_2^2 \\
    &= - \min_{\bq \in \simp{C}} D_f(\bq\, \| \, \bp) + \frac{1}{4\xi}\|\ba + \bq\|_2^2 \\
    &= - \min_{\bq \in \BR_+^C} \sup_{\theta \in \BR} D_f(\bq\, \| \, \bp) + \frac{1}{4\xi}\|\ba + \bq\|_2^2 + \theta\cdot \left(\mathbf{1}^T\bq - 1 \right) \\
    &\overset{\text{(IV)}}{=} -\sup_{\theta \in \BR} \min_{\bq \in \BR_+^C}  D_f(\bq\, \| \, \bp) + \frac{1}{4\xi}\|\ba + \bq\|_2^2 + \theta\cdot \left(\mathbf{1}^T\bq - 1 \right) \\
    &\overset{\text{(V)}}{=} - \sup_{\theta \in \BR} - \theta + \sum_{c\in [C]} \min_{q_c\ge 0} ~ p_cf\left( \frac{q_c}{p_c} \right) + \frac{1}{4\xi}(a_c+q_c)^2 + \theta q_c,
\end{align}
where (I)~holds by definition of $D_f^\co$ (see~\eqref{eq:Dfconj definition}), (II)~by Sion's minimax theorem, (III)~since the inner minimization occurs at $\bv=-\frac{1}{2\xi}(\bq+\ba)$, (IV)~by generalized minimax theorems~\citep[see, e.g.,~Chapter~VI,~Proposition~2.2 in][]{EkelandIvar1999Caav} (restated as Theorem~\ref{thm:minimax} herein for convenience), and (V)~by separability. 

\subsubsection{Primal update for cross-entropy} \label{sec:CE vupdate}

In the cross-entropy (CE) case, i.e., $f(t)=-\log t$, instead of using an explicit formula for $D_f^\co$ (which would yield unwieldy expressions), we utilize the reduction shown in Lemma~\ref{lem:dfconj 1d}. Thus, we have the equality
\begin{equation}
    \min_{\bv \in \BR^C} D_f^\co(\bv,\bp) + \xi \|\bv\|_2^2 + \ba^T\bv = - \sup_{\theta \in \BR} - \theta + \sum_{c\in [C]} \min_{q_c\ge 0} ~ p_cf\left( \frac{q_c}{p_c} \right) + \frac{1}{4\xi}(a_c+q_c)^2 + \theta q_c. \label{eq:CE dual}
\end{equation}
As per~\eqref{eq:CE dual}, we focus next on solving the inner single-variable minimization
\begin{equation}
    \min_{q\ge 0} -p\log q + \frac{1}{4\xi}(a+q)^2 + \theta q.
\end{equation}
It is easily seen that the solution to this minimization is the unique point making the objective's derivative vanish, i.e., it is $q^\star \in (0,\infty)$ for which
\begin{equation}
    -\frac{p}{q^\star} + \frac{q^\star}{2\xi} + \theta + \frac{a}{2\xi} = 0.
\end{equation}
This is easily solvable as a quadratic, yielding
\begin{equation}
    q^\star = \sqrt{\left( \theta \xi + \frac{a}{2} \right)^2 + 2p\xi} - \left( \theta \xi + \frac{a}{2} \right).
\end{equation}

Therefore, solving~\eqref{eq:CE dual} amounts to finding the constant $\theta\in \BR$ that yields a probability vector $\bq\in \simp{C}$, where
\begin{equation}
    q_c \defined  \sqrt{\left( \theta \xi + \frac{a_c}{2} \right)^2 + 2p_c\xi} - \left( \theta \xi + \frac{a_c}{2} \right).
\end{equation}
Consider the function
\begin{equation}
    g(z) \defined -1 + \sum_{c\in [C]} \sqrt{\left( z + \frac{a_c}{2} \right)^2 + 2p_c\xi} - \left( z + \frac{a_c}{2} \right),
\end{equation}
so we simply are looking for a root of $g$ (then set $\theta = z/\xi$ and $\bv = -\frac{1}{2\xi}(\bq+\ba)$). This can be efficiently accomplished via Newton's method. Namely, we compute
\begin{equation}
    g'(z) = -C + \sum_{c\in [C]} \frac{2z+a_c}{\sqrt{\left( z + \frac{a_c}{2} \right)^2 + 2p_c\xi}},
\end{equation}
then, starting from $z^{(0)}$, we form the sequence
\begin{equation}
    z^{(t+1)} \defined z^{(t)} - \frac{g\left( z^{(t)} \right)}{g'\left( z^{(t)} \right)}.
\end{equation}
This procedure is summarized in Algorithm~\ref{alg:V Update CE}.

\begin{algorithm}[t]
    \caption{\textbf{:} $\Argmin{\bv\in \BR^C} ~ D_{\mathsf{CE}}^\co(\bv,\bp) + \xi \|\bv\|_2^2 + \ba^T \bv$}
  \label{alg:V Update CE}
\begin{algorithmic}
    \STATE {\bfseries Input:} $\xi>0$, $z\in \BR$, $\bp\in \simp{C}^+$, $\ba\in \BR^C$.
    \vspace{1mm}
  \REPEAT
  \STATE ${\displaystyle g(z) \leftarrow -1 + \hspace{-2.5pt} \sum_{c\in [C]}\hspace{-2.5pt} \sqrt{\left( z + \frac{a_c}{2} \right)^2 + 2p_c\xi} - \left( z + \frac{a_c}{2} \right)}$ 
  \STATE ${\displaystyle g'(z) \leftarrow  -C +  \sum_{c\in [C]} \frac{2z+a_c}{\sqrt{\left( z + \frac{a_c}{2} \right)^2 + 2p_c\xi}}}$
  \STATE ${\displaystyle z \leftarrow z - \frac{g(z)}{g'(z)}}$
  \vspace{2mm}
  \UNTIL{convergence}
  \STATE {\bfseries Output:} ${\displaystyle v_c \defined \frac{1}{2\xi}\left( z - \frac{a_c}{2} - \sqrt{\left( z + \frac{a_c}{2} \right)^2 + 2p_c\xi} \right)}$
\end{algorithmic}
\end{algorithm}

\subsubsection{On the gradient of the convex conjugate of \texorpdfstring{$f$}{f}-divergence} \label{appendix:Dfconj gradient}

The following general result on the differentiability of $D_f^\co$ can be used to carry out the $\bv_i$-update step for a general $f$-divergence, and it will also be useful in Appendices~\ref{appendix:rate}--\ref{appendix:population} for proving the convergence rate of Algorithm~\ref{alg:MP} as stated in Theorems~\ref{thm:FairProjection}--\ref{thm:admm convergence kl}.

\begin{lemma} \label{lem:Dfconj gradient}
Suppose $f:(0,\infty)\to \BR$ is strictly convex. For any fixed $\bp \in \simp{C}^+$, the function $\bv\mapsto D_f^\co(\bv,\bp)$ is differentiable, and its gradient is given by
\begin{equation} \label{eq:Dfconj gradient}
    \nabla_{\bv} D_f^\co(\bv,\bp) = \bq_f^\co(\bv,\bp) \in \simp{C},
\end{equation}
where
\begin{equation} \label{eq:qconj def}
    \bq_f^\co(\bv,\bp) \defined \Argmin{\bq\in \simp{C}} ~ D_f(\bq \, \| \, \bp) - \bv^T\bq. 
\end{equation}
\end{lemma}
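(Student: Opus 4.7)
The plan is to apply the standard envelope-theorem (a.k.a.\ Danskin) argument for the convex conjugate, specialized to the compact feasible set $\simp{C}$. Concretely, writing
\begin{equation*}
D_f^\co(\bv,\bp) \;=\; \sup_{\bq \in \simp{C}} \ F(\bv,\bq), \qquad F(\bv,\bq) \defined \bv^T\bq - D_f(\bq\,\|\,\bp),
\end{equation*}
I would first check that the supremum is attained at a unique point, which is exactly the content of the $\mathrm{argmin}$ in~\eqref{eq:qconj def}: for fixed $\bp \in \simp{C}^+$, the map $\bq \mapsto D_f(\bq\,\|\,\bp)$ is strictly convex and lower semicontinuous on the compact set $\simp{C}$ (strict convexity of $f$ on $(0,\infty)$ transfers to $D_f(\cdot\,\|\,\bp)$ on the relative interior, and the boundary values are handled by the usual $0\cdot f(0/p)$ convention together with lower semicontinuity). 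Hence $\bq_f^\co(\bv,\bp)$ exists and is unique, and evidently lies in $\simp{C}$.

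Next, I would obtain the gradient by a direct two-sided inequality, which is cleaner than invoking Danskin's theorem as a black box. For any $\bv,\bv' \in \BR^C$, the defining inequality of the supremum gives
\begin{equation*}
D_f^\co(\bv',\bp) \;\ge\; F(\bv',\bq_f^\co(\bv,\bp)) \;=\; D_f^\co(\bv,\bp) + (\bv'-\bv)^T\bq_f^\co(\bv,\bp),
\end{equation*}
and symmetrically with $\bv$ and $\bv'$ swapped, yielding
\begin{equation*}
(\bv'-\bv)^T\bq_f^\co(\bv,\bp) \;\le\; D_f^\co(\bv',\bp) - D_f^\co(\bv,\bp) \;\le\; (\bv'-\bv)^T\bq_f^\co(\bv',\bp).
\end{equation*}
This already shows that $\bq_f^\co(\bv,\bp)$ is a subgradient of the convex function $\bv\mapsto D_f^\co(\bv,\bp)$, and moreover sandwiches the difference quotient between two vectors that coincide in the limit $\bv'\to \bv$ provided one knows $\bv\mapsto \bq_f^\co(\bv,\bp)$ is continuous.

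The only remaining ingredient, and the main technical point to be careful about, is continuity of $\bv\mapsto \bq_f^\co(\bv,\bp)$. I would establish this via Berge's maximum theorem: the constraint correspondence is the constant compact-valued map $\bv\mapsto \simp{C}$, and $F$ is jointly continuous on $\BR^C\times \simp{C}$ (note that $D_f(\cdot\,\|\,\bp)$ is continuous on $\simp{C}$ under our conventions since $\bp \in \simp{C}^+$), so the argmax correspondence is upper hemicontinuous; uniqueness of the maximizer then upgrades this to ordinary continuity of $\bv\mapsto \bq_f^\co(\bv,\bp)$. Feeding this continuity into the sandwich above yields
\begin{equation*}
\lim_{\bv'\to \bv} \frac{D_f^\co(\bv',\bp) - D_f^\co(\bv,\bp) - (\bv'-\bv)^T \bq_f^\co(\bv,\bp)}{\|\bv'-\bv\|_2} = 0,
\end{equation*}
which is the claimed differentiability together with the gradient formula~\eqref{eq:Dfconj gradient}. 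The only subtlety to watch out for is the boundary behavior of $D_f$ on $\partial \simp{C}$ when $f(0^+) = +\infty$: in that case the minimizer is automatically pushed to $\simp{C}^+$, and both the uniqueness and the Maximum Theorem arguments go through verbatim after restricting to any sufficiently small compact neighborhood of $\bq_f^\co(\bv,\bp)$ in $\simp{C}^+$.
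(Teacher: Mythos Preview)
Your argument is correct and complete, but it takes a genuinely different route from the paper's own proof. The paper dispatches the lemma in three lines by citing two standard convex-analysis facts: (i) for a lower semicontinuous proper convex function $g$, the subdifferential of its conjugate $g^*$ at $\bv$ equals the set $\argmax_{\bq}\,\bv^T\bq - g(\bq)$ (Rockafellar--Wets, Prop.~11.3), and (ii) a convex function is differentiable at a point precisely when its subdifferential there is a singleton. Combining these with the observation that $\bq\mapsto D_f(\bq\,\|\,\bp)-\bv^T\bq$ is strictly convex and lower semicontinuous on the compact set $\simp{C}$ (hence has a unique minimizer) finishes the proof immediately.

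Your approach instead builds the gradient formula from first principles: the two-sided sandwich inequality pins down $\bq_f^\co(\bv,\bp)$ as a subgradient, and then continuity of $\bv\mapsto \bq_f^\co(\bv,\bp)$ (via Berge's maximum theorem) closes the sandwich to $o(\|\bv'-\bv\|)$. This is the classical Danskin/envelope argument and is more elementary and self-contained---no need to know the Rockafellar result or the singleton-subdifferential characterization---at the cost of being longer and requiring the localization caveat you flag in the $f(0^+)=\infty$ case. That caveat is handled correctly but somewhat tersely; a cleaner way to get continuity of the argmin without invoking Berge (and without worrying about joint continuity of $F$ at the boundary) is the direct subsequence argument: if $\bv_n\to\bv$ and $\bq_n\defined\bq_f^\co(\bv_n,\bp)$, compactness of $\simp{C}$ plus lower semicontinuity of $D_f(\cdot\,\|\,\bp)$ and optimality of each $\bq_n$ force every cluster point of $(\bq_n)$ to be a minimizer at $\bv$, hence equal to $\bq_f^\co(\bv,\bp)$ by uniqueness. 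Either way, both proofs arrive at the same conclusion; the paper's is shorter by leaning on citations, yours is more transparent about the underlying mechanism.
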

\begin{proof}
From~\citep[Proposition~11.3]{RockafellarR.Tyrrell2009Va}, since $\bq \mapsto D_f(\bq \, \| \, \bp)$ is a lower semicontinuous proper convex function, the subgradient of its convex conjugate $\bv \mapsto D_f^\co(\bv,\bp)$ is given by
\begin{equation} \label{eq:rhs}
    \partial_{\bv} D_f^\co(\bv,\bp) = \Argmin{\bq\in \simp{C}} ~ D_f(\bq \, \| \, \bp) - \bv^T\bq.
\end{equation}
Recall also that a function is differentiable at a point if and only if its subgradient there consists of a singleton \cite{borwein1987differentiability}. Thus, it only remains to show that the right-hand side in~\eqref{eq:rhs} is a singleton. For this, we note that $\bq \mapsto  D_f(\bq \, \| \, \bp) - \bv^T\bq$ is lower semicontinuous and strictly convex, and $\simp{C}$ is compact.
\end{proof}

\subsection{\texorpdfstring{\nicefrac{1}{2}}{1/2}-Lipschitzness of the Softmax Function} \label{appendix:softmax}

As stated in Section~\ref{sec:ADMM} and Appendix~\ref{sec:KL vupdate}, the convergence speed of the inner iteration (the $\bv_i$ update step) of \methodname~can be guaranteed to be faster if the Lipschitz constant of the softmax function is lowered from $1$ (which is proved in~\citep[Prop. 4]{Gao2017}). By Lipschitzness here, we mean $\ell_2$-norm Lipschitzness. We prove the following proposition in this appendix.

\begin{proposition} \label{prop:softmax}
For any $n\in \BN$, the softmax function $\sigma(\bz) \defined \left( \frac{e^{z_j}}{\sum_{i=1}^n e^{z_i}} \right)_{j\in [n]}$ is $\frac{1}{2}$-Lipschitz.
\end{proposition}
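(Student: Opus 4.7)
The plan is to reduce the Lipschitz estimate to an operator-norm bound on the Jacobian and then exhibit that bound via a probabilistic interpretation together with Popoviciu's inequality on variance. Since $\sigma$ is $\calC^\infty$, the mean value inequality on line segments gives
\begin{equation*}
    \|\sigma(\bz)-\sigma(\bz')\|_2 \le \left( \sup_{\bw \in \BR^n} \|J(\bw)\|_{op} \right) \|\bz-\bz'\|_2,
\end{equation*}
where $J(\bw) = (\partial \sigma_i/\partial w_j)_{i,j}$, so it suffices to prove $\|J(\bw)\|_{op} \le 1/2$ uniformly in $\bw$.

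A direct computation yields the well-known identity $J(\bw) = \diag(\bs) - \bs\bs^T$ with $\bs\defined \sigma(\bw)\in \simp{n}^+$. This matrix is exactly the covariance matrix of the indicator vector $(\mathbf{1}\{X=i\})_{i\in [n]}$ where $X$ is the categorical random variable with $P(X=i)=s_i$; in particular, $J(\bw)$ is symmetric positive semidefinite, so its operator norm equals $\sup_{\bu\ne 0}\bu^T J(\bw)\bu/\|\bu\|_2^2$. For any $\bu\in \BR^n$, the quadratic form rewrites as
\begin{equation*}
    \bu^T J(\bw)\bu = \sum_{i\in [n]} s_i u_i^2 -\Bigl(\sum_{i\in [n]} s_i u_i\Bigr)^2 = \mathrm{Var}(u_X).
\end{equation*}

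Now the main estimate proceeds in two steps. First, Popoviciu's inequality on variance gives $\mathrm{Var}(u_X)\le \tfrac14 (M-m)^2$, where $M\defined \max_i u_i$ and $m\defined \min_i u_i$. Second, picking indices $i^\star,j^\star$ achieving the max and min, the elementary identity $(M-m)^2 + (M+m)^2 = 2(M^2+m^2)$ yields $(M-m)^2 \le 2(u_{i^\star}^2 + u_{j^\star}^2) \le 2\|\bu\|_2^2$ (the case $i^\star = j^\star$ is trivial since then $\mathrm{Var}(u_X)=0$). Combining the two bounds gives $\bu^T J(\bw)\bu \le \tfrac12 \|\bu\|_2^2$, hence $\|J(\bw)\|_{op} \le 1/2$.

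The only subtlety is that the two basic inequalities have to be combined \emph{simultaneously}, since one gains nothing from a weaker version of either; the sharpness of the chain is witnessed by $\bw = \mathbf{0}$ (with $n=2$, giving $\bs=(1/2,1/2)$) and $\bu = (1,-1)$, which saturates both $(M-m)^2 = 2\|\bu\|_2^2$ and Popoviciu's inequality, and produces the eigenvalue $1/2$ exactly. This confirms that the constant $1/2$ cannot be improved, and concludes the proof.
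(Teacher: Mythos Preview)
Your proof is correct and takes a genuinely different route from the paper. The paper bounds the \emph{Frobenius} norm of the Jacobian by $1/2$ (which a fortiori bounds the operator norm), and does so via a dimension-reduction argument: writing $\|J\|_F^2 = s_2(\bs)^2 + s_2(\bs) - 2s_3(\bs)$ in terms of power sums $s_k(\bs)=\sum_i s_i^k$, the paper shows that merging the two smallest entries of $\bs$ (when their sum is at most $1/2$, which is automatic for $n\ge 4$) can only increase this quantity, thereby reducing to the cases $n\le 3$, which are handled directly. Your approach instead bounds the operator norm by interpreting the Rayleigh quotient $\bu^T J \bu$ as a variance and invoking Popoviciu's inequality together with the elementary bound $(M-m)^2 \le 2\|\bu\|_2^2$. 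Your argument is shorter and more conceptual; the paper's argument, on the other hand, yields the slightly stronger conclusion that even the Frobenius norm is at most $1/2$, though this extra strength is not needed for the Lipschitz statement itself.
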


We will need the following result.

\begin{lemma}[Theorem 2.1.6 in~\cite{Nesterov_book}]
A twice continuously differentiable function $f:\BR^n\to \BR$ is convex and has an $L$-Lipschitz continuous gradient if and only if its Hessian is positive semidefinite with maximal eigenvalue at most $L$.
\end{lemma}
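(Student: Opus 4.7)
The plan is to decompose the iff into its two natural components—convexity $\Leftrightarrow$ positive semidefiniteness of $\nabla^2 f$, and $L$-Lipschitzness of $\nabla f$ $\Leftrightarrow$ maximal eigenvalue of $\nabla^2 f$ bounded by $L$—and prove each in both directions using the second-order Taylor expansion with integral remainder. Throughout, I will exploit the fact that $\nabla^2 f$ is a symmetric matrix (by equality of mixed partials for $C^2$ functions), so its operator norm equals its spectral radius, and being ``PSD with maximal eigenvalue at most $L$'' is equivalent to $0 \preceq \nabla^2 f(x) \preceq L \bI$ for all $x$, which in turn is equivalent to $0 \le \bh^T \nabla^2 f(x) \bh \le L\|\bh\|_2^2$ for every $\bh \in \BR^n$.

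For the first equivalence, I would fix $\bx \in \BR^n$ and $\bh \in \BR^n$, and consider the one-dimensional restriction $g(t) \defined f(\bx + t\bh)$, which is $C^2$ with $g''(t) = \bh^T \nabla^2 f(\bx+t\bh) \bh$. If $f$ is convex, then so is $g$, hence $g''(0) \ge 0$ for every $\bh$, yielding $\nabla^2 f(\bx) \succeq 0$. Conversely, if $\nabla^2 f \succeq 0$ everywhere, then Taylor's theorem gives
\begin{equation*}
    f(\by) = f(\bx) + \nabla f(\bx)^T(\by-\bx) + \int_0^1 (1-t)\, (\by-\bx)^T \nabla^2 f(\bx + t(\by-\bx))(\by-\bx)\, \dif t,
\end{equation*}
and the integrand is nonnegative, so $f(\by) \ge f(\bx) + \nabla f(\bx)^T(\by-\bx)$, which is the first-order characterization of convexity.

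For the second equivalence, I would apply the fundamental theorem of calculus to $t \mapsto \nabla f(\bx + t(\by-\bx))$ to obtain
\begin{equation*}
    \nabla f(\by) - \nabla f(\bx) = \int_0^1 \nabla^2 f(\bx + t(\by-\bx))(\by-\bx)\, \dif t.
\end{equation*}
If every Hessian has operator norm at most $L$, the triangle inequality under the integral immediately yields $\|\nabla f(\by) - \nabla f(\bx)\|_2 \le L\|\by-\bx\|_2$. Conversely, assuming $L$-Lipschitzness of $\nabla f$, I fix $\bx$ and a unit vector $\bh$, and take $\by = \bx + s\bh$ for small $s>0$; dividing $\|\nabla f(\bx + s\bh) - \nabla f(\bx)\|_2 \le Ls$ by $s$ and passing to the limit $s\to 0^+$ produces $\|\nabla^2 f(\bx)\bh\|_2 \le L$, so the operator norm of the symmetric matrix $\nabla^2 f(\bx)$ is at most $L$, i.e., its maximal (in absolute value) eigenvalue is at most $L$; combined with positive semidefiniteness from the first equivalence, this gives $0 \le \lambda_{\max}(\nabla^2 f(\bx)) \le L$.

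The main obstacle is essentially bookkeeping rather than conceptual: ensuring that the limit argument in the forward direction of the second equivalence is rigorous, which uses continuity of $\nabla^2 f$ to identify the limit $\lim_{s\to 0^+} s^{-1}(\nabla f(\bx+s\bh)-\nabla f(\bx))$ with $\nabla^2 f(\bx)\bh$, and carefully noting that the spectral norm and maximum eigenvalue coincide for symmetric PSD matrices so that the ``PSD with max eigenvalue $\le L$'' characterization is equivalent to the operator-norm bound $\|\nabla^2 f(\bx)\|_{\mathrm{op}} \le L$ that the integral representation naturally yields.
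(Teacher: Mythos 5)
Your proof is correct. Note, however, that the paper does not prove this statement at all: it is imported verbatim as Theorem~2.1.6 of Nesterov's book and used as a black box (to justify bounding the Lipschitz constant of the softmax, i.e., of the gradient of log-sum-exp, by the spectral norm of its Hessian). So there is no in-paper argument to compare against; what you have written is the standard self-contained textbook proof, and it is essentially the one found in the cited source. Your two-way decomposition is handled carefully at the one point where it could go wrong: the implication ``$\lambda_{\max}(\nabla^2 f)\le L$ $\Rightarrow$ $\nabla f$ is $L$-Lipschitz'' is false without positive semidefiniteness (a large negative eigenvalue would violate the operator-norm bound), and you correctly route through $0\preceq \nabla^2 f \preceq L\bI$, using PSD-ness from the first equivalence before converting the eigenvalue bound into the operator-norm bound $\|\nabla^2 f(x)\|_{\mathrm{op}}\le L$ needed for the integral estimate. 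The remaining steps (Taylor with integral remainder for convexity, fundamental theorem of calculus applied to $t\mapsto \nabla f(\bx+t(\by-\bx))$ for the gradient bound, and the difference-quotient limit justified by continuity of $\nabla^2 f$ for the converse) are all sound.
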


Since the softmax function is the gradient of the log-sum-exp function, and since the spectral norm is upper bounded by the Frobenius norm, it suffices to upper bound the Frobenius norm of the Jacobian of $\sigma$ by $1/2$. Suppose that $\sigma$ is operating on $n$ symbols. Consider the sum of powers functions $s_k(\bx) \defined \sum_{i\in [n]} x_i^k$ for $\bx \in \BR^n$. For any $\bv\in \BR^n$, denoting $\bx = \sigma(\bv)$, the square of the Frobenius norm of the Jacobian of $\sigma$ at $\bv$ is given by
\begin{equation}
    w(\bx) \defined s_2(\bx)^2 + s_2(\bx) - 2s_3(\bx).
\end{equation}
We show that $w(\bx)\le \frac{1}{4}$ for any $n\in \BN$ and $\bx \in \simp{n}$.

The approach we take is via reduction to the case $n \le 3$, which one can directly verify. Namely, assuming, without loss of generality, that $x_1\le x_2 \le \cdots \le x_n$, we show that if $x_1+x_2\le 1/2$ then $w(\by)\ge w(\bx)$ where $\by\in \simp{n-1}$ is given by $\by = (x_1+x_2,x_3,\cdots,x_n)$. Note that if $n\ge 4$ then we must have $x_1+x_2\le 1/2$, because $x_1+x_2 \le x_3+x_4$ and $x_1+x_2+x_3+x_4\le 1$. Thus, we will have reduced the problem from an $n\ge 4$ to $n-1$, which iteratively reduces the problem to $n\le 3$. Fix $n\ge 4$.

Denote $\bz=(x_3,\cdots,x_n)$. A direct computation yields that
\begin{equation}
\begin{aligned}
    w(\by)-w(\bx) &= 2x_1x_2\cdot \left( 2s_2(\bz) + g(x_1,x_2) \right)
\end{aligned}
\end{equation}
with the quadratic
\begin{equation}
    g(a,b) \defined 2a^2+2b^2 + 2ab - 3a-3b + 1.
\end{equation}
By assumption, $x_i\ge \max(x_1,x_2)$ for each $i\ge 3$, so $2s_2(\bz) \ge (n-2)x_1^2 + (n-2)x_2^2 \ge x_1^2 + x_2^2$. Then,
\begin{equation}
    w(\by)-w(\bx) \ge 2x_1x_2\cdot h(x_1,x_2)
\end{equation}
with
\begin{equation}
    h(a,b) \defined 3a^2+3b^2 + 2ab - 3a-3b + 1.
\end{equation}
Now, we show that $h$ is nonnegative for every $a,b\ge 0$ with $a+b\le 1/2$. With $c=a+b$, we may write
\begin{equation}
    h(a,b) = 3c^2 - (3+4a)c+4a^2+1.
\end{equation}
This quadratic in $c$ has its vertex at $c_{\min} = (3+4a)/6$. As $a\ge 0$, $c_{\min}\ge 1/2$. As $a+b \le 1/2$, we see that the minimum of $h$ is attained for $c=1/2$. Substituting $b=1/2-a$, we obtain
\begin{equation}
    h(a,b) = \left(2a - \frac12 \right)^2,
\end{equation}
which is nonnegative, as desired.

\subsection{Convergence rate of Algorithm~\ref{alg:MP}: proof of Theorem~\ref{thm:FairProjection}} \label{appendix:rate}

We recall a general result on the R-linear convergence rate for ADMM, which corresponds to case 1 in scenario 1 in~\cite{deng2016}; see Tables 1 and 2 therein. Recall that a sequence $\{z^{(t)}\}_{t\in \BN}$ is said to converge R-linearly to $z^\star$ if there is a constant $\eta \in (0,1)$ and a sequence $\{\beta^{(t)}\}_{t\in \BN}$ such that $\|z^{(t)}-z^\star\|\le \beta^{(t)}$ and $\sup_t \left( \beta^{(t+1)}/\beta^{(t)} \right) \le \eta$. In particular, one has exponentially small errors:
\begin{equation}
    \|z^{(t)}-z^\star\| \le \beta^{(0)}\cdot \eta^t.
\end{equation}
The following theorem is used in our proof of Theorem~\ref{thm:FairProjection}.

\begin{theorem}[\cite{deng2016}] \label{thm:ADMM linear}
Suppose that problem~\eqref{eq:ADMM general} has a saddle point, $F$ is strongly convex and differentiable with Lipschitz-continuous gradient, $\bA$ has full row-rank, and $\bB$ has full column-rank. Then, the ADMM iterations~\eqref{eq:admm V}--\eqref{eq:admm U} converge R-linearly to a global optimizer.
\end{theorem}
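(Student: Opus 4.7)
The plan is to prove R-linear convergence by constructing a Lyapunov (potential) function on the iterates that contracts by a fixed factor at every ADMM step. Fix any saddle point $(\bV^\star,\blambda^\star,\bU^\star)$ (existence is hypothesized) and write the errors $\be_V^{(t)}\defined\bV^{(t)}-\bV^\star$, $\be_\lambda^{(t)}\defined\blambda^{(t)}-\blambda^\star$, $\be_U^{(t)}\defined\bU^{(t)}-\bU^\star$. The target one-step inequality will be $\Phi^{(t)}-\Phi^{(t+1)}\ge\delta\,\Phi^{(t+1)}$ for a potential $\Phi^{(t)}\ge 0$ and an explicit $\delta>0$ depending only on the strong-convexity constant $\mu$ and Lipschitz constant $L$ of $\nabla F$, the penalty $\rho$, $\sigma_{\min}(\bA)$, and $\sigma_{\min}(\bB)$.

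First I would record the identities driving the argument. The saddle-point KKT conditions give $\nabla F(\bV^\star)+\bA^T\bU^\star=\mathbf{0}$, $-\bB^T\bU^\star\in\partial\psi(\blambda^\star)$, and $\bA\bV^\star+\bB\blambda^\star=\bbm$. Setting the gradients of the subproblems \eqref{eq:admm V}--\eqref{eq:admm lambda} to zero and folding in the dual update \eqref{eq:admm U} yields
\[
\nabla F\!\left(\bV^{(t+1)}\right)+\bA^T\bU^{(t+1)}=\rho\,\bA^T\bB\!\left(\blambda^{(t+1)}-\blambda^{(t)}\right),\qquad -\bB^T\bU^{(t+1)}\in\partial\psi\!\left(\blambda^{(t+1)}\right).
\]
Subtracting the saddle-point analogs gives two error identities that are the workhorses. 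From these, I would extract three structural inequalities: (i)~co-coercivity of $\nabla F$ (standard for $\mu$-strongly convex functions with $L$-Lipschitz gradient), $\langle\nabla F(\bV^{(t+1)})-\nabla F(\bV^\star),\be_V^{(t+1)}\rangle\ge\tfrac{\mu L}{\mu+L}\|\be_V^{(t+1)}\|_2^2+\tfrac{1}{\mu+L}\|\nabla F(\bV^{(t+1)})-\nabla F(\bV^\star)\|_2^2$; (ii)~monotonicity of $\partial\psi$, giving $\langle\bB^T\be_U^{(t+1)},\be_\lambda^{(t+1)}\rangle\le 0$; and (iii)~the quantitative rank bounds $\|\bA^T\bu\|_2^2\ge\sigma_{\min}(\bA)^2\|\bu\|_2^2$ and $\|\bB\be_\lambda\|_2^2\ge\sigma_{\min}(\bB)^2\|\be_\lambda\|_2^2$ granted by the hypotheses on $\bA$ and $\bB$.

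The technical heart of the argument --- and the step I expect to be the main obstacle --- is the choice of potential and the proof of geometric contraction. The natural candidate is
\[
\Phi^{(t)}\defined\tfrac{1}{\rho}\|\be_U^{(t)}\|_2^2+\rho\,\|\bB\be_\lambda^{(t)}\|_2^2.
\]
Expanding $\|\be_U^{(t+1)}\|_2^2$ via the dual update produces a cross term $\langle\be_U^{(t+1)},\bA\be_V^{(t+1)}+\bB\be_\lambda^{(t+1)}\rangle$, which I would pair with the $\bV$-error identity to introduce $\nabla F$-differences and the $\blambda$-drift $\blambda^{(t+1)}-\blambda^{(t)}$. Co-coercivity supplies a lower bound in $\|\be_V^{(t+1)}\|_2^2$, and carefully calibrated Young-inequality splits absorb the remaining $\|\bA\be_V^{(t+1)}\|_2^2$ and $\|\bB(\blambda^{(t+1)}-\blambda^{(t)})\|_2^2$ terms back into the potential. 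Both full-rank hypotheses are \emph{essential} here: without full row-rank of $\bA$ the $\bV$-slack cannot be converted into dual progress, and without full column-rank of $\bB$ the potential fails to dominate $\|\be_\lambda\|_2^2$. Tuning the Young constants to extract a positive, explicit $\delta$ is the delicate bookkeeping.

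Iterating $\Phi^{(t+1)}\le(1+\delta)^{-1}\Phi^{(t)}$ yields R-linear decay of $\Phi^{(t)}$, hence of $\|\be_U^{(t)}\|_2$ and $\|\bB\be_\lambda^{(t)}\|_2$; full column-rank of $\bB$ upgrades the latter to R-linear decay of $\|\be_\lambda^{(t)}\|_2$. To close the loop on $\be_V^{(t)}$, I would rearrange the $\bV$-error identity to bound $\|\nabla F(\bV^{(t+1)})-\nabla F(\bV^\star)\|_2$ by the already-contracting quantities $\|\be_U^{(t+1)}\|_2$ and $\|\bB(\blambda^{(t+1)}-\blambda^{(t)})\|_2$ (here full row-rank of $\bA$ is used to invert $\bA^T$ on its range), and strong convexity of $F$ converts this into an R-linear bound on $\|\be_V^{(t+1)}\|_2$. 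Together these establish R-linear convergence of the full triple $(\bV^{(t)},\blambda^{(t)},\bU^{(t)})$ to $(\bV^\star,\blambda^\star,\bU^\star)$, which is a global optimizer by the saddle-point property, as claimed.
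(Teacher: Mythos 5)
This theorem is not proved in the paper: it is imported from \cite{deng2016} (case~1 of scenario~1 in their Tables~1 and~2), so there is no in-paper argument to compare against. Your proposal is in effect a reconstruction of the proof in that reference, and it follows the same route: the same error identities obtained by folding the dual update into the two subproblem stationarity conditions, the same Lyapunov function $\Phi^{(t)}=\tfrac{1}{\rho}\|\be_U^{(t)}\|_2^2+\rho\|\bB\be_\lambda^{(t)}\|_2^2$ (Deng--Yin's $\|u^k-u^\star\|_G^2$ with $G=\diag(\rho\bB^T\bB,\tfrac{1}{\rho}\bI)$), the same use of co-coercivity of $\nabla F$ and monotonicity of $\partial\psi$, and the same assignment of roles to the two rank hypotheses. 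The scaffolding is correct.

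Two caveats. First, the step you defer as ``delicate bookkeeping''---calibrating the Young constants so that a strictly positive $\delta$ survives---is where essentially all the content of the theorem lives; as written your argument is a plan rather than a proof, and that computation occupies the bulk of the corresponding argument in \cite{deng2016}. Second, a small directional slip in the last paragraph: bounding $\|\nabla F(\bV^{(t+1)})-\nabla F(\bV^\star)\|_2$ by $\|\be_U^{(t+1)}\|_2$ and the $\blambda$-drift via the $\bV$-error identity needs only the upper bound $\|\bA^T\bu\|_2\le\|\bA\|_2\|\bu\|_2$, not invertibility of $\bA^T$ on its range; full row rank of $\bA$ is needed in the opposite direction, to lower-bound $\|\bA^T\be_U^{(t+1)}\|_2^2\ge\sigma_{\min}(\bA)^2\|\be_U^{(t+1)}\|_2^2$ inside the contraction step so that primal progress forces dual progress---which is the role you correctly assign it earlier, so this is an inconsistency of exposition rather than of substance.
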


In Appendix~\ref{appendix:ADMM iterations}, we show that the dual~\eqref{eq:MP dual D N} of our fairness optimization problem~\eqref{eq:MP alt direct} can be written in the ADMM general form~\eqref{eq:ADMM general} with the choices
\begin{align}
    F(\bV) &= \frac{1}{N} \sum_{i\in [N]}  D_f^\co\left( \bv_i , \bp_i \right) + \frac{\zeta}{2}\left\| \bV \right\|_2^2
\end{align}
and
\begin{equation} 
    \bA = \frac{1}{\sqrt{N}}\bI_{NC}, \enskip \bB = \frac{1}{\sqrt{N}}(\bG_i)_{i\in [N]}^T. 
\end{equation}
Recall from Theorem~\ref{thm:hopt} (see also the proof in Appendix~\ref{appendix:thm duality}) that our problem~\eqref{eq:MP dual D N} has a saddle point. Further, the function $F:\BR^{NC}\to \BR$ is $\zeta$-strongly convex and differentiable. Indeed, each $\bv\mapsto D_f^\co(\bv,\bp_i)$ is convex, and the term $\frac{\zeta}{2}\|\bV\|_2^2$ is $\zeta$-strongly convex, so $F$ is $\zeta$-strongly convex too. In addition, by the formula for $\nabla D_f^\co$ in Lemma~\ref{lem:Dfconj gradient}, the gradient of $F$ is
\begin{equation} \label{eq:F gradient}
    \nabla F(\bV) = \bq_f^\co(\bV) + \zeta \bV,
\end{equation}
where
\begin{equation}
    \bq_f^\co(\bV) \defined \left( \bq_f^\co(\bv_i,\bp_i) \right)_{i\in [N]},
\end{equation}
with $\bq_f^\co(\bv_i)$ as defined in~\eqref{eq:qconj def}.

In the KL-divergence case, i.e., $f(t)=t\log t$, the gradient of $D_f^\co$ is given by the softmax function (see Appendix~\ref{sec:KL vupdate})
\begin{equation}
    \bq_f^\co(\bv,\bp) = \sigma\left( \bv + \log \bp \right) = \left( \frac{p_ce^{v_c}}{\sum_{c'\in [C]} p_{c'}e^{v_{c'}}} \right)_{c\in [C]}.
\end{equation}
Therefore, we have that
\begin{equation}
    \nabla F (\bV) = \left( \sigma\left(\bv_i+\log \bp_i \right) \right)_{i\in [N]} + \zeta \bV.
\end{equation}
By Proposition~\ref{prop:softmax}, the softmax function $\sigma$ is $\frac12$-Lipschitz. Hence, $\nabla F$ is $\left( \frac12 + \zeta \right)$-Lipschitz. 

Adding negligible noise, if necessary, to $\bB = N^{-1/2}\cdot \left( \bG_i \right)_{i\in [N]}^T \in \BR^{NC\times K}$ to make it have full column-rank, the general ADMM convergence rate in Theorem~\ref{thm:ADMM linear} yields that there is a constant $r > 0$ such that
\begin{equation} \label{eq:lambda exponential}
    \left\| \blambda_{\zeta,N}^{(t)} - \blambda_{\zeta,N}^\star \right\|_2 \le \beta \cdot e^{-rt}
\end{equation}
where $\beta \defined \left\| \blambda_{\zeta,N}^{(0)} - \blambda_{\zeta,N}^\star \right\|_2$. (Although Theorem~\ref{thm:ADMM linear} guarantees exponentially-fast convergence of $\blambda_{\zeta,N}^{(t)}$ to \emph{a} global optimizer, recall that $\blambda_{\zeta,N}^\star$ is the \emph{unique} optimizer of~\eqref{eq:MP dual D N}, as Theorem~\ref{thm:hopt} shows.) This exponentially-fast convergence shows that the choice of $t' \propto \log N$ iterations in Algorithm~\ref{alg:MP} yields arbitrary accuracy approximation of $\blambda_{\zeta,N}^\star$ by $\blambda_{\zeta,N}^{(t')}$. 

Finally, it remains to bound the distance between $\bh^{\opt,N}$ and the output classifier $\bh^{(t)}$ after the $t$-th iteration of Algorithm~\ref{alg:MP}. Note that $\phi(u) = (f')^{-1}(u) = e^{u-1}$, so $\gamma$ may be obtained explicitly, and equation~\eqref{eq:hopt formula} becomes
\begin{equation}
    h^{\opt,N}_{c'}(x) = \frac{h^{\base}_{c'}(x) \cdot e^{v_{c'}(x;\blambda_{\zeta,N}^{\star})}}{\sum_{c\in [C]} h^{\base}_c(x) \cdot e^{v_c(x;\blambda_{\zeta,N}^{\star})}}.
\end{equation}
Thus, using $\blambda^{(t)} \defined \blambda_{\zeta,N}^{(t)}$ in place of $\blambda_{\zeta,N}^\star$, we obtain that the $t$-th classifier obtained by Algorithm~\ref{alg:MP} is
\begin{equation}
    h^{(t)}_{c'}(x) = \frac{h^{\base}_{c'}(x) \cdot e^{v_{c'}(x;\blambda^{(t)})}}{\sum_{c\in [C]} h^{\base}_c(x) \cdot e^{v_c(x;\blambda^{(t)})}}.
\end{equation}
Therefore, we have the ratios
\begin{equation} \label{eq:h ratio}
    \frac{h_{c'}^{(t)}(x)}{h_{c'}^{\opt,N}(x)} = \frac{\sum_{c\in [C]} h_c^{\base}(x) e^{v_c(x;\blambda_{\zeta,N}^\star)}}{\sum_{c\in [C]} h_c^{\base}(x) e^{v_c(x;\blambda^{(t)})}} \cdot \exp\left( v_{c'}(x;\blambda^{(t)}) - v_{c'}(x;\blambda_{\zeta,N}^\star) \right).
\end{equation}
By definition of $\bv$, $\bv(x;\blambda) = - \bG(x)^T\blambda$. Thus, we obtain from~\eqref{eq:lambda exponential} and boundedness of $\bG$ that
\begin{equation} \label{eq:v bound}
    \left\| \bv(x;\blambda^{(t)}) - \bv(x;\blambda_{\zeta,N}^\star) \right\|_\infty = O(e^{-t}),
\end{equation}
where the implicit constant is independent of $x$. Applying~\eqref{eq:v bound} in~\eqref{eq:h ratio}, and noting that $e^{e^{-t}}=1+O(e^{-t})$ as $t\to \infty$, we conclude that 
\begin{equation}
    \bh^{(t)}(x) = \bh^{\opt,N}(x) \cdot \left( 1 + O(e^{-t}) \right) 
\end{equation}
uniformly in $x$.

\subsection{Convergence rate to the population problem: proof of Theorem~\ref{thm:admm convergence kl}} \label{appendix:population}

The proof is divided into several lemmas. We note first that, in the course of the proof of Theorems~1 and~2 in~\cite{ISIT}, it was shown that at least one minimizer $\blambda^\star$ of~\eqref{eq:MP dual D} exists. Further, any such minimizer satisfies the following bound. Denote the constraint function by $\bmu(\bh) \defined \BE_{P_X}[\bG \bh]$. Throughout this proof, we set $\calX \defined \BR^d$.

\begin{lemma} \label{lem:lambda}
Suppose Assumption~\ref{assumption} holds, and fix a strictly feasible classifier $\bh\in \calH$, i.e., $\bmu(\bh)<\mathbf{0}$. Every minimizer $\blambda^\star\in \BR_+^K$ of~\eqref{eq:MP dual D} must satisfy the inequality
\begin{equation}
    \left\| \blambda^\star \right\|_1 \le \lambda_{\max} \defined \frac{D_f\left( \bh \, \| \, \bh^\base \mid P_X \right)}{{\displaystyle \min_{k\in [K]}} - \mu_k(\bh)}.
\end{equation}
\end{lemma}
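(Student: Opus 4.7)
My plan is to use the variational definition of the convex conjugate to turn optimality of $\blambda^\star$ into a linear lower bound on $\blambda^{\star T}(-\bmu(\bh))$, and then exploit strict feasibility to pass from this inner product to the $\ell_1$-norm of $\blambda^\star$.

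First I would recall that, by Definition~\ref{def:Dfconj}, for every $x\in\calX$, every $\bh(x)\in\simp{C}$, and every $\blambda\in\BR_+^K$,
\begin{equation*}
D_f^\co\left(\bv(x;\blambda),\bh^\base(x)\right)\ \ge\ \bv(x;\blambda)^T\bh(x)-D_f\!\left(\bh(x)\,\|\,\bh^\base(x)\right)
= -\bh(x)^T\bG(x)^T\blambda - D_f\!\left(\bh(x)\,\|\,\bh^\base(x)\right).
\end{equation*}
Taking $\BE_{P_X}$ of both sides and using the notation $\bmu(\bh)=\BE_{P_X}[\bG\bh]$, this yields
\begin{equation*}
\BE\!\left[D_f^\co\!\left(\bv(X;\blambda),\bh^\base(X)\right)\right]\ \ge\ -\blambda^T\bmu(\bh)\ -\ D_f\!\left(\bh\,\|\,\bh^\base\mid P_X\right).
\end{equation*}

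Next I would specialize this inequality to $\blambda=\blambda^\star$ so the left-hand side equals $D^\star$ by definition~\eqref{eq:MP dual D}, yielding
\begin{equation*}
\blambda^{\star T}(-\bmu(\bh))\ \le\ D_f\!\left(\bh\,\|\,\bh^\base\mid P_X\right)+D^\star.
\end{equation*}
The second key observation is that $D^\star\le 0$: taking $\blambda=\mathbf{0}$ as a feasible dual point gives the upper bound $D^\star\le \BE[D_f^\co(\mathbf{0},\bh^\base(X))]$, and $D_f^\co(\mathbf{0},\bp)=\sup_{\bq\in\simp{C}}-D_f(\bq\,\|\,\bp)=0$ (attained at $\bq=\bp$, using $f(1)=0$). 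Hence
\begin{equation*}
\blambda^{\star T}(-\bmu(\bh))\ \le\ D_f\!\left(\bh\,\|\,\bh^\base\mid P_X\right).
\end{equation*}

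Finally I would turn this inner-product bound into an $\ell_1$-bound using the strict feasibility assumption $-\bmu(\bh)>\mathbf{0}$ and $\blambda^\star\ge\mathbf{0}$: writing
\begin{equation*}
\sum_{k\in[K]}\lambda_k^\star\cdot(-\mu_k(\bh))\ \ge\ \left(\min_{k\in[K]}-\mu_k(\bh)\right)\|\blambda^\star\|_1,
\end{equation*}
and dividing through by the strictly positive quantity $\min_k-\mu_k(\bh)$ gives the claim. The main thing to be careful about is the sign of $D^\star$, which is exactly what lets us drop the additive $D^\star$ from the numerator; the rest is bookkeeping with conjugates and with the $\ell_\infty/\ell_1$ inequality against a strictly positive vector.
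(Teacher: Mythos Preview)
Your proof is correct. The paper does not give its own proof of this lemma but simply cites that the bound was established in the course of the proofs of Theorems~1 and~2 in~\cite{ISIT}; the argument you give---lower bounding the dual objective via the conjugate inequality at a strictly feasible $\bh$, using $D^\star\le 0$ from $\blambda=\mathbf{0}$, and then the $\ell_\infty/\ell_1$ inequality against the strictly positive vector $-\bmu(\bh)$---is precisely the standard Slater-type bound on dual variables and is what underlies the cited result.
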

We note that for the fairness metrics specified in Table~\ref{table:Fair}, one valid choice of a strictly feasible $\bh$ (i.e., one for which $\bmu(\bh)<\mathbf{0}$) is the uniform classifier $\bh(x) \equiv \frac{1}{C}\mathbf{1}$. In any case, we have that $\lambda_{\max}<\infty$ since both $\bh$ and $\bh^\base$ are assumed to belong to $\calH$ and $f$ is continuous over $(0,\infty)$; e.g., one bound on $\lambda_{\max}$ is $\lambda_{\max} \le \max_{m\le t \le M} f(t)/ \min_{k\in [K]} - \mu_k(\bh)$ 
where $m=\inf_{c,x}h_c(x)$ and $M=1/\inf_{x,c} h_c^\base(x)$. We will also need the following constants for the convergence analysis:
\begin{align} 
    g_{\Mean} &\defined \BE\left[\left\| \bG(X) \right\|_2^2 \right], \label{eq:def gmean} \\
    g_{\max} &\defined \sup_{x\in \calX} \left\| \bG(x) \right\|_2^2 \label{eq:def gmax}.
\end{align}
Clearly, $g_{\Mean}\le g_{\max}$. By the boundedness of $\bG$ in the second item in Assumption~\ref{assumption}, $g_{\max}$ is finite.

\begin{remark}
Although the results in this paper are stated to hold under Assumption~\ref{assumption}, we note that those conditions do not essentially impose any restriction on carrying our \methodname~algorithm. Indeed, we focus in this paper on the CE and KL cases, for which $f$ satisfies the imposed conditions. We also note that only boundedness of $\bG$ is required for Theorem~\ref{thm:hopt}, which is true for the fairness metrics in Table~\ref{table:Fair} in non-degenerate cases (e.g., no empty groups). The condition on $\bh^\base$ being bounded away from zero can be made to hold by perturbing it if necessary with negligible noise. The condition on $\bh^\base$ being continuous is automatically satisfied if its domain is a finite set (as is the case for Theorem~\ref{thm:hopt}). Finally, the strict feasibility condition is verified by the uniform classifier.
\end{remark}

Now, consider a form of $\ell_2$ regularization of~\eqref{eq:MP dual D}:
\begin{equation} \label{eq:MP dual regular}
    \min_{\blambda \in \BR_+^K} ~ \BE\left[ D_f^\co\left(\bv(X;\blambda), \bh^\base(X) \right) +\frac{\zeta}{2} \left\| \tilde{\bG}(X)^T \blambda \right\|_2^2 \right]
\end{equation}
where $\tilde{\bG}(x) \defined \left( \bG(x), \bI_K \right) \in \BR^{K\times (K+C)}$. We show now that there is a unique minimizer $\blambda_\zeta^\star$ of~\eqref{eq:MP dual regular}. 

\begin{lemma} \label{lem:lambda zeta}
Under Assumption~\ref{assumption}, there exists a unique minimizer $\blambda_\zeta^\star$ of the regularized problem~\eqref{eq:MP dual regular}.
\end{lemma}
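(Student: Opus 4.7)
The plan is to verify three properties of the objective in~\eqref{eq:MP dual regular}, viewed as a function $\Phi:\BR_+^K\to \BR$, and then invoke Weierstrass together with strict convexity: \textbf{(a)}~$\Phi$ is strictly convex; \textbf{(b)}~$\Phi$ is continuous; \textbf{(c)}~$\Phi$ is coercive, i.e., $\Phi(\blambda)\to\infty$ as $\|\blambda\|_2\to\infty$. Since $\BR_+^K$ is closed and convex, these three properties will deliver both existence and uniqueness.

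\emph{Step 1: strict convexity.} The key algebraic observation is that
\begin{equation}
\left\|\tilde{\bG}(x)^T\blambda\right\|_2^2 = \left\|\bG(x)^T\blambda\right\|_2^2 + \left\|\blambda\right\|_2^2,
\end{equation}
so taking the expectation splits off a clean $\frac{\zeta}{2}\|\blambda\|_2^2$ term, which is strongly (hence strictly) convex in $\blambda$. The remaining pieces are convex: $\bv\mapsto D_f^\co(\bv,\bp)$ is convex as a convex conjugate, composed with the linear map $\blambda\mapsto -\bG(X)^T\blambda$ it stays convex, and expectation preserves convexity; likewise $\BE[\|\bG(X)^T\blambda\|_2^2]$ is a nonnegative quadratic form in $\blambda$, hence convex. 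Adding a strictly convex term to a convex function yields strict convexity.

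\emph{Step 2: continuity.} For fixed $\bp\in\simp{C}^+$, Lemma~\ref{lem:Dfconj gradient} shows that $\bv\mapsto D_f^\co(\bv,\bp)$ is differentiable on all of $\BR^C$, and in particular finite and continuous. Since $\bh^\base$ is continuous and bounded away from the boundary of $\simp{C}$ by Assumption~\ref{assumption}, and $\bG$ is bounded with bounded gradient, the integrand $D_f^\co(\bv(X;\blambda),\bh^\base(X))+\frac{\zeta}{2}\|\tilde{\bG}(X)^T\blambda\|_2^2$ is jointly continuous in $(x,\blambda)$. Restricting $\blambda$ a priori to any bounded set and using boundedness of $\bG$ and $\bh^\base$, one obtains an integrable dominating bound, so dominated convergence yields continuity of $\Phi$ in $\blambda$.

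\emph{Step 3: coercivity.} By definition of the convex conjugate, taking $\bq=\bh^\base(x)$ in the sup gives
\begin{equation}
D_f^\co(\bv(X;\blambda),\bh^\base(X)) \ge \bv(X;\blambda)^T\bh^\base(X) = -\blambda^T\bG(X)\bh^\base(X),
\end{equation}
so, using boundedness of $\bG$ and $\|\bh^\base(X)\|_1=1$,
\begin{equation}
\BE\left[D_f^\co(\bv(X;\blambda),\bh^\base(X))\right] \ge - c\,\|\blambda\|_2
\end{equation}
for a constant $c$ independent of $\blambda$. Combined with $\BE\|\tilde{\bG}(X)^T\blambda\|_2^2 \ge \|\blambda\|_2^2$, this gives $\Phi(\blambda)\ge \tfrac{\zeta}{2}\|\blambda\|_2^2 - c\,\|\blambda\|_2 \to\infty$.

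\emph{Conclusion.} By Steps 2--3 and closedness of $\BR_+^K$, Weierstrass guarantees that $\Phi$ attains its infimum on $\BR_+^K$; by Step 1, the minimizer $\blambda_\zeta^\star$ is unique. I expect the only subtle point to be the uniform integrability needed for continuity in Step 2; everything else follows directly from Assumption~\ref{assumption} and the decomposition $\|\tilde{\bG}^T\blambda\|_2^2=\|\bG^T\blambda\|_2^2+\|\blambda\|_2^2$, which is the algebraic reason the unregularized problem~\eqref{eq:MP dual D} need not have a unique solution while~\eqref{eq:MP dual regular} does.
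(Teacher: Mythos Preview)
Your proof is correct and follows the same high-level strategy as the paper: establish (strong) convexity from the $\frac{\zeta}{2}\|\blambda\|_2^2$ piece of the regularizer, then obtain existence by a compactness/coercivity argument. The technical execution differs in two small ways. First, the paper only proves \emph{lower} semicontinuity via Fatou's lemma, whereas you obtain full continuity by dominated convergence; your route is slightly more informative but requires the uniform bound you flag, which indeed holds because $\bG$ is bounded and $\bh^\base$ lies in a compact subset of $\simp{C}^+$, so $D_f^\co(\bv(x;\blambda),\bh^\base(x))$ is uniformly bounded for $\blambda$ in any fixed ball. Second, for existence the paper does not write down an explicit coercivity bound; instead it argues that any $\blambda$ with $A(\blambda)<A(\mathbf{0})$ must satisfy $\|\blambda\|_2^2 < 2(A(\mathbf{0})-D^\star)/\zeta$ (using $D_f^\co\ge D^\star$), hence the minimizer over a suitably large compact ball is already global. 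Your explicit lower bound $D_f^\co(\bv,\bp)\ge \bv^T\bp$ (taking $\bq=\bp$ in the conjugate) is more elementary and self-contained, since it avoids any appeal to the population optimum $D^\star$.
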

\begin{proof}
Denote the function $A:\BR_+^K \to \BR$ by
\begin{equation}
    A(\blambda) \defined \BE\left[ D_f^\co\left(\bv(X;\blambda), \bh^\base(X) \right) +\frac{\zeta}{2} \left\| \tilde{\bG}(X)^T \blambda \right\|_2^2 \right].
\end{equation}
That the range of $A$ falls within $\BR$ follows by Assumption~\ref{assumption}, since then the function $x\mapsto D_f^\co(\bv(x;\blambda),\bh^\base(x))$ is $P_X$-integrable. We will show that $A$ is lower semicontinuous and $\zeta$-strongly convex. 

By Lemma~\ref{lem:Dfconj gradient}, $\bv\mapsto D_f^\co(\bv,\bp)$ is differentiable for any fixed $\bp \in \simp{C}^+$, implying that it is also continuous. Thus, $\blambda \mapsto D_f^\co(\bv(x;\blambda),\bh^\base(x))$ is continuous for each $x\in \calX$. Hence, by Fatou's lemma and boundedness of $\bG$, $A$ is lower semicontinuous. 

Next, to show strong convexity, we note that $\blambda \mapsto D_f^\co(\bv(x;\blambda),\bh^\base(x))$ is convex for each $x\in \calX$. Indeed, this function is the supremum of affine functions. Further, the regularization term is $\zeta$-strongly convex, as its Hessian is given by
\begin{equation}
    \zeta \cdot \left( \BE\left[ \tilde{\bG}(X)\tilde{\bG}(X)^T \right] + \bI \right),
\end{equation}
which is positive definite with minimal eigenvalue at least $\zeta$.

Now, for each fixed $\theta>0$, consider the compact set $\Lambda_\theta \defined \{\blambda \in \BR_+^K ~ ; ~ \|\blambda\|_2^2 \le \theta \}$. By what we have shown thus far, there is a unique minimizer $\blambda_\theta$ of $A$ over $\Lambda_\theta$. By strong convexity, if $A$ has a global minimizer then it is unique. We will show that $\blambda_\theta$ is a global minimizer of $A$, where $\theta = 2(A(\mathbf{0})-D^\star)/\zeta$. Suppose that $\mathbf{0}$ is not a global minimzer. Fix $\blambda\in \BR_+^K$ such that $A(\mathbf{0})>A(\blambda)$. Then,
\begin{equation}
    A(\mathbf{0}) > A(\blambda) \ge D^\star + \frac{\zeta}{2} \left( \BE\left[ \left\| \bG(X)^T\blambda \right\|_2^2 \right] + \left\| \blambda \right\|_2^2 \right) \ge D^\star + \frac{\zeta}{2} \left\| \blambda \right\|_2^2.
\end{equation}
Thus, $\|\blambda\|_2^2<\theta$. This implies that $\blambda_\theta$ is a global minimizer of $A$, hence it is the unique global minimizer of $A$. The proof of the lemma is thus complete.
\end{proof}

The following bound shows that $\blambda_{\zeta}^\star$ is within $O(\zeta)$ of achieving $D^\star$ (see~\eqref{eq:MP dual D}).
\begin{lemma} \label{lem:bound zeta}
Suppose Assumption~\ref{assumption} holds, fix $\zeta\ge 0$, and denote the unique solution and the optimal objective value of~\eqref{eq:MP dual regular} by $\blambda_\zeta^\star$ and $D_\zeta^\star$, respectively. We have the bounds
\begin{align}
    \BE\left[ D_f^\co\left(\bv(X;\blambda_\zeta^\star), \bh^\base(X) \right) \right] \le D_\zeta^\star  \le D^\star + \theta_{\textup{reg}}\cdot \zeta, \label{eq:bound D zeta}
\end{align}
where we define the constant $\theta_{\textup{reg}}\defined \lambda_{\max}^2\cdot (1+g_{\Mean})/2$. 
\end{lemma}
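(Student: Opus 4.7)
The plan is to establish both inequalities directly from the definitions, leveraging only the nonnegativity of the regularization term and the a~priori bound on $\blambda^\star$ provided by Lemma~\ref{lem:lambda}. Neither direction requires strong convexity arguments or perturbation analysis; both are one-line consequences once the right test point is chosen.

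For the left inequality, I would note that the regularizer $\frac{\zeta}{2}\|\tilde{\bG}(x)^T\blambda\|_2^2$ is pointwise nonnegative, so for the specific choice $\blambda=\blambda_\zeta^\star$ the unregularized expectation is at most the full regularized objective evaluated at~$\blambda_\zeta^\star$, which by definition equals~$D_\zeta^\star$. That finishes the first bound.

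For the right inequality, I would use that $\blambda_\zeta^\star$ is the \emph{minimizer} of the regularized problem, so $D_\zeta^\star$ is upper bounded by the regularized objective evaluated at any test point in $\BR_+^K$. Choosing as test point a minimizer $\blambda^\star$ of the unregularized problem~\eqref{eq:MP dual D} (which exists by the remark preceding Lemma~\ref{lem:lambda}), the first term becomes exactly $D^\star$, and it remains to bound $\frac{\zeta}{2}\BE[\|\tilde{\bG}(X)^T\blambda^\star\|_2^2]$. Using the block structure $\tilde{\bG}(x)=(\bG(x),\bI_K)$, this expectation splits as $\BE[\|\bG(X)^T\blambda^\star\|_2^2]+\|\blambda^\star\|_2^2$. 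A spectral norm bound gives $\|\bG(X)^T\blambda^\star\|_2^2\le \|\bG(X)\|_2^2\,\|\blambda^\star\|_2^2$, so after taking expectations this is at most $(1+g_{\Mean})\|\blambda^\star\|_2^2$ by the definition~\eqref{eq:def gmean} of $g_{\Mean}$.

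The last step is to invoke Lemma~\ref{lem:lambda} to control $\|\blambda^\star\|_2$; since $\|\blambda^\star\|_2\le \|\blambda^\star\|_1\le \lambda_{\max}$, we get $\|\blambda^\star\|_2^2\le \lambda_{\max}^2$, which plugged into the previous display yields exactly $\theta_{\textup{reg}}\cdot \zeta=\frac{\zeta}{2}(1+g_{\Mean})\lambda_{\max}^2$. I do not foresee a genuine obstacle here; the only mildly subtle point is matching the notation $\|\bG(X)\|_2$ in the definition of $g_{\Mean}$ with the operator-norm bound used to pass from $\|\bG(X)^T\blambda^\star\|_2$ to $\|\bG(X)\|_2\|\blambda^\star\|_2$, but this is consistent with the paper's earlier usage.
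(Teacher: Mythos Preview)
Your proposal is correct and follows essentially the same argument as the paper: the left inequality is dismissed as trivial (nonnegativity of the regularizer), and the right inequality is obtained by plugging the unregularized minimizer $\blambda^\star$ into the regularized objective, then bounding the extra term via the operator-norm inequality $\|\bM\blambda\|_2\le\|\bM\|_2\|\blambda\|_2$ together with $\|\blambda^\star\|_2\le\|\blambda^\star\|_1\le\lambda_{\max}$ from Lemma~\ref{lem:lambda}. If anything, your explicit block-splitting $\|\tilde{\bG}(x)^T\blambda\|_2^2=\|\bG(x)^T\blambda\|_2^2+\|\blambda\|_2^2$ makes the appearance of the factor $(1+g_{\Mean})$ more transparent than the paper's terse justification.
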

\begin{proof}
The first bound is trivial. Using Lemma~\ref{lem:lambda}, we may fix a $\blambda^\star \in \BR_+^K$ with $\|\blambda^\star\|_1\le \lambda_{\max}$ such that $\blambda^\star$ achieves $D^\star$. By definition of $D_\zeta^\star$,
\begin{align*}
    D_\zeta^\star \le \BE\left[ D_f^\co\left(\bv(X;\blambda^\star), \bh^\base(X) \right) + \frac{\zeta}{2} \left\| \tilde{\bG}(X)^T \blambda^\star \right\|_2^2\right] \le  D^\star + \theta_{\text{reg}}\cdot \zeta,
\end{align*}
where the last inequality follows since for the $2$-matrix norm, $\|\bM \blambda\|_2 \le \|\bM\|_2 \|\blambda\|_2$ and $\|\bM^T\|_2 = \|\bM\|_2$.
\end{proof}

Next, we derive a sample-complexity bound for the finite-sample problem~\eqref{eq:MP dual D N} via generalizing the proofs of Theorem~3 in~\cite{ISIT} and Theorem~13.2 in~\cite{hajek2019statistical}.

\begin{lemma} \label{lem:sample complexity}
Suppose Assumption~\ref{assumption} holds, and let $\lambda_{\max}$ and $g_{\max}$ be as defined in Lemma~\ref{lem:lambda} and equation~\eqref{eq:def gmax}. For any $\delta \in (0,1)$, with $\blambda_{\zeta,N}^\star$ denoting the unique solution to~\eqref{eq:MP dual D N}, it holds with probability at least $1-\delta$ that
\begin{equation}
\begin{aligned}
    \BE_X\left[ D_f^\co\left(\bv(X;\blambda_{\zeta,N}^\star), \bh^\base(X) \right) \right]  \le D^\star_\zeta + \frac{2g_{\max} \cdot \left( 1 + \zeta \cdot \lambda_{\max} \right)^2}{\delta \zeta N}.
\end{aligned}
\end{equation}
\end{lemma}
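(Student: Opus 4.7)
The plan is a uniform-stability / regularized-ERM argument in the style of Theorem~13.2 of~\cite{hajek2019statistical}. Abbreviate $\hat{\blambda}\defined \blambda_{\zeta,N}^\star$, $\blambda^\star\defined \blambda_\zeta^\star$, and define the per-sample loss
\begin{equation*}
\ell(X,\blambda) \defined D_f^\co(\bv(X;\blambda),\bh^\base(X)) + \frac{\zeta}{2}\|\tilde{\bG}(X)^T\blambda\|_2^2,
\end{equation*}
so that $F_{\zeta,N}(\blambda)=\BE_{\empp}[\ell(X,\blambda)]$ and $F_\zeta(\blambda)=\BE[\ell(X,\blambda)]$. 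Since $\tilde{\bG}(X)\tilde{\bG}(X)^T \succeq \bI_K$, each $\ell(X,\cdot)$ is $\zeta$-strongly convex, and so are $F_\zeta$ and $F_{\zeta,N}$. As $A(\blambda)\le F_\zeta(\blambda)$ pointwise, it suffices to upper-bound $F_\zeta(\hat{\blambda})-D_\zeta^\star$.

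First, I would establish uniform $\ell_2$-stability of the regularized empirical minimizer: if $\hat{\blambda}'$ denotes the minimizer of $F_{\zeta,N}$ obtained by swapping one sample $X_j$ for an independent copy $X_j'$, then combining the $\zeta$-strong monotonicity of $\nabla F_{\zeta,N}$ with the KKT variational inequalities at $\hat{\blambda}$ and $\hat{\blambda}'$ yields, via a short chain,
\begin{equation*}
\zeta\|\hat{\blambda}-\hat{\blambda}'\|_2 \le N^{-1}\|\nabla_\blambda\ell(X_j,\hat{\blambda}')-\nabla_\blambda\ell(X_j',\hat{\blambda}')\|_2 \le 2\rho/N,
\end{equation*}
where $\rho$ is a uniform Lipschitz bound on $\ell(X,\cdot)$ over a ball containing both optima. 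Hence $\|\hat{\blambda}-\hat{\blambda}'\|_2\le 2\rho/(\zeta N)$.

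Next, I would invoke the standard Bousquet--Elisseeff symmetrization: by exchangeability of the samples and $\rho$-Lipschitzness of $\ell(X,\cdot)$,
\begin{equation*}
\BE\bigl[F_\zeta(\hat{\blambda})-F_{\zeta,N}(\hat{\blambda})\bigr] \le \rho\,\BE\|\hat{\blambda}-\hat{\blambda}'\|_2 \le 2\rho^2/(\zeta N),
\end{equation*}
which together with the optimality $F_{\zeta,N}(\hat{\blambda})\le F_{\zeta,N}(\blambda^\star)$ and the unbiasedness $\BE[F_{\zeta,N}(\blambda^\star)]=D_\zeta^\star$ gives $\BE[F_\zeta(\hat{\blambda})-D_\zeta^\star]\le 2\rho^2/(\zeta N)$. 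A Markov inequality applied to the nonnegative random variable $F_\zeta(\hat{\blambda})-D_\zeta^\star$, combined with $A(\hat{\blambda})\le F_\zeta(\hat{\blambda})$, then yields the claimed $(1-\delta)$-bound. To match the explicit constants, Lemma~\ref{lem:Dfconj gradient} furnishes $\nabla_\blambda\ell(X,\blambda) = -\bG(X)\bq_f^\co(\bv(X;\blambda),\bh^\base(X)) + \zeta\,\tilde{\bG}(X)\tilde{\bG}(X)^T\blambda$ with $\bq_f^\co\in\simp{C}$, so the boundedness of $\bG$ from Assumption~\ref{assumption}(ii) and the norm control $\|\blambda\|\le \lambda_{\max}$ together yield $\rho^2\le g_{\max}(1+\zeta\lambda_{\max})^2$ up to an absolute constant.

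The main obstacle is securing the a priori norm bound $\|\hat{\blambda}\|_2,\|\hat{\blambda}'\|_2\le \lambda_{\max}$ required so that the Lipschitz constant $\rho$ attains the advertised order uniformly over both optima. A crude argument via complementary slackness alone only gives $\|\hat{\blambda}\|_2=O(g_{\max}^{1/2}/\zeta)$, which is too loose; one must instead adapt the primal--dual certificate of Lemma~\ref{lem:lambda} to the regularized program, exploiting that adding a non-negative regularizer cannot enlarge the dual minimizer's norm beyond the unregularized $\lambda_{\max}$.
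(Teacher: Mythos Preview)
Your approach is essentially identical to the paper's: both define the same regularized per-sample loss $\ell(\blambda,x)=D_f^\co(\bv(x;\blambda),\bh^\base(x))+\tfrac{\zeta}{2}\|\tilde{\bG}(x)^T\blambda\|_2^2$, establish $\zeta$-strong convexity and the Lipschitz bound via the gradient formula from Lemma~\ref{lem:Dfconj gradient}, and then appeal to the regularized-ERM excess-risk bound of Chapter~13 in~\cite{hajek2019statistical} followed by Markov's inequality. The only cosmetic difference is that the paper cites Theorem~13.1 there as a black box, whereas you unpack the underlying stability/symmetrization argument that proves it.

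Regarding the ``main obstacle'' you flag: the paper handles it by declaring the domain to be $\Lambda=\{\blambda\in\BR_+^K:\|\blambda\|_1\le\lambda_{\max}\}$ at the outset and computing the Lipschitz constant over~$\Lambda$, without explicitly verifying that the empirical minimizer $\blambda_{\zeta,N}^\star$ of~\eqref{eq:MP dual D N} over all of $\BR_+^K$ actually lands in~$\Lambda$. So your concern is legitimate, and your proposed fix (adapting the primal--dual certificate of Lemma~\ref{lem:lambda} to the regularized empirical problem) is exactly what is needed to make the restriction rigorous; the paper simply does not spell this step out.
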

\begin{proof}
Let $\Lambda \defined \{ \blambda \in \BR_+^K ~ ; ~ \|\blambda\|_1 \le \lambda_{\max}\}$, and consider the function $\ell:\Lambda \times \calX \to \BR$ defined by
\begin{equation}
    \ell(\blambda,x) \defined D_f^\co\left( \bv(x;\blambda) , \bh^\base(x) \right) + \frac{\zeta}{2} \left\| \tilde{\bG}(x)^T\blambda \right\|_2^2.
\end{equation}
Note that the regularized problem~\eqref{eq:MP dual regular} can be written as
\begin{equation}
    D_\zeta^\star \defined \min_{\blambda \in \BR_+^K } ~ \BE\left[ \ell(\blambda,X) \right],
\end{equation}
and the finite-sample version of it~\eqref{eq:MP dual D N} can also be written as
\begin{equation}
    D_{\zeta,N}^\star \defined \min_{\blambda \in \BR_+^K } ~ \frac{1}{N} \sum_{i\in [N]} \ell(\blambda,X_i).
\end{equation}

We show first that, for each fixed $x\in \calX$, the function $\blambda \mapsto \ell(\blambda,x)$ is $\zeta$-strongly convex over $\Lambda$. The gradient of the regularization term is $\zeta \tilde{\bG}(x)^T\blambda$, and its Hessian is given by
\begin{equation}
    \nabla^2_{\blambda} ~ \frac{\zeta}{2} \left\| \tilde{\bG}(x)^T\blambda \right\|_2^2 = \zeta \bG(x)\bG(x)^T + \zeta \bI_K.
\end{equation}
Further, the function $\blambda \mapsto D_f^\co(\bv(x;\blambda),\bh^\base(x))$ is convex as it is a pointwise supremum of linear functions. Indeed, for any $\bp \in \simp{C}$, recalling that $\bv(x;\blambda) = - \bG(x)^T\blambda$, we have the formula
\begin{equation}
    D_f^\co(\bv(x;\blambda),\bp) =  \sup_{\bq\in \simp{C}} -\bq^T \bG(x)^T\blambda - D_f(\bq \, \| \, \bp).
\end{equation}

Next, we show Lipschitzness of $\blambda \mapsto \ell(\blambda,x)$. For any fixed $\bv \in \BR^C$ and $\bp \in \simp{C}^+$, we have the gradient (see Lemma~\ref{lem:Dfconj gradient})
\begin{equation}
    \nabla_{\bv} D_f^\co(\bv,\bp) = \bq^\co(\bv) \in \simp{C},
\end{equation}
where
\begin{equation}
    \bq^\co(\bv) \defined \Argmin{\bq\in \simp{C}} ~ D_f(\bq \, \| \, \bp) - \bv^T\bq. 
\end{equation}
Thus, we have the gradient
\begin{equation}
    \nabla_{\blambda} D_f^\co\left(\bv(x;\blambda), \bh^\base(x) \right) = -\bG(x)\bq^\co\left(\bv(x;\blambda)\right).
\end{equation}
Hence, the gradient of $\blambda \mapsto \ell(\blambda,x)$ is
\begin{equation}
    \nabla_{\blambda} \ell(\blambda,x) = -\bG(x)\bq^\co\left(\bv(x;\blambda)\right) + \zeta \tilde{\bG}(x)^T\blambda,
\end{equation}
which therefore satisfies the bound
\begin{equation}
    \left\| \nabla_{\blambda} \ell(\blambda,x) \right\|_2 \le \|\bG(x)\|_2\left( 1 + \zeta\cdot \lambda_{\max} \right).
\end{equation}
Therefore, each $\blambda \mapsto \ell(\blambda,x)$ is $A$-Lipschitz with
\begin{equation}
    A = \left( 1 + \zeta\cdot \lambda_{\max} \right) \cdot \sup_{x\in \calX} \|\bG(x)\|_2.
\end{equation}
Thus, by Theorem~13.1 in~\cite{hajek2019statistical}, with probability $1-\delta$ we have the bound
\begin{equation}
    \BE_{X}\left[ \ell\left(\blambda_{\zeta,N}^\star,X \right) \right] \le D_\zeta^\star + \frac{2A^2}{\delta \zeta N}.
\end{equation}
With probability one, we have the bound
\begin{equation}
\begin{aligned}
    \BE_X\left[ D_f^\co\left( \bv\left(X;\blambda_{\zeta,N}^\star\right),\bh^\base(X) \right) \right]  \le \BE_{X}\left[ \ell\left(\blambda_{\zeta,N}^\star,X \right) \right].
\end{aligned}
\end{equation}
This completes the proof of the lemma. 
\end{proof}

Now, we are ready to finish the proof of Theorem~\ref{thm:admm convergence kl} by specializing the above lemmas to the KL-divergence case. So, we set $f(t)=t\log t$ for the rest of the proof. By Lemmas~\ref{lem:bound zeta}--\ref{lem:sample complexity}, we have with probability $1-\delta$
\begin{equation}
    \BE_X\left[ D_f^\co\left(\bv(X;\blambda_{\zeta,N}^\star), \bh^\base(X) \right) \right]  \le D^\star + \theta_{\mathrm{reg}}\cdot \zeta + \frac{2g_{\max} \cdot \left( 1 + \zeta \cdot \lambda_{\max} \right)^2}{\delta \zeta N}.
\end{equation}
Thus, by Lipschitzness (Proposition~\ref{prop:softmax}) and~\eqref{eq:lambda exponential} 
\begin{equation} \label{eq:D bound}
    \BE_X\left[ D_f^\co\left(\bv(X;\blambda_{\zeta,N}^{(t)}), \bh^\base(X) \right) \right]  \le D^\star + \frac{1}{2} \sqrt{g_{\Mean}}\beta e^{-rt}+ \theta_{\mathrm{reg}}\cdot \zeta + \frac{2g_{\max} \cdot \left( 1 + \zeta \cdot \lambda_{\max} \right)^2}{\delta \zeta N}.
\end{equation}
Here, we are choosing the constants $\beta$ and $r$ independently of $N$, as can be guaranteed from Corollary~3.1 and Theorem~3.4 in~\cite{deng2016}. 

Choose $\zeta = \Theta(N^{-1/2})$. Collecting the constants in~\eqref{eq:D bound}, we obtain that
\begin{equation}
    \BE_X\left[ D_f^\co\left(\bv(X;\blambda_{\zeta,N}^{(t)}), \bh^\base(X) \right) \right]  \le D^\star + \frac{1}{2} \sqrt{g_{\Mean}}\beta e^{-rt}+ \frac{\ell}{\delta \sqrt{N}}
\end{equation}
for some constant $\ell$ that is completely determined by $\theta_{\mathrm{reg}}$, $g_{\max}$, and $\lambda_{\max}$. This bound can be further upper bounded by $D^\star + O(N^{-1/2})$ by choosing $t\ge \frac{1}{2r} \log N$, thereby completing the proof of the theorem.

\subsection{More details on group-fair classifiers} 
\label{appendix:fair}

We include, for completeness, how the group-fairness metrics in Table~\ref{table:Fair} linearize, i.e., written in the form:
\begin{align}
    \bmu(\bh) \defined \BE_{P_X}\left[ \bG(X)\bh(X) \right] \le \mathbf{0}.
\end{align}
We assume that we have in hand a well-calibrated classifier that approximates $P_{Y,S|X}$, i.e., that predicts both group membership $S$ and the true label $Y$ from input variables $X$. This classifier can be directly marginalized into the following models:
\begin{itemize}
    \item a label classifier $\bh^{\base}: \mathcal{X}\to \simp{C}$ that predicts true label from input variables, 
    \begin{equation}\label{Classifier_y}
         \bh^{\base}(x) \defined (P_{Y|X=x}(1),\cdots,P_{Y|X=x}(C)) ~~~ \text{for} ~~ x\in \calX,
    \end{equation}
    
    \item a group membership classifier $\bs: \mathcal{X}\times \mathcal{Y} \to \simp{A}$ that uses input and output variables to predict group membership,
    \begin{equation}\label{Classifier_s}
        \bs(x,y) \defined (P_{S|X,Y}(1\mid x,y),\cdots,P_{S|X,Y}(A\mid x,y)) ~~~ \text{for} ~~ (x,y)\in \calX\times\calY,
    \end{equation}
\end{itemize}

We let $\be_1,\cdots,\be_C$ denote the standard basis vectors of $\Reals^C$. We suppose that the support of the group attribute $S$ is $\calS \defined [A]$.

\paragraph{Statistical parity.} This fairness metric measures whether the predicted outcome $\widehat{Y}$ is independent of the group attribute $S$. 
For statistical parity, the $\bG(x)$ matrix has rows:
\begin{align*}
   \left((-1)^{\delta} \frac{\sum_{c=1}^C s_a(x,c)h^{\base}_c(x)}{P_S(a)} - \left(\alpha + (-1)^{\delta}\right)\right) \be_{c'}.
\end{align*}
There are $K=2AC$ rows since $(\delta,a, c')\in \{0,1\}\times[A]\times [C]$.

\paragraph{Equalized odds.} This fairness metric requires the predicted outcome $\widehat{Y}$ and the group attribute $S$ to be independent conditioned on the true label $Y$. When the classification task is binary, the equalized odds becomes the equality of false positive rate and false negative rate over all groups. For equalized odds, the $\bG(x)$ matrix has rows:
\begin{align*}
    \left((-1)^{\delta}\frac{s_{a'}(x,c) h_c^{\base}(x)}{P_{S|Y=c}(a')} - \left(\alpha + (-1)^{\delta}\right)h^{\base}_c(x) \right)\be_{c'}.
\end{align*}
There are $K=2AC^2$ rows.

\paragraph{Overall accuracy equality.} This fairness metric requires the accuracy of the predictive model to be the same across all group groups. The $\bG(x)$ matrix has rows:
\begin{equation*}
    (-1)^{\delta}\frac{s_a(x,\cdot) \odot \bh^{\base}(x)}{P_S(a)} - \left(\alpha + (-1)^{\delta} \right) \cdot \bh^{\base}(x),
\end{equation*}
where $\odot$ represents the element-wise product. There are $K=2A$ rows.

\section{Additional experiments and more details on the experimental setup} \label{appendix:exps}

\subsection{Numerical Benchmark Details} \label{appendix:datasets}
\subsubsection{Datasets}
The HSLS dataset is collected from 23,000+ participants across 944 high schools in the USA, and it includes thousands of features such as student demographic information, school information, and students' academic performance across several years. 
We preprocessed the dataset (e.g., dropping rows with a significant number of missing entries, performing k-NN imputation, normalization), and the number of samples reduced to 14,509. 

The ENEM dataset, collected from the 2020 Brazilian high school national exam and made available by the Brazilian Government \cite{enem2017}, is comprised of student demographic information, socio-economic questionnaire answers (e.g., parents education level, if they own a computer) and exam scores. 
We preprocess the dataset by removing missing values, repeated exam takers, and students taking the exam before graduation (``treineiros'') and obtain $\sim$1.4 million samples with 138 features.

\subsubsection{Hyperparameters}
For logistic regression and gradient boosting, we use the default parameters given by Scikit-learn. For random forest, we set the number of trees and the minimum number of samples per leaf to 10. For all classifiers, we fixed the random state to 42.
When running \methodname~(cf.~Algorithm~\ref{alg:MP}), we set the hyperparameters $\zeta = 1/\sqrt{N}$ (see Theorem~\ref{thm:admm convergence kl}) and $\rho = 2$ (see Appendix~\ref{sec:KL vupdate}), where $N$ is the number of samples.

\subsubsection{Benchmark Methods} \label{subsub:benchmark}
For binary classification, we compare with six different benchmark methods:
\begin{itemize}
    \item EqOdds~\cite{hardt2016equality}: We use AIF360 implementation of \textsf{EqOddsPostprocessing} and we use 50\% of the test set as a validation set, i.e., 70\% training set, 15\% validation set, 15\% test set.
    \item CalEqOdds~\cite{pleiss2017fairness}: We use AIF360 implementation of \textsf{CalibratedEqOddsPostprocessing} and we use 50\% of the test set as a validation set, i.e., 70\% training set, 15\% validation set, 15\% test set.
    \item Reduction~\cite{agarwal2018reductions}: We use AIF360 implementation of \textsf{ExponentiatedGradientReduction}, and we use 10 different epsilon values as follows: $[0.001, 0.005, 0.01, 0.02, 0.05, 0.1, 0.2, 0.5, 1, 2]$. We used \textsf{EqualizedOdds} constraint for MEO experiments and \textsf{DemographicParity} for statistical parity experiments.
    
    \item Rejection~\cite{kamiran2012reject}: We use AIF360 implementation of \textsf{RejectOptionClassification}. We use the default parameters except \textsf{metric\_{ub}} and \textsf{metric\_lb}, namely, \textsf{low\_class\_thresh}$\ =0.01$, \textsf{high\_class\_thresh}$\ =0.99$, \textsf{num\_class\_thresh}$\ =100$, \textsf{num\_ROC\_margin}$\ =50$. We set the values \textsf{metric\_ub}$\ =\epsilon$ and \textsf{metric\_lb}$\ =-\epsilon$ to obtain  trade-off curves. Epsilon values we used are: $[0.001, 0.005, 0.01, 0.02, 0.05, 0.1, 0.2, 0.5, 1, 2]$.
    
    \item FACT~\cite{kim2020fact}: We used the code provided on the \href{https://github.com/wnstlr/FACT}{Github repo}. We did not include the results in the main text as we found that: 
    \begin{enumerate}[label=(\roman*)]
        \item This method is not directly comparable because they find post-processing parameters on the entire test set and apply them on the test set. This is different from all other methods we are comparing including our method, which use training set or a separate validation set to fit the post-processing mechanism. For this reason, FACT often has a point that lies above all other curves on the accuracy-fairness plot. However, this is not a fair comparison. We include the results of FACT in the COMPAS plots for the sake of demonstration.
        \item We found the results produced by this method inconsistent. Partial reason is due to the problem of finding mixing rates---probability of flipping $\widehat{Y}=1$ to $0$ (i.e., $P(\widetilde{Y} = 0| \widehat{Y}=1)$) and vice-versa---which have to be between 0 and 1. But there are cases where these values lie outside $[0,1]$, which leads to erroneous and inconsistent results.
    \end{enumerate}
    For the results we present in the COMPAS plots, we used 20 epsilon values from 1 to $10^{-4}$, equidistant in log space. We used 10 different train/test splits as we do in all other experiments. If certain splits does not produce a feasible solution, we drop those results. If none of the 10 splits produce a feasible solution, we drop the epsilon value. At the end, we had 19 epsilon values.
    
    \item LevEqOpp~\cite{chzhen2019leveraging}: We used the code provided in the \href{https://github.com/lucaoneto/NIPS2019_Fairness}{Github repo}, originally programmed in R. We converted it into Python, and verified that the Python version achieved similar accuracy/fairness performance to their R version on UCI Adult dataset. We follow the same hyperparameters setup in \cite{chzhen2019leveraging}.
    
\end{itemize}

For multi-class comparison, we compare with Adversarial~\cite{zhang2018mitigating}. In theory, the adversarial debiasing method is applicable to multi-class labels and groups, but its AIF360 implementation works only for binary labels and binary groups. We adapted their implementation to work on multi-class labels by changing the last layer of the classifier model from one-neuron sigmoid activation to multi-neuron soft-max activation. We varied \textsf{adversary\_loss\_weight} to obtain a trade-off curve, values taken from $[0.001, 0.01, 0.1, 0.2, 0.35, 0.5, 0.75]$. For all other parameters, we used the default values: \textsf{num\_epochs}$\ =50$, \textsf{batch\_size}$\ =128$, \textsf{classifier\_num\_hidden\_units}$\ =200$. 

There are some methods that are relevant to our work but we could not benchmark in our experiments due to the lack of publicly available codes, including \cite{wei2021optimized}, \cite{menon2018cost}, \cite{jang2022group}.

\subsection{Additional experiments on runtime of FairProjection} \label{appendix:runtime}
We preform an ablation study on the runtime to illustrate that the parallelizability of {\methodname} can significantly reduce the runtime, especially when the dataset contains hundreds of thousands of samples. 
We report the runtime of \texttt{FairProjection-KL} on ENEM with 2 classes, 2 groups, and with different sizes. 
In Table~\ref{table:parallel-non-runtime}, we observe that when the number of samples exceeds 200k, parallelization leads to 10.1$\times$ to 15.5$\times$ speedup of the runtime.

\begin{table}[!ht]
\scriptsize
\centering
\begin{tabular}{ccccccc}
\toprule
\multirow{2}{*}{\textbf{Method}} &  \multicolumn{6}{c}{\textbf{\# of Samples (in thousands)}} \\
& 20 & 50 & 100 & 200 & 500 & $\sim$1400 \\
\toprule
Non-Parallel & 0.37$\pm$0.00 & 0.87$\pm$0.01 & 1.72$\pm$0.01 & 3.53$\pm$0.01 & 9.09$\pm$0.01 & 25.26$\pm$0.02 \\
Parallel (GPU) & 0.18$\pm$0.00 & 0.22$\pm$0.01 & 0.25$\pm$0.01 & 0.32$\pm$0.01 & 0.64$\pm$0.01 & 1.63$\pm$0.05  \\
Speedup & 2.00$\times$ & 3.92$\times$ & 7.21$\times$ & 10.97$\times$ & 14.23$\times$ & 15.46$\times$ \\
\bottomrule
\end{tabular}
\vspace{5mm}\caption{Execution time of parallel (on GPU) and non-parallel (on CPU) versions of the \texttt{FairProjection-KL} ADMM algorithm on the ENEM datasets with different sizes (time shown in minutes) with gradient boosting base classifiers.}
\label{table:parallel-non-runtime}
\end{table}

\subsection{Omitted Experimental Results on Accuracy-Fairness Trade-off} \label{appendix:exp-trade-off}

\subsubsection{Accuracy-fairness trade-off in binary classification}\label{appendix:acc-fairness-binary}
We include the results of benchmark methods and Fair Projection on 4 datasets (HSLS, ENEM-50k, Adult, and COMPAS) and 3 base classifiers (Logistic regression, Random forest, and GBM) in Figures~\ref{fig:hsls-eo-all}-\ref{fig:adult-sp-all}. For  equalized odds experiments, we have six benchmark methods (\texttt{EqOdds}, \texttt{Rejection}, \texttt{Reduction}, \texttt{CalEqOdds}, \texttt{FACT}, \texttt{LevEqOpp}). For statistical parity experiments, we have \texttt{Rejection} and \texttt{Reduction}. We plot Fair Projection with both cross entropy and KL divergence. 

When a method performs significantly worse than others, we did not plot its results. We did not include \texttt{Rejection} in the Adult plots as it did not produce consistent and reliable results on this dataset. \texttt{CalEqOdds} is included only in COMPAS as its performance was significantly worse and the point was too far away from other curves in all other datasets. \texttt{FACT} is also included only in the COMPAS plots and the reasons for this are explained in Appendix~\ref{subsub:benchmark}.

We observe that Fair Projection performs consistently well in all four datasets. \texttt{FairProjection-CE} and \texttt{FairProjection-KL} have similar performance (i.e., overlapping curves) in most cases. The performance of Fair Projection is often comparable with \texttt{Reduction}. \texttt{Rejection} has competitive performance in ENEM-50k and HSLS, but its performance falters in COMPAS and Adult. \texttt{EqOdds} produces a point with very low MEO but with a substantial loss in accuracy. \texttt{LevEqOpp} also yields a point with low MEO but with a much smaller accuracy drop. Even though \texttt{LevEqOpp}  only optimizes for FNR difference between two groups, it performs surprisingly well in terms of MEO in all four datasets. However, we note that \texttt{LevEqOpp} can only produce a point, not a curve, and it does not enjoy the generality of Fair Projection as it is specifically designed for binary-class, binary-group predictions and minimizing Equalized Opportunity difference.

\subsubsection{Accuracy-fairness trade-off in multi-class/multi-group classification}\label{appendix:acc-fairness-multi}
In the main text, we showed the performance of \texttt{FairProjection-CE} on multi-class prediction with 5 classes and 2 groups (see Figure~\ref{fig:multi}). We include results under a few different multi-class settings here. First, we show results on ENEM-50k-5-5 which has 5 classes and 5 groups in Figure~\ref{fig:enem-50k-5-5-meo} and~\ref{fig:enem-50k-5-5-sp} . We obtain 5 groups by not binarizing the race feature.  Then, we show results on binary classification with 5 groups in Figure~\ref{fig:enem-50k-2-5-meo} and~\ref{fig:enem-50k-2-5-sp}. Finally, we include the extended version of Figure~\ref{fig:multi} that include both \texttt{FairProjection-CE} and \texttt{FairProjection-KL} in Figure~\ref{fig:enem-multi-adv}.

To measure multi-class performance, we extend the definition of mean equalized odds (MEO) and statistical parity as follows: 
\begin{align}\label{eq:multi-metrics}
    \textsf{MEO} &= \max_{i \in \mathcal{Y}} \max_{s_1, s_2 \in \mathcal{S}} (| \textsf{TPR}_i(s_1) - \textsf{TPR}_i(s_2)| + | \textsf{FPR}_i(s_1) - \textsf{FPR}_i(s_2)|) /2 \\ 
    \textsf{Statistical Parity} &= \max_{i \in \mathcal{Y}} \max_{s_1, s_2 \in \mathcal{S}} | \textsf{Rate}_i(s_1) - \textsf{Rate}_i(s_2)|
\end{align}
where we denote 
$\textsf{TPR}_i(s) = P(\widehat{Y} = i \mid Y = i, S=s)$,  $\textsf{FPR}_i(s) = P(\widehat{Y} = i \mid Y \neq i, S=s)$, and $\textsf{Rate}_i(s) = P(\widehat{Y} = i \mid S=s)$. 

In all experiments, \methodname~reduces MEO and statistical parity significantly (e.g., 0.22 to 0.14) with a negligible  sacrifice in accuracy.

\clearpage

\begin{figure}[t!]
\begin{center}
\centerline{\includegraphics[width=\columnwidth]{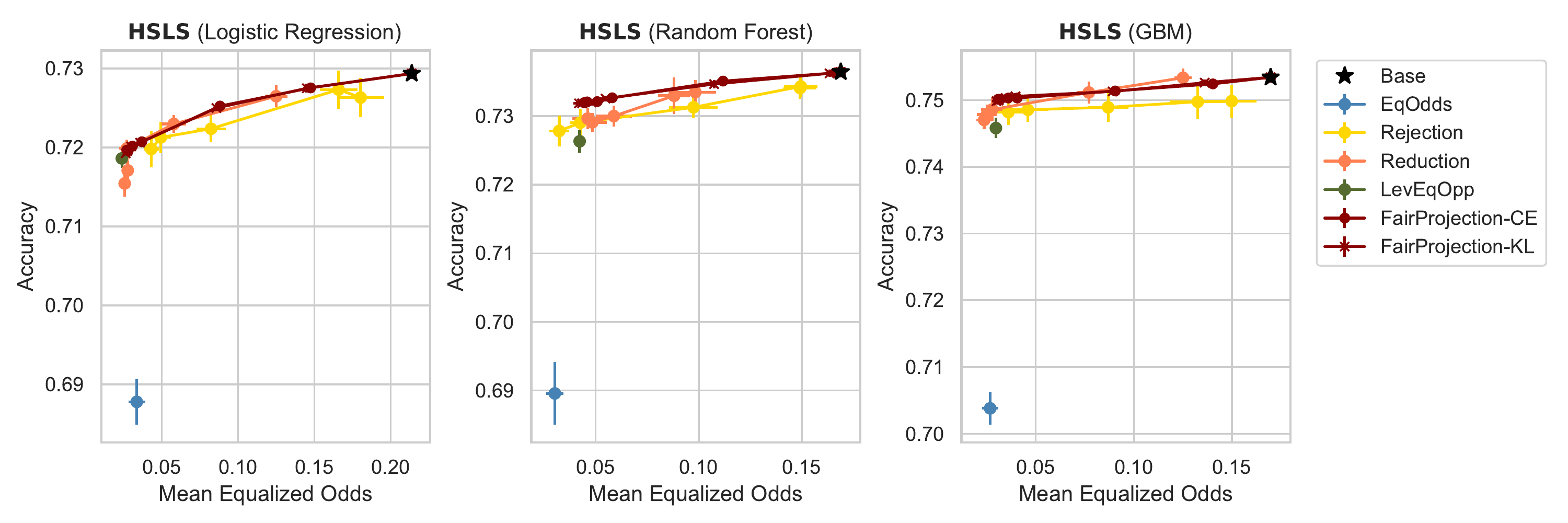}} \vspace{-.2in}
\caption{Accuracy-fairness curves of FairProjection and benchmark methods on the HSLS dataset with 3 different models (Logistic regression, Random forest, GBM). The fairness constraint is MEO.  \vspace{-.4in}}
\label{fig:hsls-eo-all}
\end{center}
\end{figure}

\begin{figure}[t!]
\begin{center}
\centerline{\includegraphics[width=\columnwidth]{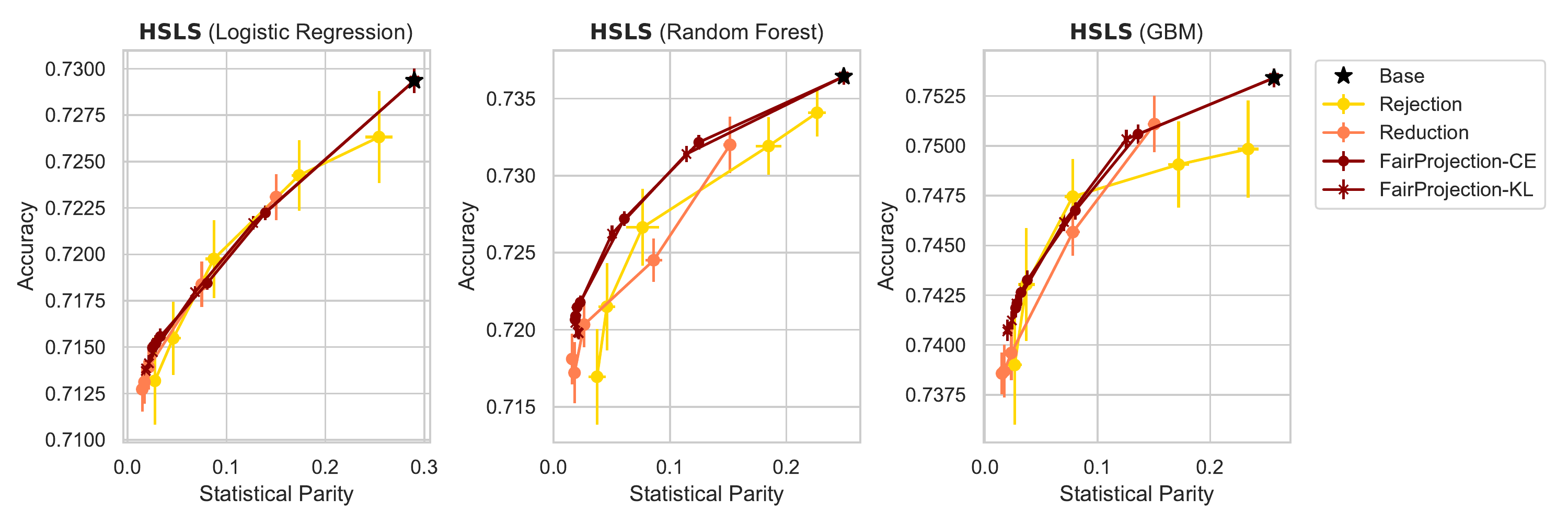}} \vspace{-.2in}
\caption{Accuracy-fairness curves of FairProjection and benchmark methods on the HSLS dataset with 3 different models (Logistic regression, Random forest, GBM). The fairness constraint is statistical parity.  \vspace{-.4in}}
\label{fig:hsls-sp-all}
\end{center}
\end{figure}

\begin{figure}[t!]
\begin{center}
\centerline{\includegraphics[width=\columnwidth]{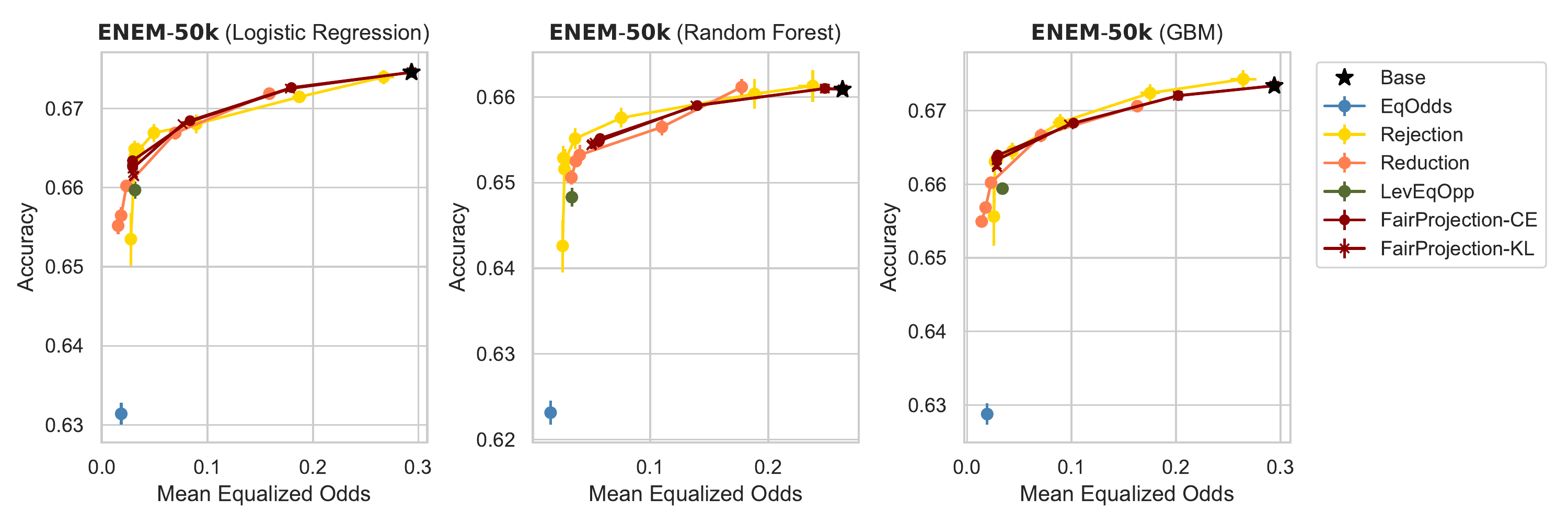}} \vspace{-.2in}
\caption{ Accuracy-fairness curves of FairProjection and benchmark methods on the ENEM-50k-2C dataset with 3 different models (Logistic regression, Random forest, GBM). The fairness constraint is MEO.}
\label{fig:enem-eo-all}
\end{center}
\end{figure}

\begin{figure}[t!]
\begin{center} 
\centerline{\includegraphics[width=\columnwidth]{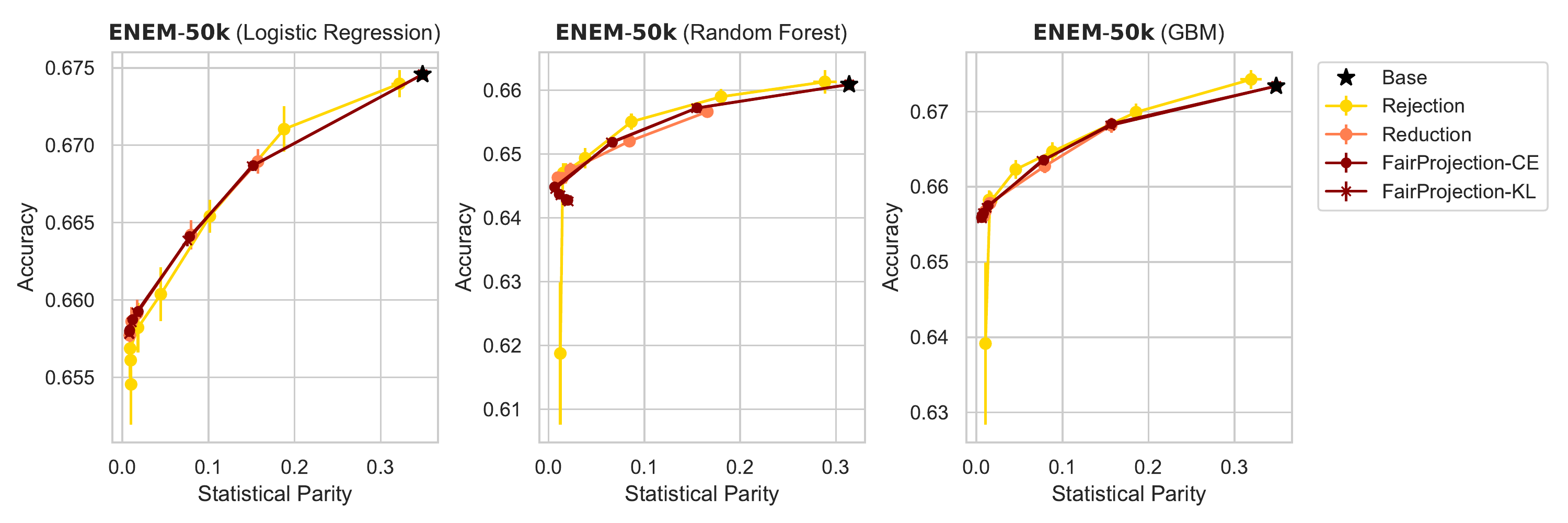}}\vspace{-.1in}
\caption{Accuracy-fairness curves of FairProjection and benchmark methods on the ENEM-50k-2C dataset with 3 different models (Logistic regression, Random forest, GBM). The fairness constraint is statistical parity. \vspace{-.4in}}
\label{fig:enem-sp-all}
\end{center}
\end{figure}

\begin{figure}[t!]
\begin{center}
\centerline{\includegraphics[width=\columnwidth]{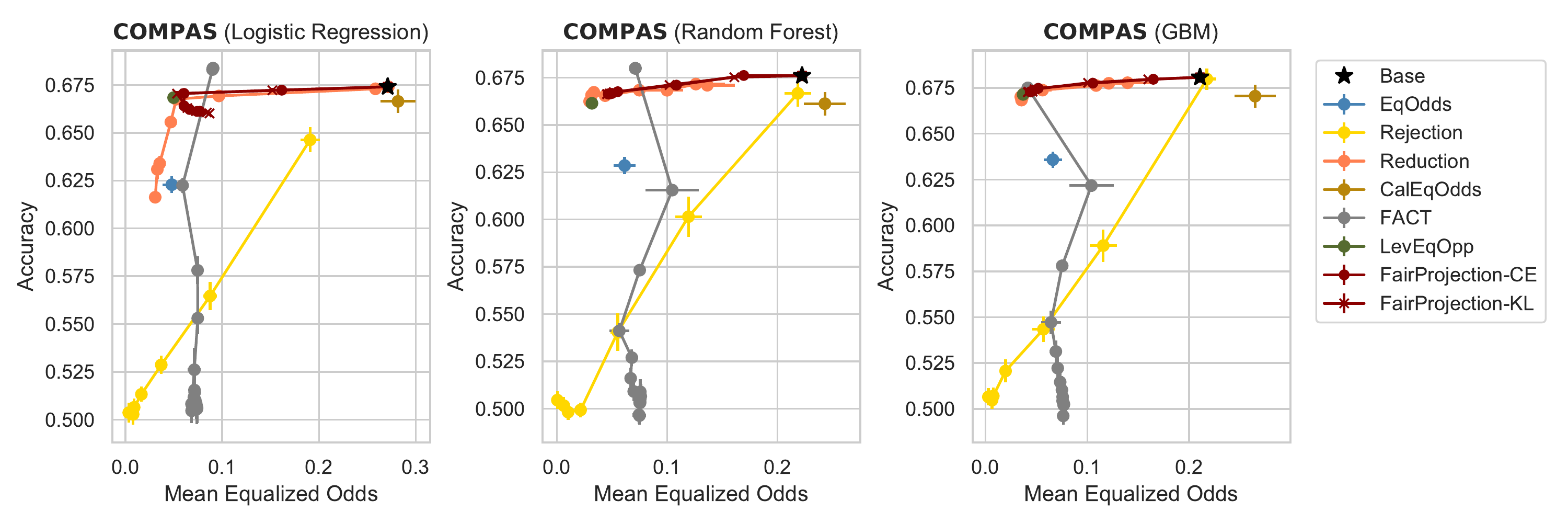}}\vspace{-.1in}
\caption{Accuracy-fairness curves of FairProjection and benchmark methods on COMPAS with 3 different models (Logistic regression, Random forest, GBM). The fairness constraint is MEO.\vspace{-.4in}}
\label{fig:compas-eo-all}
\end{center}
\end{figure}

\begin{figure}[t!]
\begin{center}
\centerline{\includegraphics[width=\columnwidth]{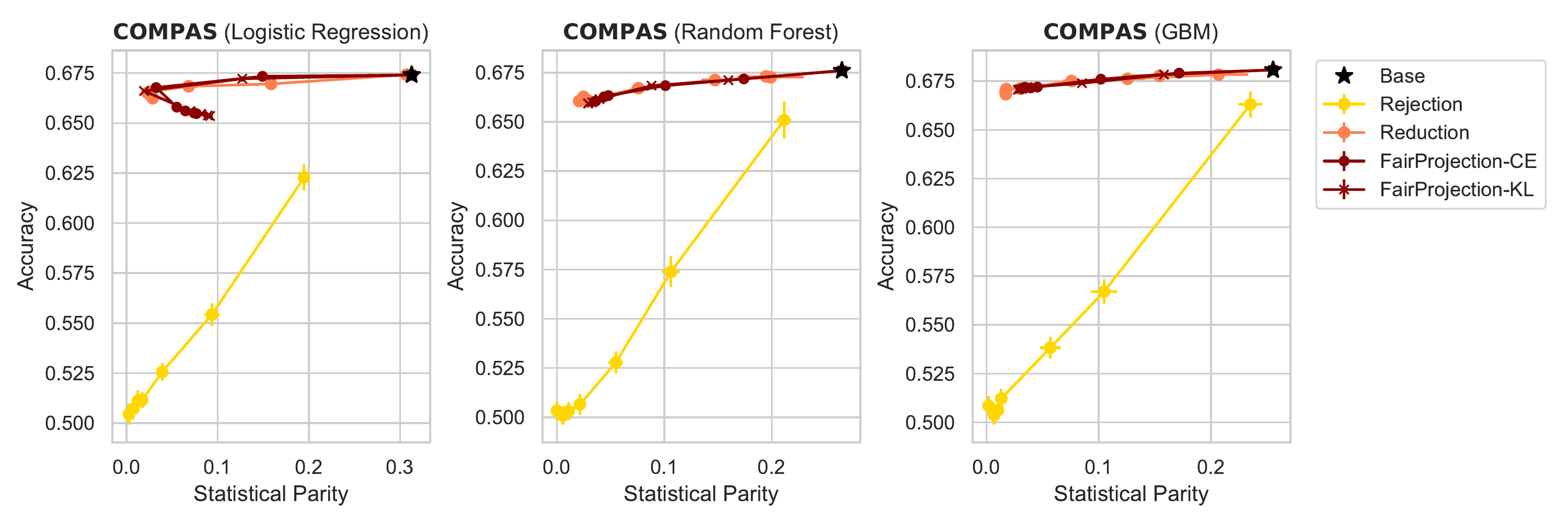}}\vspace{-.1in}
\caption{Accuracy-fairness curves of FairProjection and benchmark methods on COMPAS with 3 different models (Logistic regression, Random forest, GBM). The fairness constraint is statistical parity.}
\label{fig:compas-sp-all}
\end{center}
\end{figure}

\begin{figure}[t!]
\begin{center}
\centerline{\includegraphics[width=\columnwidth]{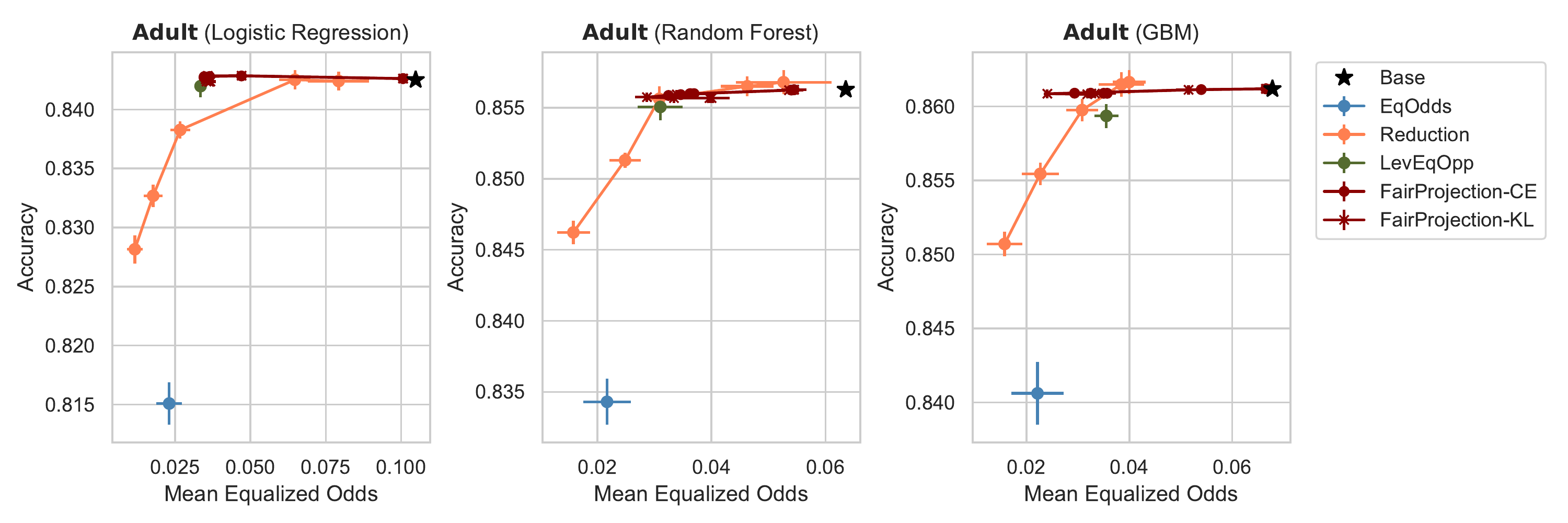}}
\caption{ Accuracy-fairness curves of FairProjection and benchmark methods on the Adult dataset with 3 different models (Logistic regression, Random forest, GBM). The fairness constraint is MEO.}
\label{fig:adult-eo-all}
\end{center}
\end{figure}

\begin{figure}[t!]
\begin{center}
\centerline{\includegraphics[width=\columnwidth]{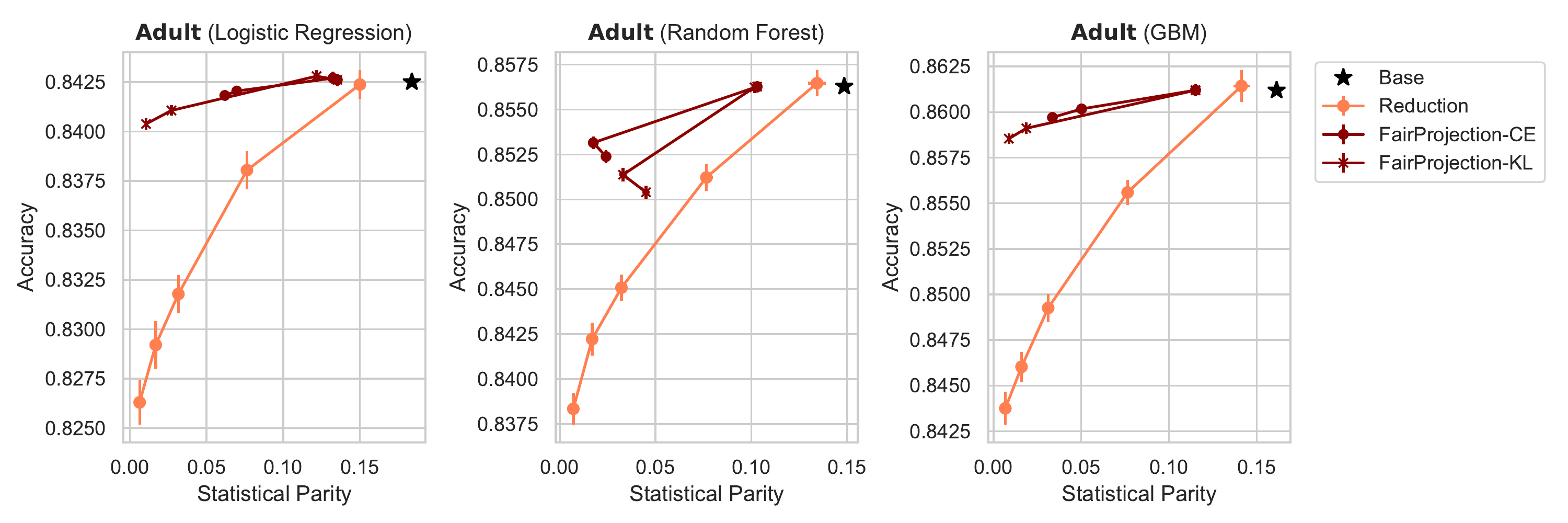}}
\caption{Accuracy-fairness curves of FairProjection and benchmark methods on the Adult dataset with 3 different models (Logistic regression, Random forest, GBM). The fairness constraint is statistical parity.}
\label{fig:adult-sp-all}
\end{center}
\end{figure}

\begin{figure}[t!]
\begin{center}
\centerline{\includegraphics[width=\columnwidth]{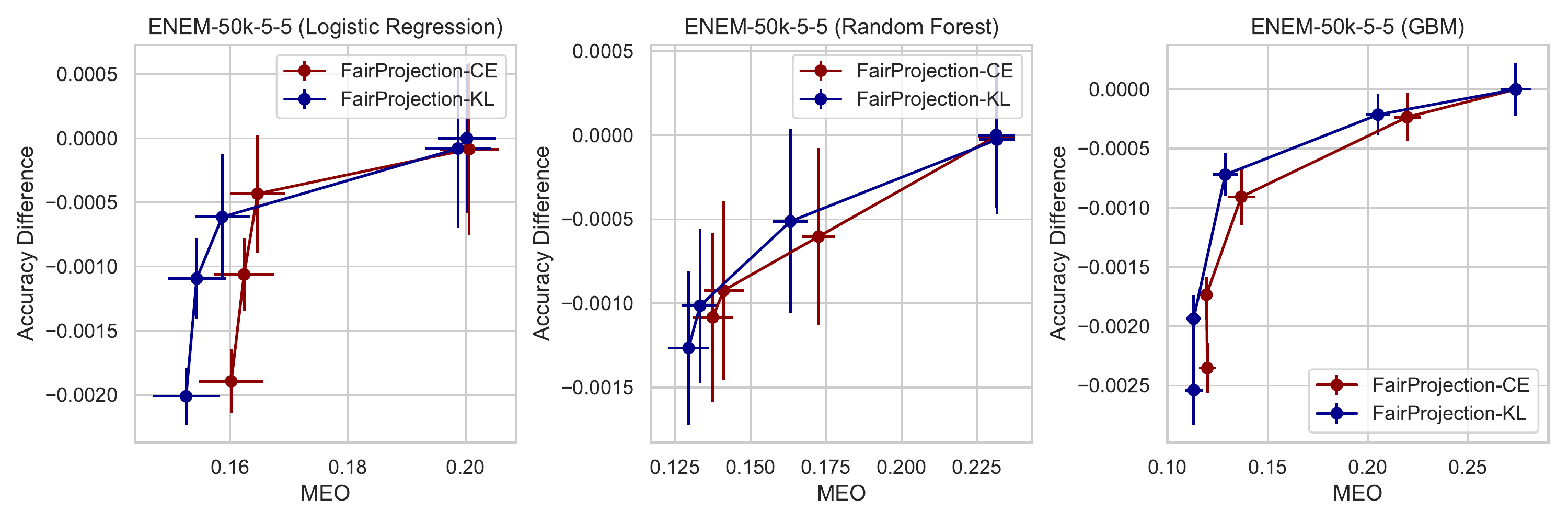}}
\caption{Accuracy-fairness curves of \texttt{FairProjection-CE} and \texttt{FairProjection-KL} on ENEM-50k with with 5 labels, 5 groups and different base classifiers base classifiers. The fairness constraint is MEO.}
\label{fig:enem-50k-5-5-meo}
\end{center}
\end{figure}

\begin{figure}[t!]
\begin{center}
\centerline{\includegraphics[width=\columnwidth]{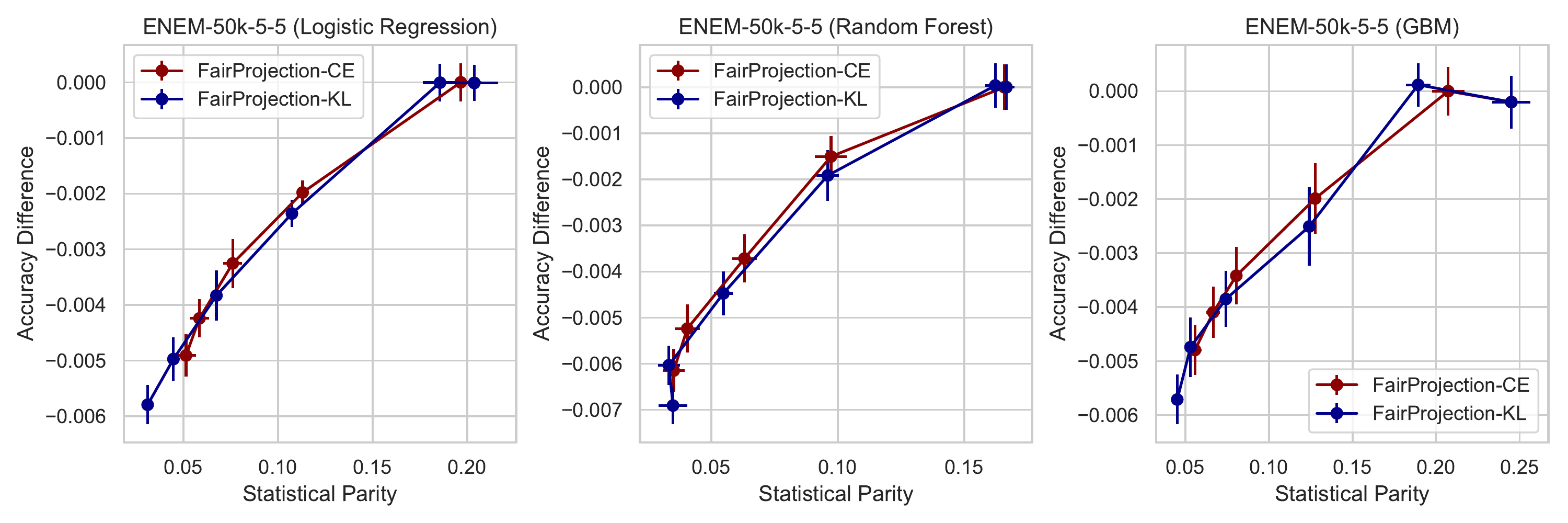}}
\caption{Accuracy-fairness curves of \texttt{FairProjection-CE} and \texttt{FairProjection-KL} on ENEM-50k with with 5 labels, 5 groups and different base classifiers base classifiers. The fairness constraint is SP.}
\label{fig:enem-50k-5-5-sp}
\end{center}
\end{figure}

\begin{figure}[t!]
\begin{center}
\centerline{\includegraphics[width=\columnwidth]{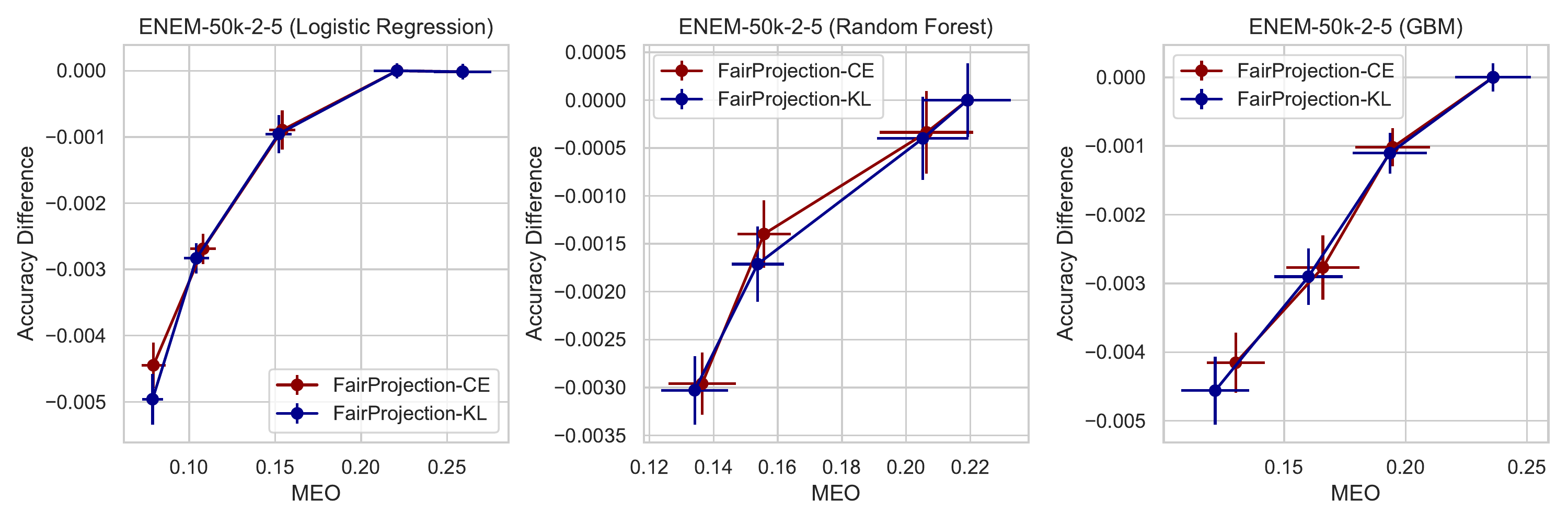}}
\caption{Accuracy-fairness curves of \texttt{FairProjection-CE} and \texttt{FairProjection-KL} on ENEM-50k with with 2 labels, 5 groups and different base classifiers base classifiers. The fairness constraint is MEO. \vspace{-.5in}}
\label{fig:enem-50k-2-5-meo}
\end{center}
\end{figure}

\begin{figure}[t!]
\begin{center}
\centerline{\includegraphics[width=\columnwidth]{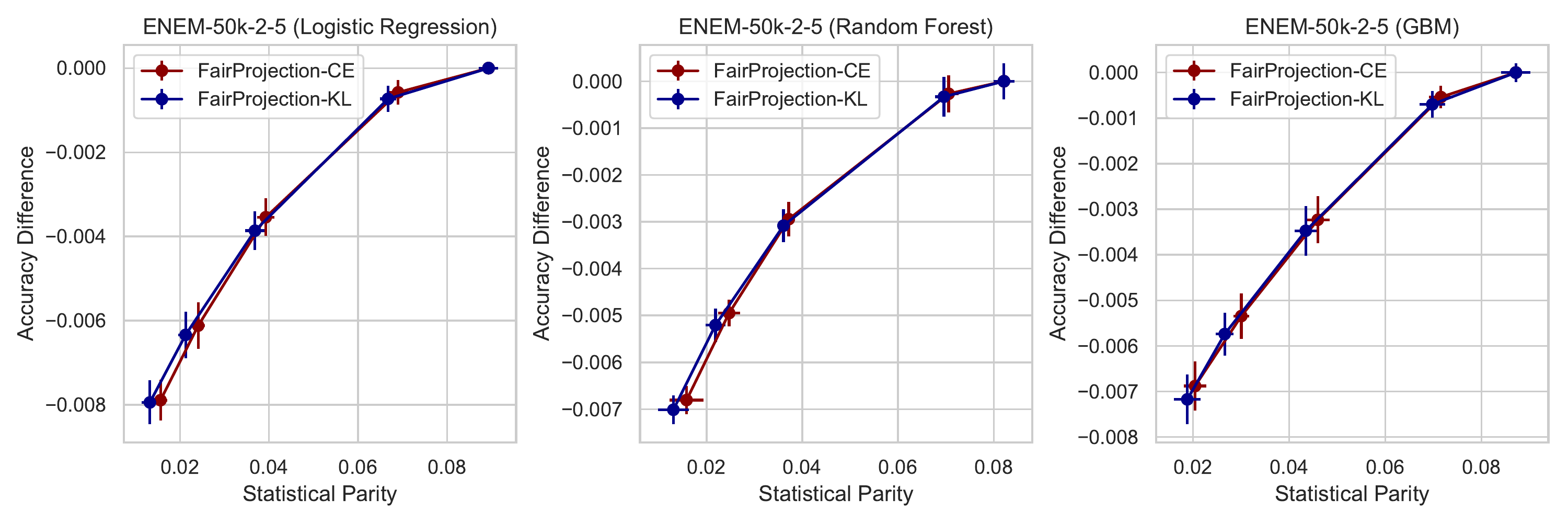}}
\caption{Accuracy-fairness curves of \texttt{FairProjection-CE} and \texttt{FairProjection-KL} on ENEM-50k with with 2 labels, 5 groups and different base classifiers base classifiers. The fairness constraint is SP.  \vspace{-.5in}}
\label{fig:enem-50k-2-5-sp}
\end{center}
\end{figure}

\begin{figure}[t!]
\begin{center}
\centerline{\includegraphics[width=0.4\columnwidth]{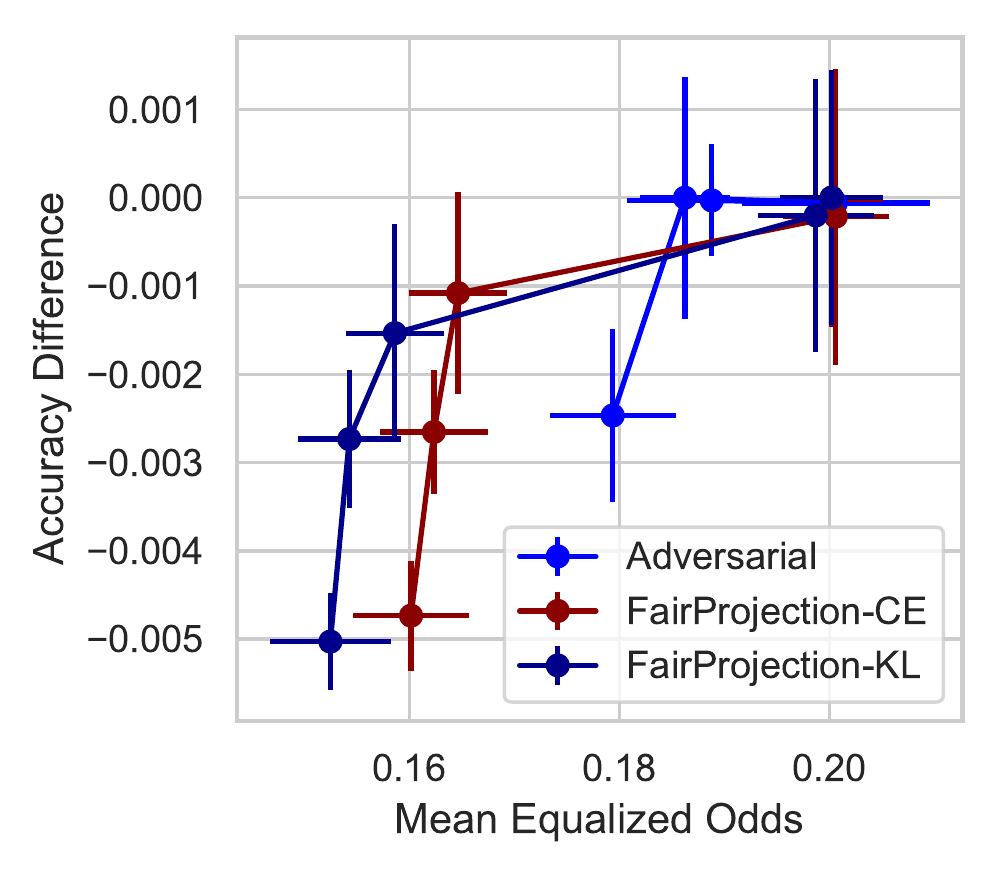}}
\caption{Comparison of \texttt{FairProjection-CE} and \texttt{FairProjection-KL} with \texttt{Adversarial} on ENEM-50k-5-2, meaning  5 labels, 2 groups. The reason for the difference comparing to Fig.~\ref{fig:multi} is that we resampled 50k data points from ENEM.}
\label{fig:enem-multi-adv}
\end{center}
\end{figure}

\clearpage 

\subsection{More on related work} \label{appendix:literature}

\begin{table*}[h]
\rowcolors{3}{MidnightBlue!10}{}
\centering
\resizebox{0.9\textwidth}{!}{\begin{tabular}{cccccccc}
\toprule
\multirow{2}{*}{\textbf{Method}} &  \multicolumn{6}{c}{\textbf{Feature}} \\
& Multiclass & Multigroup & Scores  &  Curve & Parallel  & Rate & Metric \\
\toprule
Reductions~\cite{agarwal2018reductions}        & \XSolidBold  & \CheckmarkBold  & \CheckmarkBold     & \CheckmarkBold  & \XSolidBold & \CheckmarkBold & SP, (M)EO   \\

Reject-option~\cite{kamiran2012reject}     & \XSolidBold  & \CheckmarkBold & \XSolidBold & \CheckmarkBold & \XSolidBold & \XSolidBold  & SP, (M)EO \\
EqOdds~\cite{hardt2016equality}             & \XSolidBold  & \CheckmarkBold  & \XSolidBold & \XSolidBold & \XSolidBold & \CheckmarkBold & EO  \\ 
LevEqOpp~\cite{chzhen2019leveraging}      & \XSolidBold  & \XSolidBold  & \XSolidBold    & \XSolidBold & \XSolidBold & \XSolidBold & FNR \\ 
CalEqOdds~\cite{pleiss2017fairness}      & \XSolidBold  & \XSolidBold  & \CheckmarkBold & \XSolidBold & \XSolidBold  & \CheckmarkBold   &  MEO  \\ 
FACT~\cite{kim2020fact}               &  \XSolidBold  &  \XSolidBold  & \XSolidBold   & \CheckmarkBold  &  \XSolidBold & \XSolidBold   &   SP, (M)EO \vspace{-0.5mm} \\ 
 Identifying~\cite{jiang2020identifying}       &  $\text{\CheckmarkBold}^{\text{\XSolidBold}}$ & \CheckmarkBold & \CheckmarkBold & \CheckmarkBold & \XSolidBold  & \XSolidBold & SP, (M)EO \vspace{0.8mm} \\

\FST~\cite{wei2019optimized,wei2021optimized}     & \XSolidBold   & \CheckmarkBold & \CheckmarkBold &\CheckmarkBold & \XSolidBold  &   \CheckmarkBold  & SP, (M)EO  \\ 

Overlapping~\cite{yang2020fairness}  & \CheckmarkBold  & \CheckmarkBold  & \CheckmarkBold & \CheckmarkBold &  \XSolidBold & \XSolidBold &  SP, (M)EO \\ 

Adversarial~\cite{zhang2018mitigating}  & \CheckmarkBold & \CheckmarkBold & N/A & \CheckmarkBold & \CheckmarkBold & \XSolidBold & SP, (M)EO\\ 
\midrule
\methodname~(ours)  & \CheckmarkBold  & \CheckmarkBold  & \CheckmarkBold  & \CheckmarkBold & \CheckmarkBold  & \CheckmarkBold & SP, (M)EO \\ 
\bottomrule
\end{tabular}}
\vspace{2mm}
\caption*{\textbf{Copy of Table~\ref{tab:benchmarks}.} Comparison between benchmark methods. \textbf{Multiclass/multigroup}: implementation takes datasets with multiclass/multigroup labels; \textbf{Scores}: processes raw outputs of probabilistic classifiers; \textbf{Curve}: outputs fairness-accuracy tradeoff curves (instead of a single point); \textbf{Parallel}: parallel implementation (e.g., on GPU) is available; \textbf{Rate}: convergence rate or sample complexity guarantee is proved.
 \textbf{Metric}: applicable fairness metric, with SP$\leftrightarrow$Statistical Parity, EO$\leftrightarrow$Equalized Odds, MEO$\leftrightarrow$Mean EO. 
}
\label{tab:benchmarks2}
\end{table*}

Our method is a model-agnostic post-processing method, so we focus our comparison on such post-processing fairness intervention methods. In the above table, the only exception is Adversarial~\cite{zhang2018mitigating}, which we use to benchmark multi-class prediction. Adversarial~\cite{zhang2018mitigating} is an in-processing method based on generative-adversarial network (GAN) where the adversary tries to guess the sensitive group attribute $S$ from $Y$ and $\widehat{Y}$. Even though this GAN-based approach is applicable to multi-class, multi-group prediction, it cannot be universally applied to any pre-trained classifier like our method.

EqOdds~\cite{hardt2016equality}, CalEqOdds~\cite{pleiss2017fairness} and LevEqOpp~\cite{chzhen2019leveraging} are post-processing methods designed for binary prediction with binary groups. 
They find different decision thresholds for each group that equalize FNR and FPR of two groups. CalEqOdds~\cite{pleiss2017fairness} has an additional constraint that the post-processed classifier must be well-calibrated, and we observe in our experiments that this stringent constraint leads to a low-accuracy classifier especially when there is a big gap in the base rate between the two groups. FACT~\cite{kim2020fact} follows a similar approach but generalizes this to an optimization framework that can have both equalized odds and statistical parity constraints and flexible accuracy-fairness trade-off. The optimization formulation finds a desired confusion matrix, and their proposed post-processing method flips the predictions to match the desired confusion matrix. Reject-option~\cite{kamiran2012reject} is similar in that it flips predictions near the decision threshold. In~\cite{kamiran2012reject}, instead of finding the optimal confusion matrix, it performs grid search to find the optimal margin around the decision threshold that  can minimize either equalized odds or statistical parity. For these methods that center around modifying decision thresholds, it is not straightforward to extend to multi-class and multi-group as one will have to consider
$\binom{|\calY|}{2}\cdot \binom{|\calS|}{2}$  boundaries.

FST~\cite{wei2019optimized,wei2021optimized} tackles fairness intervention via minimizing cross-entropy for binary classes. Their method is inherently tailored to binary classification \emph{and} only a cross-entropy objective function, and our \methodname-CE reduces to FST for the case of CE and binary classification tasks. Identifying~\cite{jiang2020identifying} is a method for minimizing KL-divergence for group-fairness intervention, which changes the label weights (via a convex combination) between unweighted and weighted samples, but it is not clear that this would navigate a good fairness-accuracy trade-off curve. Their method can be extended to non-binary prediction with non-binary groups by an appropriate choice of base classifier and fairness constraints, which is a non-trivial extension of the accompanying code, and we chose not to pursue this.
Note that \cite{jiang2020identifying} and \methodname~solve the KL-divergence minimization in very different ways. In particular, the runtime of \cite{wei2019optimized,wei2021optimized} on ~350k train is  longer than 30 minutes using logistic regression as a base classifier (in comparison, the runtime of \methodname~for a 500k dataset is less than 1 minute). This is because they require reweighing the data and retraining a large number of times. 
Hence, it is inherently non-parallelizable.

\subsubsection{Fairness in Multi-Class Prediction} 

Methods that are based on optimization with a fairness regularizer often can be easily extended to multi-class prediction as it only requires a small change in the regularizer. For example, instead of using $|\text{FNR}_0(x) - \text{FNR}_1(x)|$, one can replace this with 
\begin{equation} \label{eq:multi_loss}
\sum_{i \in \mathcal{Y}} \sum_{j \neq i \in \mathcal{Y}}|P(\widehat{Y} = j \mid Y =i, S =0) - P(\widehat{Y} = j \mid Y =i, S =1)|.
\end{equation}
FERM~\cite{donini2018empirical} mentions how their method can be extended to multi-class sensitive attribute. Similarly, we believe that their method can be used for multi-class labels as well. The reductions approach~\cite{agarwal2018reductions} assumes binary labels but is has natural extension to multi-class, which is explored in \cite{yang2020fairness}. In-processing methods proposed in \cite{cho2020fair} and \cite{zhang2018mitigating}  allow for both multi-class labels and multi-class group attributes. \cite{zhang2018mitigating} aims to achieve the independence between the sensitive attribute $S$ and $\widehat{Y}$ or  $\widehat{Y}$ given $Y$ by training an adversary who tries to figure out $\widehat{S}$. \cite{cho2020fair} directly estimates the fairness loss (e.g., \ref{eq:multi_loss}) using kernel density estimation. They also demonstrate the empirical performance in a three-class classification using synthetic data. Another in-processing method is \cite{aghaei2019learning} where the authors propose a way to incorporate multi-class fairness constraints into decision tree training. 
The preprocessing method suggested in \cite{celis2020data} is conceptually similar to our methods in that it aims to minimize the KL-divergence between the original distribution and preprcoessed distribution while satisfying fairness constraints. Their method, however, requires all feature vectors to be binary, and applies only to demographic parity or representation rate. 
There exist other notions of fairness, which is different from commonly-used group fairness metrics such as envy-freeness~\cite{balcan2019envy} or best-effort~\cite{krishnaswamy2021fair}, which can be applied to multi-class prediction tasks.

Finally, there are unpublished works~\cite{denis2021fairness,ye2020unbiased} that could handle multi-class classification. Specifically, \cite{denis2021fairness} presents a post-processing method that selects different thresholds for each group to achieve  demographic parity. \cite{ye2020unbiased} formulates SVM training as a mixed-integer program and integrates fairness regularizer in the objective, which can also deal with multi-class.

\section{Datasheet for ENEM 2020 dataset} \label{appendix:datasheet}

\subsubsection*{Questions}

The questions below are derived from \citep{gebru2021datasheets} and aim to provide context about the ENEM-2020 dataset. We highlight that we did not create the dataset nor collect the data included in it. Instead, we simply provide a link to the ENEM-2020 data at \citep{enem2017}. At the time of writing, the ENEM-2020 dataset is open and  made freely available by the Brazilian Government  at \citep{enem2017} under a Creative Commons Attribution-NoDerivs 3.0 Unported License \citep{CC-License}. We provide the datasheet below to clarify certain aspects of the dataset (e.g., motivation, composition, etc.) since the original information is available in Portuguese at \citep{enem2017}, thus limiting its  access to a broader audience. The website \citep{enem2017} contains a link to download a \texttt{.zip} file which contains the ENEM-2020 data in \texttt{.csv} format and extensive accompanying documentation. 

The datasheet below is \textbf{not} a substitute for the explanatory files that are downloaded together with the dataset at \citep{enem2017}, and we emphatically recommend the user to familiarize themselves with associated documentation prior to usage. We also strongly recommend the user to carefully read the ``Leia-Me'' (readme) file \texttt{Leia\_Me\_Enem\_2020.pdf} available in the same \texttt{.zip} folder that contains the dataset. The answers in the datasheet below are based on an English translation of information available at \citep{enem2017} and may be incomplete or inaccurate. The datasheet below is based on our own independent analysis and in no way represents or attempts to represent the opinion or official position of the Brazilian Government and its agencies. 

We also note that we do not distribute the ENEM-2020 dataset directly nor host the dataset ourselves. Instead, we provide a link to download the data from a public website hosted by the Brazilian Government. The dataset may become unavailable in case the link in \citep{enem2017} becomes inaccessible. 

\subsubsection*{Motivation}

\begin{itemize}
    \item \textbf{For what purpose was the dataset created?} According to the ``Leia-me'' (Read Me) file that accompanies the data, the dataset was made available to fufill the mission of the Instituto Nacional de Estudos e Pesquisas Educacionais An\'isio Teixeira (INEP) of developing and disseminating data about exams and evaluations of basic education in Brazil. 
    \item \textbf{Who created the dataset (e.g., which team, research group) and on behalf of which entity (e.g., company, institution, organization)?} The dataset was developed by INEP, which is a government agency connected to the Brazilian Ministry of Education.
    \item \textbf{Who funded the creation of the dataset?} The data is made freely available by the Brazilian Government.
\end{itemize}

\subsubsection*{Composition}
\begin{itemize}
    \item \textbf{What do the instances that comprise the dataset represent (e.g., documents, photos, people, countries)?} The instances of the dataset are information about individual students who took the Exame Nacional do Ensino M\'edio (ENEM). The ENEM is the capstone exam for Brazilian students who are graduating or have graduated high school. 
    \item \textbf{How many instances are there in total (of each type, if appropriate)?} The raw data provided in at \citep{enem2017} has approximately 5.78 million entries. The processed version we use in our experiments has approximately 1.4 million entries.
    
    \item \textbf{Does the dataset contain all possible instances or is it a sample (not necessarily random) of instances from a larger set?} The data provided is the lowest level of aggregation of data collected  from ENEM exam-takers made available by INEP.
    
    \item \textbf{What data does each instance consist of?} We provide a brief description of the features available in the raw public data provided at \citep{enem2017}. Upon downloading the data, a detailed description of features and their values are available (in Portuguese) in the file titled  \texttt{Dicion\'ario\_Mircrodados\_ENEM\_2020.xsls}. The features include:
    \begin{itemize}
        \item \textbf{Information about exam taker:} exam registration number (masked), year the exam was taken (2020), age range, sex, marriage status, race, nationality, status of high school graduation, year of high school graduation, type of high school (public, private, n/a), if they are a ``treineiro'' (i.e., taking the exam as practice).
        \item \textbf{School data}: city and state of participant's school, school administration type (private, city, state, or federal), location (urban or rural), and school operation status.
         \item \textbf{Location where exam was taken}: city and state.
         \item \textbf{Data on multiple-choice questions}: The exam is divided in 4 parts (translated from Portuguese): natural sciences, human sciences, languages and codes, and mathematics. For each part there is data if the participant attended the corresponding portion of the exam, the type of exam book they received, their overall grade, answers to exam questions, and the answer sheet for the exam. 
         \item \textbf{Data on essay question}: if participant took the exam, grade on different evaluation criteria, and overall grade.
         \item \textbf{Data on socio-economic questionnaire answers:} the data include answers to 25 socio-economic questions (e.g., number of people who live in your house, family average income, if the your house has a bathroom, etc.).
         
    \end{itemize}
    
    \item \textbf{Is there a label or target associated with each instance?} No, there is no explicit label. In our fairness benchmarks, we use grades in various components of the exam as a predicted label.
    
    \item \textbf{Is any information missing from individual instances?} Yes, certain instances have missing values.
    
    \item \textbf{Are relationships between individual instances made explicit (e.g., users' movie ratings, social network links)?} No explicit relationships identified.
    
    \item \textbf{Are there recommended data splits (e.g., training, development/validation, testing)?} No.
    
    \item \textbf{Are there any errors, sources of noise, or redundancies in the dataset?} The data contains missing values and, according to INEP, was collected from individual exam takers. The information is self-reported and collected at the time of the exam.
    
    \item \textbf{Is the dataset self-contained, or does it link to or otherwise rely on external resources (e.g., websites, tweets, other datasets)?} Self-contained.
    
    \item \textbf{Does the dataset contain data that might be considered confidential (e.g., data that is protected by legal privilege or by doctor–patient confidentiality, data that includes the content of individuals' non-public communications)?} According to the \emph{Leia-me} (readme) file (in Portuguese) that accompanies the dataset and our own inspection, the dataset does not contain any feature that allows direct identification of exam takers such as name, email, ID number, birth date, address, etc. The exam registration number has been substituted by a sequentially generated mask. INEP states that the released data is aligned with the Brazilian \emph{Lei Geral de Proteç\~ao dos Dados} (LGPD, General Law for Data Protection). We emphatically recommend the user to view the Readme file prior to usage.
    
    \item \textbf{Does the dataset contain data that, if viewed directly, might be offensive, insulting, threatening, or might otherwise cause anxiety?}  The official terminology used by the Brazilian Government to denote race can be viewed as offensive. Specifically, the term used to describe the race of exam takers of Asian heritage is ``Amarela,'' which is the Portuguese word for the color yellow. Moreover, the term ``Pardo,'' which roughly translates to brown, is used to denote individuals of multiple or mixed ethnicity. This outdated and inappropriate terminology is still in official use by the Brazilian Government, including in its population census. The dataset itself includes integers to denote race, which are mapped to specific categories through the variable dictionary. 
    
    \item \textbf{Does the dataset relate to people?} Yes.
    
    \item \textbf{Does the dataset identify any subpopulations (e.g., by age, gender)?} Yes. Information about age, sex, and race are included in the dataset.
    
    \item \textbf{Is it possible to identify individuals (i.e., one or more natural persons), either directly or indirectly (i.e., in combination with other data) from the dataset?} The \emph{Leia-me} (readme) file notes that the individual exam-takers cannot be directly identified from the data. However, in the same file, INEP recognizes that the Brazilian data protection law (LGPD) does not clearly define what constitutes a reasonable effort of de-identification. Thus, INEP adopted a cautious approach: this  dataset is a simplified/abbreviated version of the ENEM micro-data compared to prior releases and aims to remove any features that may allow identification of the exam-taker. 
    
    \item \textbf{Does the dataset contain data that might be considered sensitive in any way (e.g., data that reveals racial or ethnic origins, sexual orientations, religious beliefs, political opinions or union memberships, or locations; financial or health data; biometric or genetic data; forms of government identification, such as social security numbers; criminal history)?} The data includes race information and socio-economic questionnaire answers.
    
\end{itemize}

\subsubsection*{Collection Process}

Since we did not produce the data, we cannot speak directly about the collection process. Our understanding is that the data contains self-reported answers from exam-takers of the ENEM collected at the time of the exam. The exam was applied on 17 and 24 of January 2021 (delayed due to COVID). The data was aggregated and made publicly available by INEP at \citep{enem2017}. After consulting the IRB office at our institution, no specific IRB was required to use this data since it is anonymized and publicly available.

\subsubsection*{Preprocessing/cleaning/labeling}
\begin{itemize}
    \item \textbf{Was any preprocessing/cleaning/labeling of the data done (e.g., discretization or bucketing, tokenization, part-of-speech tagging, SIFT feature extraction, removal of instances, processing of missing values)?} Some mild pre-processing was done on the data to ensure anonymity, as indicated in the ``Leia-me'' file. This includes aggregating participant ages, masking exam registration numbers, and removing additional information that could allow de-anonymization. 
    
    \item \textbf{Was the ``raw'' data saved in addition to the preprocessed/cleaned/labeled data (e.g., to support unanticipated future uses)?} The raw data is not publicly available.
    
\end{itemize}

\subsection*{Uses}
\begin{itemize}
    \item \textbf{Has the dataset been used for any tasks already?}  We have used this dataset to benchmark fairness interventions in ML in the present paper. ENEM microdata has also been widely used in studies ranging from public policy in Brazil to item response theory in high school exams.
    
    \item \textbf{Are there tasks for which the dataset should not be used?} INEP does not clearly define tasks that should not be used on this dataset. However, no attempt should be made to de-anonymize the data.
    
\end{itemize}

\subsubsection*{Distribution and Maintenance}
The ENEM-2020 dataset is open and  made freely available by the Brazilian Government  at \citep{enem2017} under a Creative Commons Attribution-NoDerivs 3.0 Unported License \citep{CC-License} at the time of writing. The dataset may become unavailable in case the link in \citep{enem2017} becomes inaccessible.

\end{document}